\newtheoremstyle{break}
  {\topsep}{\topsep}%
  {\itshape}{}%
  {\bfseries}{}%
  {\newline}{}%
\newtheorem{theorem}{Theorem}[section]
\newtheorem*{theorem*}{Theorem}
\newtheorem{definition}{Definition}[section]
\newtheorem{lemma}{Lemma}[section]
\newtheorem*{lemma*}{Lemma}
\newtheorem{corollary}{Corollary}[section]
\theoremstyle{remark}
\DeclarePairedDelimiterX{\infdivx}[2]{(}{)}{%
  #1\;\delimsize\|\;#2%
}
\newcommand{\infdiv}{\mathrm{KL}\infdivx}
\newcommand\given[1][]{\:#1\vert\:}
\newcommand{\looCMI}{$\mathrm{loo\text{-}CMI}$\xspace}
\newcommand{\fooCMI}{$\mathrm{floo\text{-}CMI}$\xspace}
\newcommand{\loocv}{$\mathrm{loo\text{-}cv}$\xspace}
\title{On Leave-One-Out Conditional Mutual Information For Generalization}
\author{
  Mohamad Rida Rammal\\
  \texttt{ridarammal@g.ucla.edu}
  \and
  Alessandro Achille\\
  \texttt{aachille@amazon.com}
  \and
  Aditya Golatkar \\
  \texttt{aditya29@cs.ucla.edu}
  \and 
  Suhas Diggavi \\
  \texttt{suhas@ee.ucla.edu}
  \and 
  Stefano Soatto  \\
  \texttt{soatto@ucla.edu}
}
\date{}
\begin{document}
\maketitle
\begin{abstract}
We derive  information theoretic generalization bounds for supervised learning algorithms based on a new measure of leave-one-out conditional mutual information (loo-CMI). Contrary to other CMI bounds, which are black-box bounds that do not exploit the structure of the problem and may be hard to evaluate in practice, our loo-CMI bounds can be computed easily and can be interpreted in connection to other notions such as classical leave-one-out cross-validation, stability of the optimization algorithm, and the geometry of the loss-landscape. It applies both to the output of training algorithms as well as their predictions.  We empirically validate the quality of the bound by evaluating its predicted generalization gap in scenarios for deep learning. In particular, our bounds are non-vacuous on large-scale image-classification tasks.
\end{abstract}

\section{Introduction}

Generalization in classical machine learning models is often understood in terms of the bias-variance trade-off: over-parameterized models fit the data better but are more likely to overfit. However, the push for ever larger deep network models, driven by their remarkable empirical generalization, has spurred a race for the development of new theories of generalization with two main objectives: (1) guiding the practitioners in designing over-parameterized architectures and training schemes that would generalize better and (2) providing provable bounds on the performance of the model after deployment.

In this context, information theoretic bounds on generalization \cite{russo16, xu2017, bu2019, negrea2019, steinke2020, haghifam2020, harutyunyan21} are particularly appealing. They capture the intuitive idea that models that memorize less about the training set will generalize better. In particular, since they deal with the amount of \textit{information} stored in the weights rather than the \textit{number} of weights, they are especially adaptable to over-parameterized models \cite{hinton93keeping,achille2018emergence,achille2019information}.  However, the bounds are often vacuous in realistic settings, require modified training algorithms, or are black box bounds that are difficult to compute and/or to relate to relevant properties of the network/training algorithm.  
% \textcolor{red}{Moreover, several of these algorithms are in expectations over the sampling of the dataset. These bound are impossible to estimate on most real problems for which we only have \textit{one} dataset.}
To remedy this problem, we ask whether it is possible to develop a bound that is: (a) realistically tight for standard training algorithms used in deep learning; (b) interpretable, meaning that they can be rewritten in terms of quantities the practitioners can control during the training process and that may guide toward the design of better training algorithms, (c) that applies out-of-the-box to standard training algorithms.
% , \textcolor{red}{(d) that is not on expectation over the sampling of the dataset.}

\paragraph{Contributions.} As a step in this direction, we introduce a new information theoretic bound based on \textit{leave-one-out conditional mutual information} (loo-CMI). The main intuition behind the bound is to ask the counterfactual question ``If we were to remove one sample at random from the training set and retrain, how well would we be able to infer which sample was removed?’’. As we shall see, bounding this quantity is enough to bound sharply the amount of memorization in a deep network, and hence its generalization (\Cref{thm: loocmi_bound,thm: foocmi_bound}). At the same time, the bound is easy to interpret and connects to several empirical quantities and other generalization theories, such as: (1) stability of the optimization algorithm (\Cref{sec:sgd}), (2) flatness of the loss landscape at convergence (\Cref{sec:local}).  
% \note{(3) Information stability \cite{}, (4) Leave-one-out cross-validation \cite{}, (5) PAC-Bayes bounds \cite{} [either add these comparisons or remove these points]}.
Moreover, we show that our bound can also be interpreted in the function space (that is, looking at the activations rather than the weights, (Theorem \ref{thm: foocmi_bound}) leading to tighter bounds for over-parameterized models through the data processing inequality.

Our loo-CMI bound falls in the general class of conditional mutual information bounds, introduced by \cite{steinke2020}. However, while standard CMI bounds iterate over all the subset of samples of size $N$ ($2^N$ for a dataset of $2N$ samples) in order to compute the bound, our loo-CMI only need to iterate over $N$ possible samples to remove, leading to an exponential reduction in the cost to estimate the bound. Moreover, our strategy of removing a single sample is easy to relate to the stability of the training path to slight perturbation of the training set, thus giving a clear connection between generalization and the geometry of the loss landscape (\Cref{fig:training_path}). Empirically, we show that our bound can be computed and is non-vacuous for state-of-the-art deep networks fine-tuned on standard image classification tasks (\Cref{tab:results}). We also study the dependency of the bound on the size of the dataset (Figure \ref{fig:dataset-size}), and the hyper-parameters (\Cref{fig:sigma}).

\section{Preliminaries}
% \subsection{Notation}
We denote with $[n]$ the set $\{1,\ldots,n\}$ and with $\mathbb{R}^d$ the set of real-valued and complex-valued $d-$dimensional vectors. $A^T$ and $A^{-1}$ denote the transpose and inverse of a square matrix $A$. If $x$ is a vector with $n$ elements, then $\textrm{diag}(x)$ denotes the $n\times n$ square matrix with the elements of $x$ on its main diagonal;   $\left\lVert \cdot \right\rVert$ is the standard Euclidean norm on vectors. We use uppercase letters to denote random variables, and lowercase letters to denote a specific value the random variable can take. If $X$ is a random variable, then $p_X$ is the probability distribution of $X$, and $\mathbb{E}_{x\sim p_X}[x]$ denotes the expected value of $X$. For a pair of random variables $X$ and $Y$, $p_{(X,Y)}$ and $p_Xp_Y$ are the joint and product distributions, respectively. If $p$ and $q$ are two probability densities over $\mathbb{R}^d$ such that $p$ is absolutely continuous with respect to $q$ ($p\ll q$), the Kullback-Leibler Divergence from $p$ to $q$ is defined as: 
\begin{equation}
    \infdiv{p}{q} = \int_{\mathbb{R}^d} p(x) \log\left(\frac{p(x)}{q(x)}\right). 
\end{equation}
The Mutual Information between two random variables $X$ and $Y$, and the conditional mutual information of $X$ and $Y$ given a third random variable $Z$ are respectively given by:
\begin{align}
    &I(X;Y) = \infdiv{p_{(X,Y)}}{p_X p_Y}. \\
    &I(X;Y\given Z) = \mathbb{E}_{z\sim P_Z} [\underbrace{I(X;Y\given z)}_{I(X;Y|Z=z)}]. 
\end{align}
\subsection{Problem Formulation}
Let $Z^{n} = \{Z_1,\ldots, Z_n \}\in\mathcal{Z}^n$ be a collection of $n$ independent and identically distributed samples drawn from some unknown distribution $\mathcal{P}$. We consider the setting of supervised learning where $\mathcal{Z} = \mathcal{X}\times\mathcal{Y}$; i.e., each sample is composed of a feature vector and a label. Let $\mathcal{A}: \mathcal{Z}^n \to \mathcal{W}\subseteq{\mathrm{R}}^K$ be a possibly stochastic training algorithm, where $\mathcal{W}$ is a set of weights (e.g., weights in a neural network). Given a loss function $\ell: \mathcal{W}\times \mathcal{Z}\to \mathbb{R}_{+}$, the empirical risk for weights $w$ on $Z^n$ is given by $\widehat{\mathcal{L}}(w, Z^n) = 1/n \sum_{i=1}^{n}\ell(w,Z_i)$, and the "true" loss is given by $\mathcal{L}(w) = \mathbb{E}_{Z'\sim \mathcal{P}} \left[\ell(w,Z')\right]$; i.e., the average loss $w$ incurs on random samples drawn from $\mathcal{P}$.

Our goal is to find a $w^* \in\mathcal{W}$ such that $w^* = \arg\min_{w\in\mathcal{W}} \mathcal{L}(w)$. Since we do not have access to the data-generating distribution $\mathcal{P}$, computing the true loss is infeasible. However, we do have access to a set of samples $Z^n$, and one approach is empirical risk minimization where we find the $w\in\mathcal{W}$ that minimizes $\widehat{\mathcal{L}}(w,Z^n)$. If the difference $ \mathcal{L}(w) - \widehat{\mathcal{L}}(w,Z^n)$ is small, then the true loss of an empirical risk minimizer would be nearly optimal within the considered hypothesis class. Therefore, it is of great interest to study the generalization gap of $\mathcal{A}$ (given $n-1$ samples for ease of notation):
\begin{equation}
    \mathrm{gen}(\mathcal{A}) = \left\lvert \mathbb{E}_{\mathcal{A}, z^{n-1}\sim \mathcal{P}^{n-1}} \left[ \mathcal{L}(\mathcal{A}(z^{n-1})) - \widehat{\mathcal{L}}(\mathcal{A}(z^{n-1}), z^{n-1})\right]\right\rvert.
\end{equation}
\subsection{Conditional Mutual Information Bounds}
\label{sec: old_cmi}
We start by recalling the main results on Conditional Mutual Information (CMI) bounds that we return to later. Let $\tilde{Z}^{2n}\in\mathcal{Z}^{n\times 2}$ be a dataset with $2n$ samples drawn from a distribution $\mathcal{P}$, grouped into $n$ pairs. Let $S\sim \mathcal{S} = \mathrm{Uniform}\left( \{0,1\}^n\right)$ a uniform random binary vector of size $n$. $S$ selects one sample from each pair in $\tilde{Z}^{2n}$ to form $\tilde{Z}_S^{n}\in\mathcal{Z}^n$ given by $(\tilde{Z}_{S}^{n})_{i} = \tilde{Z}^{2n}_{S_i + 1}$. In this setting, the bound of Steinke and Zakynthinou \cite{steinke2020} arises from asking the following question : ``if a model is trained with a subset of samples chosen through the random binary indexing vector $S$, how much information does the output of the algorithm provide about $S$?’’ Here, the Shannon Mutual Information is useful because it is a measure of the amount of `information' obtained about one random variable after observing another random variable.
\begin{theorem}[Theorem 2, \cite{steinke2020}]
\label{thm: steinke_gen}
Let $\ell: \mathcal{W} \times \mathcal{Z} \to [0,1]$ be a bounded loss function. Let $\mathcal{A}$ be a possibly stochastic training algorithm. Let $W = \mathcal{A}(\tilde{Z}_S^n)$ be the output of $\mathcal{A}$ given the dataset $\tilde{Z}_S^n$, then 
\begin{equation}
    \label{eq: steinke_gen_eq}
    \mathrm{gen}(\mathcal{A}) \leq \sqrt{\frac{2}{n} I(W; S \given \tilde{Z}^{2n})}.
\end{equation}
\end{theorem}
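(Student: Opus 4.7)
My plan is to use the standard ``ghost sample'' trick together with a Donsker--Varadhan/change-of-measure inequality that converts a bounded-difference discrepancy into a mutual-information quantity. Write $U_i \coloneqq \tilde Z^{2n}_{S_i+1,\,i}$ (the sample in pair $i$ selected by $S$, so that $(\tilde Z_S^n)_i = U_i$ and $W = \mathcal{A}(U_1,\dots,U_n)$) and $V_i \coloneqq \tilde Z^{2n}_{(1-S_i)+1,\,i}$ (the ``ghost'' sample not used for training). Because $\tilde Z^{2n}$ is i.i.d.\ from $\mathcal{P}$ and $S$ is independent and uniform, the vector $(U_1,V_1,\dots,U_n,V_n)$ is itself distributed as $2n$ i.i.d.\ samples from $\mathcal{P}$; in particular $V = (V_1,\dots,V_n)$ is independent of $U$, hence of $W$. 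This immediately gives
\begin{equation*}
    \mathbb{E}\!\left[\mathcal{L}(W)\right] \;=\; \mathbb{E}\!\left[\tfrac{1}{n}\sum_{i=1}^n \ell(W, V_i)\right],
\end{equation*}
so the generalization gap equals $\bigl|\mathbb{E}[f(W,\tilde Z^{2n},S)]\bigr|$ where
\begin{equation*}
    f(w,\tilde z^{2n},s) \;\coloneqq\; \tfrac{1}{n}\sum_{i=1}^n \bigl[\ell(w,\tilde z^{2n}_{(1-s_i)+1,i}) - \ell(w,\tilde z^{2n}_{s_i+1,i})\bigr].
\end{equation*}

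Next I would note the crucial symmetry: if $S'$ is an independent copy of $S$, then under the product measure $P_{(W,\tilde Z^{2n})}\otimes P_{S'}$ each summand $\ell(W,\tilde Z^{2n}_{(1-S'_i)+1,i}) - \ell(W,\tilde Z^{2n}_{S'_i+1,i})$ has mean zero (flipping $S'_i$ flips its sign, and $S'_i$ is uniform in $\{0,1\}$), so $\mathbb{E}_{P_{(W,\tilde Z^{2n})}\otimes P_{S'}}[f(W,\tilde Z^{2n},S')] = 0$. Therefore $\mathrm{gen}(\mathcal{A})$ is precisely the gap between the expectation of $f$ under the true joint law and under the product law of $(W,\tilde Z^{2n})$ and $S$.

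To convert this gap into a mutual-information bound I would invoke the standard variational inequality (Donsker--Varadhan): for any pair $X,Y$ and any measurable $f$ whose law under $P_X\otimes P_Y$ is $\sigma^2$-sub-Gaussian,
\begin{equation*}
    \bigl|\mathbb{E}_{P_{X,Y}}[f(X,Y)] - \mathbb{E}_{P_X\otimes P_Y}[f(X,Y')]\bigr| \;\le\; \sqrt{2\sigma^2\, I(X;Y)}.
\end{equation*}
Taking $X = (W,\tilde Z^{2n})$ and $Y = S$, the sub-Gaussian constant is obtained by conditioning on $(W,\tilde Z^{2n})$: given these, the $n$ summands in $f(W,\tilde Z^{2n},S')$ are independent (since the $S'_i$ are), each bounded in $[-1,1]$ because $\ell\in[0,1]$, so Hoeffding's lemma gives $\sigma^2 = 1/n$ for the average. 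Finally, since $\tilde Z^{2n}\perp S$, the chain rule yields $I((W,\tilde Z^{2n});S) = I(W;S\mid \tilde Z^{2n}) + I(\tilde Z^{2n};S) = I(W;S\mid \tilde Z^{2n})$, producing the claimed bound $\mathrm{gen}(\mathcal{A}) \le \sqrt{(2/n)\,I(W;S\mid \tilde Z^{2n})}$.

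The only non-mechanical step is verifying the symmetry argument and being careful about sign/indexing conventions in defining $f$; the sub-Gaussian computation and the reduction of the unconditional MI to a conditional MI are both routine. No measurability or integrability subtleties arise because $\ell$ is bounded and $S$ lives on a finite set.
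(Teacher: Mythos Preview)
The paper does not supply its own proof of this statement; Theorem~\ref{thm: steinke_gen} is quoted from \cite{steinke2020} purely as background in Section~\ref{sec: old_cmi}, so there is nothing in the paper to compare your argument against. That said, your proposal is correct and is exactly the standard proof: express the gap via the ghost-sample discrepancy $f$, note that conditionally on $(W,\tilde Z^{2n})$ the random variable $f(W,\tilde Z^{2n},S')$ is centered and $1/n$-sub-Gaussian by Hoeffding, apply the sub-Gaussian change-of-measure inequality (stated in the paper as Lemma~\ref{lem: xu}), and use $\tilde Z^{2n}\perp S$ together with the chain rule to pass from $I((W,\tilde Z^{2n});S)$ to $I(W;S\mid \tilde Z^{2n})$.
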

Haghifam et al.\cite{haghifam2020} improved Theorem \ref{thm: steinke_gen} by moving the expectation over $\tilde{Z}^{2n}$ outside the square root, and by measuring the information the output of the algorithm provides on random \textit{subsets} of the indexing vector $S$. Harutyunyan et al.~\cite{harutyunyan21} further improved the bound by moving the expectation over the random subsets outside the square root. 
\begin{theorem}[Theorem 2.6, \cite{harutyunyan21}]
\label{thm: haru_gen}
Let $\ell: \mathcal{W} \times \mathcal{Z} \to [0,1]$ be a bounded loss function. Let $m\in [n]$ and let $V \sim \mathcal{V}$ be a random subset of $[n]$ of size $m$. Let $W = \mathcal{A}(\tilde{Z}_S)$ be the output of $\mathcal{A}$ given the dataset $\tilde{Z}_S$, then 
\begin{equation}
    \label{eq: haru_gen}
    \mathrm{gen}(\mathcal{A}) \leq \mathbb{E}_{\tilde{z}^{2n}\sim \mathcal{P}^{2n}, v\sim \mathcal{V}}\sqrt{\frac{2}{m} I(W; S_{v} \given \tilde{z}^{2n})}.
\end{equation}
\end{theorem}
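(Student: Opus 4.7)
The plan is to adapt the standard supersample argument behind Theorem~\ref{thm: steinke_gen}, but defer Jensen's inequality until after we have symmetrized over a random size-$m$ subset so that the expectations over $\tilde Z^{2n}$ and $V$ end up outside the square root. First, using that each ``unseen'' sample $\tilde Z_{i,1-S_i}$ is an iid copy from $\mathcal P$ and is independent of $W=\mathcal A(\tilde Z_S^n)$ given $\tilde Z_{i,S_i}$, I would rewrite the generalization gap in its supersample form $\mathrm{gen}(\mathcal A)=\bigl|\mathbb E[\tfrac{1}{n}\sum_{i=1}^n \Delta_i]\bigr|$, where $\Delta_i := \ell(W,\tilde Z_{i,1-S_i})-\ell(W,\tilde Z_{i,S_i})$. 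Exchangeability of the $n$ index positions then lets me replace this $n$-average by an $m$-average over a uniform random subset $V\sim\mathcal V$ of $[n]$ of size $m$, giving $\mathrm{gen}(\mathcal A)=\bigl|\mathbb E_{\tilde Z^{2n},V}\bigl[\mathbb E[\tfrac{1}{m}\sum_{i\in V}\Delta_i\mid \tilde Z^{2n},V]\bigr]\bigr|$.

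Second, I would pull the absolute value inside the outer expectation via $|\mathbb E X|\le\mathbb E|X|$. This is the ordering choice that realizes the improvement of Theorem~\ref{thm: haru_gen} over the bound of \cite{haghifam2020}: performing Jensen before invoking any MI bound is exactly what places both expectations outside the forthcoming square root.

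Third, and this is the main step, for fixed $(v,\tilde z^{2n})$ I would control the inner expectation using the classical subgaussian/Donsker--Varadhan bound ``$\mathbb E_{p_{X,Y}}[f]-\mathbb E_{p_X p_Y}[f]\le \sqrt{2\sigma^2 I(X;Y)}$.'' Define $g(w,s_v):=\tfrac{1}{m}\sum_{i\in v}(\ell(w,\tilde z_{i,1-s_i})-\ell(w,\tilde z_{i,s_i}))$. Under the product law $p_{W\mid\tilde z^{2n},v}\otimes p_{S_v}$, the map $s_i\leftrightarrow 1-s_i$ flips the sign of each summand, so the product expectation vanishes. Conditional on $W=w$, $g(w,S_v)$ is the mean of $m$ independent centered Rademachers with amplitudes $|\ell(w,\tilde z_{i,0})-\ell(w,\tilde z_{i,1})|\le 1$, hence $1/m$-subgaussian by Hoeffding; since the parameter is uniform in $w$, the subgaussianity persists after marginalizing $W$. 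The variational inequality then gives $|\mathbb E[g(W,S_v)\mid \tilde z^{2n},v]|\le \sqrt{(2/m)\, I(W;S_v\mid \tilde z^{2n})}$, and substituting into step two yields the stated bound.

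The only nontrivial point I expect is the subgaussianity in step three: it requires both the $[0,1]$ loss assumption (to give amplitude $\le 1$ per coordinate) and the independence of the components of $S_v$ under the product measure (which is preserved after conditioning on $W$ because, under the product, $W\perp S_v$ by construction). Everything else is bookkeeping with the tower property and Fubini, so this is the step where any constant could slip.
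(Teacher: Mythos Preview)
The paper does not actually supply a proof of this statement: Theorem~\ref{thm: haru_gen} is quoted from \cite{harutyunyan21} as background in Section~\ref{sec: old_cmi}, and the proofs section (Section~\ref{sec: proofs}) only treats the paper's own results (Lemma~\ref{lem: loocv_bound}, Theorems~\ref{thm: loocmi_bound}--\ref{thm: func_stability}, Lemma~\ref{lem: sgd}). So there is no ``paper's own proof'' to compare against here.

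That said, your sketch is correct and is essentially the argument one finds in the original source. The three ingredients line up exactly with what is needed: (i) the supersample identity $\mathrm{gen}(\mathcal A)=|\mathbb E[\tfrac{1}{n}\sum_i\Delta_i]|$, (ii) the symmetry step replacing the $n$-average by an $m$-average over a uniform random $V$ and then pulling $|\cdot|$ inside the expectation over $(\tilde Z^{2n},V)$, and (iii) for fixed $(\tilde z^{2n},v)$, applying the subgaussian change-of-measure lemma (the paper's Lemma~\ref{lem: xu}) to $g(W,S_v)$. Your subgaussianity verification is the right one: conditionally on $W=w$ under the product law, each summand equals $\pm|\ell(w,\tilde z_{i,0})-\ell(w,\tilde z_{i,1})|$ with equal probability, hence is a scaled Rademacher with amplitude at most $1$, so the $m$-average is $(1/m)$-subgaussian; uniformity in $w$ lets you marginalize. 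Nothing in your outline is a gap.
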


One appealing aspect of the CMI bounds is that they work for any training algorithm and any model, including over-parameterized networks trained with Stochastic Gradient Descent (SGD). However, computing the bounds requires iterating over all possible $2^N$ values of $S$. Moreover, the bound is difficult to interpret since the value of $W = \mathcal{A}(\tilde{Z}_S^{n})$ for different values of $S$ can vary significantly and in unpredictable ways for large non-convex models. Taking inspiration from leave-one-out cross validation, we propose different CMI bound which avoids both problems by removing a single sample from the dataset. As we shall see, it allows for interpretable expressions, faster computation, and a connection to notions of stability.

\section{Leave-One-Out Conditional Mutual Information}
% \subsection{Definition}
\label{sec: loocmi_def}
Let $Z^n = \{Z_1,Z_2,\ldots,Z_n\} \in\mathcal{Z}^{n}$ be a dataset of $n$ i.i.d. samples drawn from $\mathcal{P}$. Let $U\sim \mathcal{U} = \mathrm{Uniform}([n])$ be uniform random variable taking values over the indices of the samples in $Z$. $U$ removes a single sample from $Z^n$ to form $Z_{-U}^{n-1} = Z^n\setminus Z_U$, the dataset without the $U^{\mathrm{th}}$ sample. Let $W = \mathcal{A}(Z_{-U}^{n-1})$ be the output of the algorithm, then we define the pointwise leave-one-out conditional mutual information as: 
\begin{equation}
\label{eq: LOOCMI}
\mathrm{loo\text{-}CMI}(\mathcal{A}, z^n) = I(W;U\given z^n).
\end{equation}
In this setting, $\mathrm{loo\text{-}CMI}(\mathcal{A}, z^n)$ measures the amount of information that the output of the algorithm $W$ reveals about $U$, the index (identity) of the left-out sample. As it is the case for the conditional mutual information terms in Theorems \ref{thm: steinke_gen} and \ref{thm: haru_gen}, \looCMI is bounded. Specifically, \looCMI is upper bounded by the entropy of $U$, $H(U) = \log (n)$. Hence, it does not suffer from the same issues as information stability bounds \cite{russo16,xu2017}. Moreover, the output of an algorithm is not significantly affected by the inclusion or exclusion of one sample when the input consists of thousands of other samples, so we expect \looCMI to be small for large values of $n$. We use \eqref{eq: LOOCMI} to derive a bound on the generalization of error in expectation (Theorem \ref{thm: loocmi_bound}). 

\subsection{Generalization Bounds Based on \looCMI }
\looCMI is deeply intertwined with leave-one-out cross validation, so before we derive bounds on the generalization through \looCMI, it is helpful to first obtain a probabilistic upper bound on the leave-one-out cross validation error (\loocv). \loocv measures the difference in loss between the samples which the algorithm trained on, and the sample that was left out. We begin with a precise definition of \loocv and a probabilistic bound on it.
\begin{equation}
 \mathrm{loo\text{-}cv}(w,z^n, u) = \frac{1}{n-1} \sum_{i\neq u} \ell(w,z_i) - \ell(w,z_u).
\end{equation}
\begin{lemma}
\label{lem: loocv_bound}
Let $\ell: \mathcal{W} \times \mathcal{Z} \to [0,1]$ be a bounded loss function, then for all $t > 0$, $w\in\mathcal{W}$, and $z^n\in\mathcal{Z}$,
\begin{equation}
\label{eq: loocv_bound}
    \mathbb{E}_{u\sim \mathcal{U}} \left[ \mathrm{exp}\left( t \cdot \left( \mathrm{loo\text{-}cv}(w,z^n, u)\right)\right) \right] \leq \mathrm{exp}\left(\frac{t^2c_n^2}{8}\right), \quad \quad   c_n = \frac{n}{n-1}.
\end{equation}
\end{lemma}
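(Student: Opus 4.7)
The plan is to reduce this to Hoeffding's lemma applied to a single bounded, zero-mean random variable, namely $\mathrm{loo\text{-}cv}(w,z^n,U)$ viewed as a function of $U \sim \mathrm{Uniform}([n])$ with $w$ and $z^n$ held fixed.

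First, I would fix $w$ and $z^n$ and abbreviate $X_i = \ell(w,z_i) \in [0,1]$. A direct manipulation rewrites the leave-one-out cross validation as
\begin{equation*}
    \mathrm{loo\text{-}cv}(w,z^n,u) \;=\; \frac{1}{n-1}\sum_{i=1}^{n} X_i \;-\; \frac{n}{n-1}\, X_u \;=\; \frac{S}{n-1} \;-\; c_n X_u,
\end{equation*}
where $S = \sum_{i=1}^n X_i$. This form is convenient because the dependence on $u$ is isolated in the single term $c_n X_u$.

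Next, I would check that this random variable (in $U$) is centered. Since $U$ is uniform on $[n]$, $\mathbb{E}_{u\sim\mathcal{U}}[X_u] = S/n$, so
\begin{equation*}
    \mathbb{E}_{u\sim\mathcal{U}}\!\left[\mathrm{loo\text{-}cv}(w,z^n,u)\right] \;=\; \frac{S}{n-1} - \frac{n}{n-1}\cdot\frac{S}{n} \;=\; 0.
\end{equation*}
Moreover, as $u$ ranges over $[n]$ the values $c_n X_u$ lie in $[0,c_n]$, so $\mathrm{loo\text{-}cv}(w,z^n,u)$ takes values in an interval of length at most $c_n = n/(n-1)$.

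Finally, I would invoke Hoeffding's lemma: any zero-mean random variable supported on an interval of length $L$ satisfies $\mathbb{E}[\exp(tY)] \leq \exp(t^2 L^2/8)$. Applied here with $L = c_n$ this yields exactly
\begin{equation*}
    \mathbb{E}_{u\sim\mathcal{U}}\!\left[\exp\!\left(t\cdot\mathrm{loo\text{-}cv}(w,z^n,u)\right)\right] \;\leq\; \exp\!\left(\frac{t^2 c_n^2}{8}\right),
\end{equation*}
which is the claim. There is no real obstacle in this proof; the only conceptual step is recognizing that, with $w$ and $z^n$ frozen, $\mathrm{loo\text{-}cv}$ is simply an affine function of the bounded random variable $X_U$, so the factor $c_n$ in the bound comes directly from the coefficient of $X_u$ and the boundedness of $\ell$.
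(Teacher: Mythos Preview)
Your proof is correct and considerably cleaner than the paper's. The paper does not invoke Hoeffding's lemma; instead it maximizes the moment generating function directly over all vectors $\ell \in [0,1]^n$: it computes stationary points of $g(\ell) = n\,\mathbb{E}_u[\exp(t f(\ell,u))]$, argues that each coordinate must be $0$, $1$, or a specific interior value, reduces to a Lagrangian optimization in the counts $m,k$ of zeros and ones, solves for the optimum, and finally bounds the resulting expression $-1 + c/(1-e^{-c}) + \log\bigl((1-e^{-c})/c\bigr)$ by $c^2/8$ via Taylor expansion. Your observation that $\mathrm{loo\text{-}cv}(w,z^n,u) = S/(n-1) - c_n X_u$ is a zero-mean random variable supported in an interval of length $c_n$ bypasses all of this: Hoeffding's lemma applies out of the box and gives exactly the same constant. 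The paper's argument is essentially a specialized rederivation of Hoeffding's lemma, so nothing is lost by your shortcut, and the affine-in-$X_U$ rewriting you isolate is the conceptual point the paper's optimization obscures.
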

To derive the generalization bounds included in this paper, we make heavy use of Lemma \ref{lem: loocv_bound}. We begin with a generalization bound in expectation. 
\begin{theorem}
\label{thm: loocmi_bound}
Let $\mathcal{A}: \mathcal{Z}^{n-1} \to \mathcal{W}$ be a training algorithm. Let $\ell: \mathcal{W} \times \mathcal{Z} \to [0,1]$ be a bounded loss function, then 
\begin{equation}
    \label{eq: loocmi_bound}
   \mathrm{gen}(\mathcal{A})\leq \frac{c_n}{\sqrt{2}} \mathbb{E}_{z^n\sim \mathcal{P}^n} \sqrt{\mathrm{loo\text{-}CMI}(\mathcal{A},z^n)},\end{equation}
\end{theorem}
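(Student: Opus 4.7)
The plan is to reduce the generalization gap to an expectation of $\mathrm{loo\text{-}cv}$, then apply the Donsker--Varadhan change-of-measure inequality using the product distribution $p_{W\mid z^n}\, p_U$ as the auxiliary measure, and finally invoke Lemma~\ref{lem: loocv_bound} as the sub-Gaussian MGF estimate.

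\textbf{Step 1: Rewrite $\mathrm{gen}(\mathcal{A})$ in terms of $\mathrm{loo\text{-}cv}$.} Draw $Z^n \sim \mathcal{P}^n$ and $U \sim \mathrm{Uniform}([n])$ independently, and let $W = \mathcal{A}(Z^{n-1}_{-U})$. Conditional on $U=u$, the sample $Z_u$ is independent of $Z^{n-1}_{-u}$ and hence of $W$, and $Z_u \sim \mathcal{P}$. Therefore $\mathbb{E}[\ell(W,Z_U)] = \mathbb{E}[\mathcal{L}(W)]$, and the remaining $n-1$ samples play the role of a fresh training set of size $n-1$. So
\begin{equation*}
\mathrm{gen}(\mathcal{A}) = \bigl|\mathbb{E}\bigl[\mathcal{L}(W) - \widehat{\mathcal{L}}(W, Z^{n-1}_{-U})\bigr]\bigr| = \bigl|\mathbb{E}_{z^n,U}[\mathrm{loo\text{-}cv}(W,Z^n,U)]\bigr|.
\end{equation*}
By the triangle inequality it suffices to bound $\mathbb{E}_{z^n} \bigl|\mathbb{E}_{U \mid z^n}[\mathrm{loo\text{-}cv}(W, z^n, U)]\bigr|$.

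\textbf{Step 2: Change of measure.} Fix $z^n$. Under the joint $p_{W,U\mid z^n}$, $W$ depends on $U$ through $\mathcal{A}$; under the product $p_{W\mid z^n}\, p_U$, the two are independent. The Donsker--Varadhan variational representation gives, for any $t>0$,
\begin{equation*}
t \cdot \mathbb{E}_{p_{W,U\mid z^n}}[\mathrm{loo\text{-}cv}(W,z^n,U)] \le \mathrm{KL}\bigl(p_{W,U\mid z^n} \,\|\, p_{W\mid z^n}\, p_U\bigr) + \log \mathbb{E}_{p_{W\mid z^n}\, p_U}\bigl[e^{t\,\mathrm{loo\text{-}cv}(W,z^n,U)}\bigr].
\end{equation*}
The KL term equals $I(W;U\mid z^n) = \mathrm{loo\text{-}CMI}(\mathcal{A},z^n)$ by definition.

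\textbf{Step 3: MGF bound via Lemma~\ref{lem: loocv_bound}.} Under the product measure, one can first condition on $W=w$ and apply Lemma~\ref{lem: loocv_bound} to the inner expectation over $U$, obtaining $\mathbb{E}_U[e^{t\,\mathrm{loo\text{-}cv}(w,z^n,U)}] \le e^{t^2 c_n^2/8}$ for every $w$; hence the same bound survives after integrating over $p_{W\mid z^n}$. Substituting into Step 2 yields
\begin{equation*}
\mathbb{E}_{U \mid z^n}[\mathrm{loo\text{-}cv}(W,z^n,U)] \le \frac{\mathrm{loo\text{-}CMI}(\mathcal{A},z^n)}{t} + \frac{t c_n^2}{8}.
\end{equation*}
Optimizing over $t>0$ gives $t^\star = \sqrt{8\,\mathrm{loo\text{-}CMI}(\mathcal{A},z^n)}/c_n$ and the right-hand side becomes $\tfrac{c_n}{\sqrt{2}} \sqrt{\mathrm{loo\text{-}CMI}(\mathcal{A},z^n)}$. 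Repeating the argument with $-\mathrm{loo\text{-}cv}$ (which also satisfies the same sub-Gaussian MGF bound by symmetry of Lemma~\ref{lem: loocv_bound}) controls the absolute value. Taking $\mathbb{E}_{z^n \sim \mathcal{P}^n}$ then gives \eqref{eq: loocmi_bound}.

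\textbf{Main obstacle.} The delicate point is Step 1: establishing that marginalizing over the random index $U$ converts the expected true loss into $\mathbb{E}[\ell(W,Z_U)]$, so that $\mathrm{loo\text{-}cv}$ genuinely equals the signed generalization gap in expectation. This relies on the i.i.d.\ structure of $Z^n$ together with the independence of $U$ from $Z^n$, and must be spelled out because the definition of $\mathrm{gen}(\mathcal{A})$ uses $n-1$ training samples (matching the size of $Z^{n-1}_{-U}$). After that, the Donsker--Varadhan plus sub-Gaussian MGF plus optimization-in-$t$ pipeline is the standard information-theoretic generalization argument and proceeds routinely.
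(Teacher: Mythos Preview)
Your proposal is correct and follows essentially the same route as the paper: rewrite $\mathrm{gen}(\mathcal{A})$ as the expected $\mathrm{loo\text{-}cv}$, then combine the sub-Gaussian MGF bound of Lemma~\ref{lem: loocv_bound} with a change-of-measure argument to get the $\sqrt{\mathrm{loo\text{-}CMI}}$ rate. The only cosmetic difference is that the paper packages Steps~2--3 by citing the decoupling lemma of Xu and Raginsky (which is itself proved via Donsker--Varadhan plus optimization in $t$) and separately records the zero-mean identity $\mathbb{E}_{u}[\mathrm{loo\text{-}cv}(w,z^n,u)]=0$, whereas you inline the Donsker--Varadhan step and read the MGF bound directly off Lemma~\ref{lem: loocv_bound} without needing the mean-zero fact explicitly.
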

where $c_n$ is given in \eqref{eq: loocv_bound}. The bound in \eqref{eq: loocmi_bound} guarantees a good generalization error when $\mathrm{loo\text{-}CMI}(\mathcal{A},z^n)$ is small; i.e., when the output of the algorithm $W$ tells us little about the identity of the sample that was removed from the training dataset. If a parametric learning algorithm ends up memorizing the samples in the training set, the generalization gap could be large, and so would \looCMI as we can determine $U$ by checking which sample was not memorized by the algorithm. On the other extreme, a learning algorithm which outputs a constant parameter regardless of the training set does not have a generalization gap, and in this case \looCMI would equal $0$ as $W$ provides no information about $U$. We note that bound of Theorem \ref{thm: loocmi_bound} most closely resembles the bound of Theorem \ref{thm: haru_gen} when $m=1$. The latter bound is given by:

% For deterministic algorithms, $\mathrm{loo\text{-}CMI}(\mathcal{A},z^n)$ will most likely be equal to $H(U) = \log_2(n)$ since it is often the case that each unique value of $U$ produces a unique value of $W$. However, we will later see that there is a method for deriving a bound on the generalization gap of deterministic algorithm through bounds on their stochastic counterparts. 
\begin{equation}
\label{eq: haru_m1}
\mathrm{gen}(\mathcal{A}) \leq \frac{1}{n}\sum_{i=1}^{n}\mathbb{E}_{\tilde{z}^{2n}\sim \mathcal{P}^{2n}}\sqrt{2I(W; S_i \given \tilde{z}^{2n})}.
\end{equation}
Recalling the setting of Section \ref{sec: old_cmi}, \eqref{eq: haru_m1} is computed by first going through every possible value of $S\in \{0,1\}^n$. For every value of $S$ and for each $i\in [n]$, one fixes $S=(S_1,S_2,\ldots, S_n)$ excluding $S_i$ which varies uniformly over $\{0,1\}$. In other words, the $i^{\mathrm{th}}$ component of the training set $\tilde{Z}_S^n$ is equally likely to be either $\tilde{Z}_{i,1}$ or $\tilde{Z}_{i,2}$. The right-hand-side of \eqref{eq: haru_m1} then bounds the generalization using the information that the weights contain about $S_i$. In other words, the question asked is: ``can the output of the algorithm help us determine which one of $\tilde{z}_{i,1}$ or $\tilde{z}_{i,2}$ was present in the training set?". On the other hand, we ask a different question, "can the output of the algorithm help us determine the index of the sample that was removed?’’. This subtle change leads to an exponential reduction in the cost of computing the since we only need to iterate over the values of $U\in [n]$, as opposed to $S$ which can take $2^n$ possible values.

\subsection{Extension To The Function Space}
The previous bounds used the output of the algorithm, the weights, to bound the generalization error. A different approach studied in \cite{harutyunyan21} is to assume that given a training dataset and a test sample, the algorithm outputs a prediction on the test sample. In other words, we assume the algorithm is a possibly stochastic function $h: \mathcal{Z}^{n-1}\times \mathcal{X} \longrightarrow \mathcal{R}$ which takes a training set $z$, a new unlabeled sample $x_{\mathrm{new}}$, and outputs a possibly stochastic prediction $h(z,x_{\mathrm{new}})$ on the new sample. The set of predictions $\mathcal{R}$ can be different than the set of labels $\mathcal{Y}$ (e.g., class probabilities in multi-class classification tasks). Unlike bounds with respect to the weights, this approach is applicable to both parametric algorithms (e.g. neural networks) and non-parametric algorithms (e.g. $k$-nearest neighbors). In the former case, the output of the algorithm $\mathcal{A}$ specifies a prediction function $g\in \{ g_w: \mathcal{X} \to \mathcal{R} \given w\in\mathcal{W}\}$ from a class of functions parameterized by the weights i.e. $h(z,x_{\mathrm{new}}) = g_{\mathcal{A}(z)}(x_{\mathrm{new}})$. 

We redefine the loss to be a non-negative function $\ell: \mathcal{R}\times \mathcal{Y}\to\mathbb{R}_{+}$ that measures the distance between the predictions and the ground-truth labels. Given $\ell$ and a training set $Z^n$, the true loss of an algorithm $h$ is given by $\mathcal{L}(h, Z^n) = \mathbb{E}_{(x',y')\sim \mathcal{P}} \left[\ell(h(Z^n,x'),y')\right]$, and the empirical loss is given by $\widehat{\mathcal{L}}(h,Z^n) = 1/n\sum_{i=1}^n  \ell(h(Z^n,X_i),Y_i)$. The generalization gap of $h$ given is then defined as: 
\begin{equation}
    \label{eq: func_gap}
    \mathrm{gen}(h) = \left\lvert \mathbb{E}_{h, z^{n-1}\sim \mathcal{P}^{n-1}} \left[ \widehat{\mathcal{L}}(h,z^{n-1}) - \mathcal{L}(h,z^{n-1})\right]\right\rvert. 
\end{equation}
Recalling the setup of Section \ref{sec: loocmi_def}, we define the pointwise functional leave-one-out conditional mutual information \fooCMI as:
\begin{equation}
\label{eq: FOOCMI}
\mathrm{floo\text{-}CMI}(h,z^n) = I\left(h\left(z^{n-1}_{-U}, x^n\right);U | z^n\right).
\end{equation}
While \looCMI measures the reduction in uncertainty about $U$ after having known the weights, \fooCMI measures the reduction in uncertainty after having known the predictions made on \textit{whole} dataset (including the prediction on the removed sample). We derive a bound with respect to $\mathrm{floo\text{-}CMI}(h,z^n)$ using a proof that closely resembles the proof of Theorem \ref{thm: loocmi_bound}.
\begin{theorem}
\label{thm: foocmi_bound}
Let $\ell: \mathcal{R} \times \mathcal{Y} \to [0,1]$ be a bounded loss function. Let $U$ be a uniform random variable over $[n]$, then 
\begin{equation}
\label{eq: foocmi_bound}
\mathrm{gen}(h) \leq \frac{c_n}{\sqrt{2}}\mathbb{E}_{z^n\sim \mathcal{P}^n}\sqrt{\mathrm{floo\text{-}CMI}(h, z^n)},
\end{equation}
\end{theorem}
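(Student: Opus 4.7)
The plan is to mirror the proof of Theorem \ref{thm: loocmi_bound} with the random prediction vector $R := h(z^{n-1}_{-U}, x^n)$ playing the role the weights $W = \mathcal{A}(Z^{n-1}_{-U})$ played there, and to use the fact that Lemma \ref{lem: loocv_bound} is at heart a statement about any fixed sequence of per-sample losses in $[0,1]$. Once this observation is in place, the three moves (rewrite the gap, decouple via Donsker--Varadhan, optimize in $t$) carry over almost verbatim.

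First I would rewrite the generalization gap as an expected functional leave-one-out error. Since $U\sim\mathrm{Uniform}([n])$ is independent of the i.i.d.\ sample $z^n$,
$$\mathbb{E}_{h,z^{n-1}}[\mathcal{L}(h,z^{n-1})]=\mathbb{E}_{z^n,U,h}\bigl[\ell(h(z^{n-1}_{-U},x_U),y_U)\bigr],$$
and the analogous identity holds for $\widehat{\mathcal{L}}$, giving
$$\mathrm{gen}(h)=\left|\mathbb{E}_{z^n}\,\mathbb{E}_{R,U\mid z^n}\!\left[\frac{1}{n-1}\sum_{i\neq U}\ell(R_i,y_i)-\ell(R_U,y_U)\right]\right|.$$
Call the bracketed quantity $\mathrm{loo\text{-}cv}_h(R,z^n,U)$; it depends on $R$ only through the loss vector $(\ell(R_i,y_i))_{i=1}^n\in[0,1]^n$.

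Next, fix $z^n$ and apply the Donsker--Varadhan variational formula to the conditional law $p_{R,U\mid z^n}$ against the decoupled reference $p_{R\mid z^n}\otimes p_U$: for every $t>0$,
$$t\,\mathbb{E}_{R,U\mid z^n}[\mathrm{loo\text{-}cv}_h]\le I(R;U\mid z^n)+\log\mathbb{E}_{R\sim p_{R\mid z^n}}\mathbb{E}_{U\sim\mathcal{U}}\bigl[\exp(t\,\mathrm{loo\text{-}cv}_h(R,z^n,U))\bigr].$$
I condition on $R=r$ inside: then $(\ell(r_i,y_i))_i$ is a fixed vector in $[0,1]^n$, and Lemma \ref{lem: loocv_bound}, applied with this loss vector in place of $(\ell(w,z_i))_i$, bounds the inner MGF by $\exp(t^2c_n^2/8)$; the outer expectation over $R$ inherits the same bound. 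Rearranging gives $\mathbb{E}_{R,U\mid z^n}[\mathrm{loo\text{-}cv}_h]\le I(R;U\mid z^n)/t+tc_n^2/8$, and optimizing at $t=2\sqrt{2\,\mathrm{floo\text{-}CMI}(h,z^n)}/c_n$ yields the pointwise estimate $(c_n/\sqrt{2})\sqrt{\mathrm{floo\text{-}CMI}(h,z^n)}$.

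Finally, to absorb the absolute value I run the same argument on $-\mathrm{loo\text{-}cv}_h$, which is the loo-cv formed from the shifted losses $1-\ell(r_i,y_i)\in[0,1]$ and hence obeys the same sub-Gaussian estimate. Taking an outer expectation over $z^n\sim\mathcal{P}^n$ and pushing it inside the absolute value via Jensen delivers the claim. The only non-mechanical point is conceptual rather than technical: under the decoupled reference $p_{R\mid z^n}\otimes p_U$ the prediction vector $R$ no longer corresponds to a concrete execution of $h$ on any training subset, and one must be comfortable invoking Lemma \ref{lem: loocv_bound} for such an abstract $r$. This is fine precisely because that lemma only uses boundedness of the induced loss vector, not the mechanism that produced it.
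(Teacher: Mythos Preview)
Your proposal is correct and follows essentially the same route as the paper: both rewrite $\mathrm{gen}(h)$ as an expected functional leave-one-out error, invoke the sub-Gaussian bound of Lemma~\ref{lem: loocv_bound} for the decoupled pair (predictions, $U$), and then take the outer expectation over $z^n$ via Jensen. The only cosmetic difference is that the paper packages the Donsker--Varadhan step, the optimization in $t$, and the symmetrization for the absolute value into a single citation of the Xu--Raginsky lemma (Lemma~\ref{lem: xu}), together with the observation that $\mathbb{E}_{U}[\mathrm{loo\text{-}cv}(z^n,r,U)]=0$ for any fixed $r$, whereas you carry out those three moves by hand.
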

where $c_n$ is given in \eqref{eq: loocv_bound}. As pointed out in \cite{harutyunyan21}, for a parametric algorithm $\mathcal{A}$ with prediction function $h$, $U$\textemdash $\mathcal{A}(z^{n-1}_{-U})$\textemdash $h(z^{n-1}_{-U}, X)$ is a Markov chain. By the data-processing inequality, we have that $\mathrm{floo\text{-}CMI}(h,z^n) \leq \mathrm{loo\text{-}CMI}(\mathcal{A},z^n)$. Since the bounds of Theorems \ref{thm: loocmi_bound} and \ref{thm: foocmi_bound} differ only through \looCMI and \fooCMI, the bound of Theorem \ref{thm: foocmi_bound} is sharper.
\section{Computing The Bounds}
\subsection{Computable Upper Bounds to \looCMI and \fooCMI}
To evaluate and interpret the bounds for a parametric algorithm $\mathcal{A}$ with prediction function $h$, we require a computable closed-form expressions for \looCMI and \fooCMI. However, obtaining a closed form expression is a difficult task in most cases. To see the reason behind this difficulty, note that \looCMI can be written as: 
\begin{equation}
    \label{eq: alt_mutual}
    \mathrm{loo\text{-}CMI}(\mathcal{A}, z^n) = I(W;U\given z^n)=\mathbb{E}_{u\sim \mathcal{U}} \left[\infdiv{p_{W\given z^n,u}}{p_{W\given z^n}}\right], 
\end{equation}
where $p_{W\given z^n} = \mathbb{E}_{u\sim\mathcal{U}}\left[p_{W\given z^n, u}\right]$ is a mixture distribution. Since one of the terms in the Kullback-Leibler divergence of \eqref{eq: alt_mutual} is a mixture distribution, it is unlikely to obtain a closed form expression for the conditional mutual information for most commonly encountered continuous distributions (e.g., Gaussian distributions \cite{6289001}). To find a computable and interpretable bound on the generalization error, we propose upper bounds on the conditional mutual information which can be interpreted and evaluated. 
\begin{theorem}
\label{thm: approx}
Let $\mathcal{A}$ be a parametric algorithm with prediction function $h$. Let $U, Z^n, W$ be defined as before. Let $U'$ be an identical independent copy of $U$, then 
\begin{align}
\label{eq: weight_kl_bound}
    &\mathrm{loo\text{-}CMI}(\mathcal{A}, z^n) \leq -\mathbb{E}_{u\sim \mathcal{U}} \left[ \ln \mathbb{E}_{u'\sim \mathcal{U}} \left[\mathrm{exp}\left( - \infdiv{p_{W\given z^n, u}}{p_{W\given z^n, u'}}\right)\right]\right], \\
\label{eq: func_kl_bound}
    &\mathrm{floo\text{-}CMI}(h, z^n) \leq -\mathbb{E}_{u\sim \mathcal{U}} \left[ \ln \mathbb{E}_{u'\sim \mathcal{U}} \left[\mathrm{exp}\left( - \infdiv{p_{h(z_{-u}^{n-1}, x)}}{p_{h(z^{n-1}_{-u'}, x)}}\right)\right]\right].
\end{align}
\end{theorem}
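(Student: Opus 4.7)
The two inequalities \eqref{eq: weight_kl_bound} and \eqref{eq: func_kl_bound} have identical structure, so the plan is to prove a single pointwise bound and reuse it. Starting from the identity \eqref{eq: alt_mutual}, in which $p_{W\given z^n} = \mathbb{E}_{u'\sim\mathcal{U}}[p_{W\given z^n, u'}]$ is a mixture, it suffices to show that for each fixed $u$ and $z^n$,
\[
\infdiv{p_{W\given z^n, u}}{p_{W\given z^n}} \;\leq\; -\log \mathbb{E}_{u'\sim\mathcal{U}}\!\left[\exp\!\left(-\infdiv{p_{W\given z^n, u}}{p_{W\given z^n, u'}}\right)\right],
\]
and then take expectation over $u$. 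Note that the textbook bound obtained from convexity of $\mathrm{KL}$ in its second argument gives only the weaker inequality $\infdiv{p_u}{\bar p} \leq \mathbb{E}_{u'}[\infdiv{p_u}{p_{u'}}]$ (since $-\log\mathbb{E}[e^{-X}] \leq \mathbb{E}[X]$ by Jensen applied to $-\log$), so a different convexity will be needed to recover the claimed sharper bound.

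\textbf{Key lemma.} The main step is the general inequality: for any probability measure $P$ on $\mathcal{X}$ and any measurable $r: \mathcal{X}\times[n]\to \mathbb{R}_+$,
\[
\mathbb{E}_{x\sim P}\!\left[\log \mathbb{E}_{u'\sim\mathcal{U}}[r(x,u')]\right] \;\geq\; \log \mathbb{E}_{u'\sim\mathcal{U}}\!\left[\exp\!\left(\mathbb{E}_{x\sim P}[\log r(x,u')]\right)\right].
\]
I plan to prove this by applying Jensen's inequality to the log-moment-generating-function functional $\Lambda(\xi) := \log \mathbb{E}_{u'\sim\mathcal{U}}[e^{\xi(u')}]$ acting on bounded $\xi:[n]\to\mathbb{R}$. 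Convexity of $\Lambda$ follows from H\"older, $\mathbb{E}_{u'}[e^{t\xi + (1-t)\eta}] \leq \mathbb{E}_{u'}[e^{\xi}]^t\, \mathbb{E}_{u'}[e^{\eta}]^{1-t}$. Taking the random function $\xi_x(u') := \log r(x,u')$ with $x \sim P$, Jensen's inequality for convex functionals gives $\Lambda(\mathbb{E}_{x\sim P}[\xi_x]) \leq \mathbb{E}_{x\sim P}[\Lambda(\xi_x)]$, which is exactly the displayed inequality after unpacking.

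\textbf{Finishing both bounds.} Apply the lemma with $P := p_{W\given z^n, u}$ and $r(x,u') := p_{W\given z^n, u'}(x)/p_{W\given z^n, u}(x)$ (if $p_{W\given z^n,u} \nll p_{W\given z^n,u'}$ then $\infdiv{p_u}{p_{u'}} = +\infty$ and that $u'$ contributes $0$ to the right-hand side, so the bound is consistent). A direct substitution gives $\mathbb{E}_{x\sim P}[\log \mathbb{E}_{u'}[r(x,u')]] = -\infdiv{p_u}{\bar p}$ and $\mathbb{E}_{x\sim P}[\log r(x,u')] = -\infdiv{p_u}{p_{u'}}$, so the lemma becomes exactly the desired pointwise inequality. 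Taking $\mathbb{E}_u$ yields \eqref{eq: weight_kl_bound}. Inequality \eqref{eq: func_kl_bound} follows by the same argument verbatim, replacing $p_{W\given z^n, u}$ with the joint prediction distribution $p_{h(z_{-u}^{n-1}, x^n)}$; no additional ideas are required because the lemma is purely distribution-theoretic. The only delicate point in the whole argument is spotting the right convex object: a naive appeal to convexity of $\infdiv{\cdot}{\cdot}$ in its second slot only produces the loose bound noted above, whereas convexity of the log-moment-generating functional $\Lambda$ produces the tight one.
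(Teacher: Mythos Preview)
Your proof is correct and takes a genuinely different route from the paper's. The paper proceeds variationally: it introduces a functional
\[
\Phi(q,u)=\int p(w\mid u)\int q(u')\ln\frac{p(w\mid u)\,q(u')}{p(w\mid u')\,p(u')}\,\mathrm{d}u'\,\mathrm{d}w,
\]
shows via Jensen (concavity of $\ln$ applied inside the $w$-integral) that $\Phi(q,u)\geq\infdiv{p(w\mid u)}{p(w)}$ for every auxiliary distribution $q$, and then optimizes $\Phi$ over $q$ with a Lagrange multiplier, obtaining $q^*(u')\propto p(u')\exp(-\infdiv{p(w\mid u)}{p(w\mid u')})$; plugging $q^*$ back in collapses $\Phi(q^*,u)$ to exactly $-\ln\mathbb{E}_{u'}[\exp(-\infdiv{p(w\mid u)}{p(w\mid u')})]$. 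Your argument bypasses the variational machinery entirely: because $U'$ ranges over the finite set $[n]$, the log-sum-exp map $\Lambda(\xi)=\log\mathbb{E}_{u'}[e^{\xi(u')}]$ is just an ordinary convex function on $\mathbb{R}^n$, and a single Jensen step applied to the random vector $\xi_x(u')=\log\bigl(p_{u'}(x)/p_u(x)\bigr)$ with $x\sim p_u$ yields the pointwise inequality directly. What your approach buys is brevity and the elimination of any optimization; what the paper's approach buys is a whole family of upper bounds (one per $q$) together with the certificate that the stated bound is the tightest member of that family---which is precisely what justifies the paper's remark, immediately after the theorem, that this bound strictly improves on the one obtained from convexity of $\mathrm{KL}$ in its second argument.
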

One can alternatively upper bound \looCMI 
% $\mathbb{E}_{u\sim \mathcal{U}, u'\sim \mathcal{U}'} \left[\infdiv{p_{w\given u,z^n}}{p_{w\given u',z^n}}\right]$ 
through the convexity of the Kullback-Leibler divergence (similarly for \fooCMI). However, since the function $(-\ln)$ is strictly convex, it is easy to show that the bound of \Cref{thm: approx} is strictly tighter through Jensen's inequality. Moreover, we opt to use this bound as it allows for more interpretable expressions (\Cref{cor: approx}).
\subsection{Geometry Aware Synthetic Randomization}
\begin{figure}[t!]
    \centering
    \includegraphics[width=0.35\linewidth]{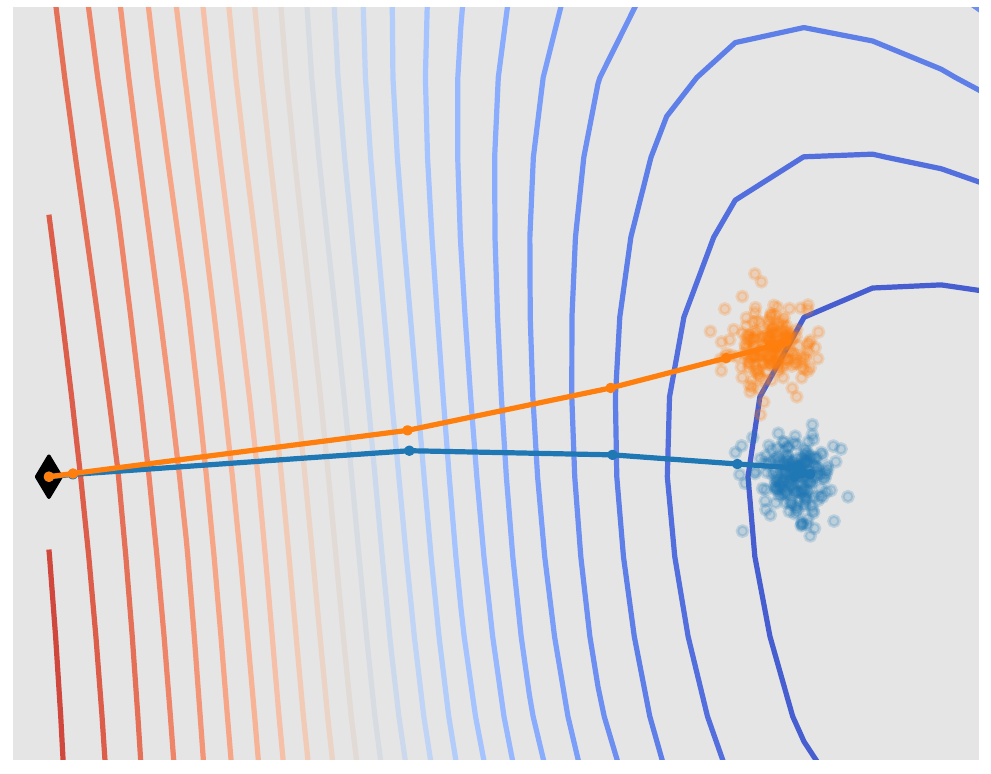}
    \caption{\textbf{Visualization of the quantity involved in the bound.} We plot a section of the loss landscape (contour lines) of a ResNet-18 trained on MIT-67 using the technique of \cite{li2018visualizing}. In orange and blue we show the projection of training paths obtained by removing two different samples from the dataset. Equation \eqref{eq: pred-optim-stability-bound} bounds the test error as a function of the distance between the final points of the two paths, normalized by amount of noise added (displayed by the cloud around the final weights). Note that if the loss landscape is flat we can add more noise, therefore obtaining a better bound (\Cref{sec:local}). If the optimization algorithm is stable, the two paths will remain close at all times, which will also improve the final bound (\Cref{sec:sgd}).}
    \label{fig:training_path}
\end{figure}
% \begin{wrapfigure}{R}{0.56\textwidth}
% % \begin{minipage}{0.56\textwidth}
%     \centering
%     \includegraphics[width=0.57\linewidth]{pictures/training_path.pdf}
%     \caption{\textbf{Visualization of the quantity involved in the bound.} We plot a section of the loss landscape (contour lines) of a ResNet-18 trained on MIT-67 using the technique of \cite{li2018visualizing}. In orange and blue we show the projection of training paths obtained by removing two different samples from the dataset. Equation \eqref{eq: pred-optim-stability-bound} bounds the test error as a function of the distance between the final points of the two paths, normalized by amount of noise added (displayed by the cloud around the final weights). Note that if the loss landscape is flat we can add more noise, therefore obtaining a better bound (\Cref{sec:local}). If the optimization algorithm is stable, the two paths will remain close at all times, which will also improve the final bound (\Cref{sec:sgd}).}
%     \label{fig:training_path}
% % \end{minipage}
% \end{wrapfigure}
\label{sec:synt-rand}
% \begin{equation}
%     I(W;U|\tilde{Z} = \tilde{z}) \leq -\sum_{u\in [n+1]} \frac{1}{n+1} \ln \left(\sum_{u'\in [n+1]} \mathrm{exp}\left\lbrace -\sigma^2 \left\lVert w(u)-w(u')\right\rVert_2^2\right\rbrace \right) + \ln(n+1)
% \end{equation}
% , and 
% \begin{equation}
%      I(W;U|\tilde{Z} = \tilde{z}) \geq -\sum_{u\in [n+1]} \frac{1}{n+1} \ln \left(\sum_{u'\in [n+1]} \mathrm{exp}\left\lbrace -\frac{\sigma^2}{2} \left\lVert w(u)-w(u')\right\rVert_2^2\right\rbrace \right) + \ln(n+1)
% \end{equation}
Theorems \ref{thm: loocmi_bound} and \ref{thm: foocmi_bound} are valid in theory, but the outputs of algorithms used in practice are deterministic given a fixed input, and so we do not have a distribution of weights or predictions. This means that \eqref{eq: LOOCMI} and \eqref{eq: FOOCMI} are degenerate. Even if we vary the random seed for neural networks trained with SGD, and run the algorithm for each dataset and random seed, we obtain a set of discrete distributions with unequal supports. We alleviate this issue by taking measures similar to the ones made in \cite{haghifam2020, neu21, harutyunyan2021estimating}, and also previously in related contexts in \cite{achille2018emergence,dziugaite2017computing,achille2019information}. In particular, we add noise to the output of a deterministic algorithm, and use the now stochastic algorithm to bound the information in the weights and predictions. We avoid adding isotropic Gaussian noise for the weight-based bounds, as changing the values of some weights may have little effect on the loss compared to others. Therefore, we add noise while taking into account the geometry of the loss landscape. 

Specifically, let $\mathcal{A}$ be deterministic algorithm, and let $\mathcal{A}_{\Sigma} (z_{-U}^{n-1}) = \mathcal{A}(z_{-U}^{n-1}) + N$, where $N \sim \mathcal{N}(\mathbf{0}, \Sigma)$. A choice of $\Sigma$ that would incorporate the notion that weights have a varying effect on the loss is $\Sigma = \mathrm{diag}(\alpha_1,\ldots, \alpha_K)$ with $\alpha_k > 0$ for all $k$ and $\alpha_i$ not necessarily equal to $\alpha_j$ for $i\neq j$.  Similarly, we can add noise to predictions made by a deterministic algorithm as $h_{\sigma}(z_{-U}^{n-1}, x) = h(z_{-U}^{n-1},x) + M$, where $M\sim\mathcal{N}(\mathbf{0}, \sigma^2I)$. The generalization bounds derived for $\mathcal{A}_{\Sigma}$ or $h_\sigma$ are not directly applicable to $\mathcal{A}$ and $h$, but if certain  Lipschitz continuity or convexity assumptions hold for the loss function $\ell$, it is possible to derive bounds for deterministic algorithms from the bounds of their noisy versions (\Cref{thm: geom_stability,thm: func_stability}). We begin by applying Theorem \ref{thm: approx} to get interpretable expressions for the conditional mutual information terms. 
\begin{corollary}
\label{cor: approx}
Let $\mathcal{A}$ be a deterministic algorithm with prediction function $h$. Let $w_{-i} = \mathcal{A}(z_{-i}^{n-1})$ and $h_{-i} = h(z_{-i}^{n-1}, x^n)$ be the output and predictions respectively when $i^{\mathrm{th}}$ sample is removed from the training set. Using Theorem \ref{thm: approx}, we obtain
\begin{align}
\label{eq:infro-optim-stability-bound}
    \mathrm{loo\text{-}CMI}(\mathcal{A}_\Sigma,z^n)
    % &\leq H(U)- \frac{1}{n+1}\sum_{i=1}^{n + 1} \ln \left(\sum_{j=1}^{n+1} e^{ - \mathrm{KL}(p(w|u=i) || p_j(w | u = j)) }\right) \nonumber \\
    &\leq  \ln(n) - \frac{1}{n}\sum_{i=1}^{n} \ln \left(\sum_{j=1}^{n} e^{ - \frac{1}{2} (w_{-i} - w_{-j})^T \Sigma^{-1} (w_{-i} - w_{-j})}\right), \\
    \label{eq: pred-optim-stability-bound}
    % &= \ln(n+1) - \frac{1}{n+1}\sum_{i=1}^{n + 1} \ln \left(\sum_{j=1}^{n+1} e^{ - \sum_{k=1}^{K} (w_{i,k} - w_{j,k})^2/\alpha_k}\right)
   \mathrm{floo\text{-}CMI}(h_\sigma,z^n)) &\leq  \ln(n)- \frac{1}{n}\sum_{i=1}^{n} \ln \left(\sum_{j=1}^{n} e^{ - \frac{1}{2} \left\lVert h_{-i} - h_{-j}\right\rVert^2/\sigma^2}\right).
\end{align}
\end{corollary}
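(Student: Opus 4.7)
The plan is to apply Theorem \ref{thm: approx} directly with the specific Gaussian noise models $\mathcal{A}_\Sigma$ and $h_\sigma$, so that the pairwise KL divergences on the right-hand side collapse to closed-form Mahalanobis (respectively Euclidean) distances, after which only bookkeeping remains.

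First, fix $z^n$ and observe that, conditional on $U=u$, the randomness in $W=\mathcal{A}_\Sigma(z_{-u}^{n-1})$ comes entirely from the additive noise $N\sim\mathcal{N}(\mathbf{0},\Sigma)$, so $p_{W\mid z^n,u}=\mathcal{N}(w_{-u},\Sigma)$ and, analogously, $p_{W\mid z^n,u'}=\mathcal{N}(w_{-u'},\Sigma)$. Since both Gaussians share the same covariance, the KL divergence collapses to $\tfrac{1}{2}(w_{-u}-w_{-u'})^T\Sigma^{-1}(w_{-u}-w_{-u'})$, with no log-determinant term. Substituting this into \eqref{eq: weight_kl_bound} and using that $U$ and $U'$ are uniform over $[n]$, the inner expectation becomes $\frac{1}{n}\sum_{j=1}^{n}\exp(\cdot)$ and the outer expectation becomes $\frac{1}{n}\sum_{i=1}^{n}(\cdot)$. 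Pulling the leading $\frac{1}{n}$ out of the logarithm produces an additive $\ln(n)$ term, yielding exactly \eqref{eq:infro-optim-stability-bound}.

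For \eqref{eq: pred-optim-stability-bound} the argument is entirely parallel: conditional on $U=u$, $h_\sigma(z_{-u}^{n-1},x^n)=h_{-u}+M$ with $M\sim\mathcal{N}(\mathbf{0},\sigma^2 I)$, so $p_{h(z_{-u}^{n-1},x)}=\mathcal{N}(h_{-u},\sigma^2 I)$ (with the appropriate product dimension over the $n$ test points). The isotropic Gaussian KL then simplifies to $\tfrac{1}{2\sigma^2}\|h_{-u}-h_{-u'}\|^2$, and substituting into \eqref{eq: func_kl_bound} followed by the same bookkeeping as above delivers the claim.

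There is no real technical obstacle: the heavy lifting was done in Theorem \ref{thm: approx}, and the Gaussian noise guarantees closed-form pairwise KL divergences. The only points requiring mild care are that $\Sigma$ must be positive definite so that $\Sigma^{-1}$ is well-defined, and that the shared covariance across $u$ is precisely what makes the Gaussian KL reduce to a pure squared distance; if the covariance depended on which sample were removed, extra log-determinant and trace terms would appear and the clean expressions in \eqref{eq:infro-optim-stability-bound}--\eqref{eq: pred-optim-stability-bound} would no longer hold.
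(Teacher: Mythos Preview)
Your proposal is correct and matches the paper's approach. The paper does not spell out a separate proof of this corollary; the statement itself already indicates ``Using Theorem~\ref{thm: approx}, we obtain,'' and the intended derivation is exactly what you wrote---substitute the Gaussian conditionals $p_{W\mid z^n,u}=\mathcal{N}(w_{-u},\Sigma)$ (resp.\ $\mathcal{N}(h_{-u},\sigma^2 I)$), use the closed-form KL between Gaussians with identical covariance, and pull the $1/n$ out of the logarithm to produce the $\ln(n)$ term.
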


\subsection{Connections to Stability and Bounds For Deterministic Algorithms}
It is easy to see the connection between classical definitions of stability and the right-hand side of \eqref{eq:infro-optim-stability-bound}, \eqref{eq: pred-optim-stability-bound}. After all, to compute the right-hand side of \eqref{eq:infro-optim-stability-bound} and \eqref{eq: pred-optim-stability-bound}, one only needs to know how the weights and predictions of the algorithm change when two datasets differ by one sample. Based on the observation that components of the weight might have differing degrees of effect on the loss, and the right-hand side of \eqref{eq:infro-optim-stability-bound}, we introduce the idea of "relative" weight stability. Moreover, we also use classical definitions of functional stability, and here we find it useful and intuitive to define two notions of stability: one with respect to the samples that the two datasets share, and one with respect to any other sample. The definitions are as follows.
\begin{definition}
Let $z^{n-1},\hat{z}^{n-1}\in\mathcal{Z}^n$ be datasets such that $z^{n-1}$ and $\hat{z}^{n-1}$ differ by at most one sample. Letting $a = \mathcal{A}(z^{n-1}) - \mathcal{A}(\hat{z}^{n-1})$, then we say that a deterministic algorithm $\mathcal{A}$ has $\epsilon$ weight stability \textit{relative} to positive semi-definite matrix $\Sigma$ if $a^T\Sigma^{-1}a\leq \epsilon^2.$
% \begin{equation}
% (\mathcal{A}(z^{n-1}) - \mathcal{A}(\hat{z}^{n-1}))^T \Sigma^{-1}(\mathcal{A}(z^{n-1}) - \mathcal{A}(\hat{z}^{n-1})) \leq \epsilon^2.
% \end{equation}
% If $\mathcal{A}$ has $\epsilon$ weight stability relative to a positive definite matrix $\Sigma$, then for all $\alpha > 0$
% \begin{equation}
%     \mathrm{gen}(\mathcal{A}_{\alpha^2\Sigma}) \leq \frac{\epsilon}{\alpha}.
% \end{equation}
\end{definition}
\begin{definition}
Let $z^{n-1},\hat{z}^{n-1}\in\mathcal{Z}^n$ be datasets such that $z^{n-1}$ and $\hat{z}^{n-1}$ differ by at most one sample, and without loss of generality, assume they differ at the first sample i.e. $z_{1}\neq \hat{z}_{1}$ and $z_{k} = \hat{z}_{k}$ for all $k\neq 1$. Let $h$ be prediction function $h:\mathcal{Z}^{n-1} \times \mathcal{X} \to \mathbb{R}^d$, then we say $h$ has $\beta$-train stability if
$\left\lVert h(z^{n-1}, z_{k}) - h(\hat{z}^{n-1}, z_{k}) \right\rVert^2 \leq \beta^2$ for all $k\neq 1.$ Moreover, we say that $\beta_1$-test stability if $\left\lVert h(z^{n-1}, x') - h(\hat{z}^{n-1}, x') \right\rVert^2 \leq \beta_1^2$ for all $x'\in\mathcal{X}$.
\end{definition}
Given bounds on the noisy version of a deterministic algorithm, one can derive bounds on the deterministic algorithm itself by making certain Lipschitz continuity assumptions on the loss function. Moreover, we show that given relative weight stability and functional stability of deterministic algorithm, we can add an optimal amount of noise to find a bound on the deterministic algorithm. The following technique is also used by \cite{harutyunyan21, neu21} for the case of isotropic noise. We generalize this result for arbitrary values of the noise covariance. 

\begin{theorem}
\label{thm: geom_stability}
Let $\mathcal{A}: \mathcal{Z}^{n-1}\to \mathcal{W}$ be a deterministic algorithm, and let $\ell:\mathcal{W}\times \mathcal{Z} \to \mathbb{R}_{+}$ be the loss that a set of weights incur on a sample. If $\ell(w,\cdot)$ is $L$-Lipschitz in the weights, then if $\mathcal{A}$ has $\epsilon$-weight stability relative to a positive semi-definite $\Sigma$, then
% \begin{equation}
%     \mathrm{gen}(\mathcal{A}) \leq \mathrm{gen}(\mathcal{A}_\Sigma) + 2L \sqrt{\mathrm{tr}(\Sigma)}.
% \end{equation}
% Moreover, if $\mathcal{A}$ has $\epsilon$ parametric stability relative to $\Sigma$, then   
$\mathrm{gen}(\mathcal{A}) \leq \sqrt{4c_n \epsilon L \sqrt{\mathrm{tr}(\Sigma)}}$.
\end{theorem}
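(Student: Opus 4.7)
The plan is to apply the stochastic machinery of \Cref{thm: loocmi_bound} and \Cref{cor: approx} to a \emph{scaled} noisy surrogate $\mathcal{A}_{t\Sigma}$ of $\mathcal{A}$, bound its loo-CMI via the relative stability hypothesis, bound the perturbation incurred by adding noise via the Lipschitz hypothesis, and finally optimize the noise scale $t>0$ to balance the two error terms.

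First, I would introduce the noisy algorithm $\mathcal{A}_{t\Sigma}(z^{n-1}) = \mathcal{A}(z^{n-1}) + N$ with $N\sim\mathcal{N}(\mathbf{0}, t\Sigma)$. Applying \Cref{thm: loocmi_bound} to $\mathcal{A}_{t\Sigma}$ together with \eqref{eq:infro-optim-stability-bound} of \Cref{cor: approx}, and noting that adding a constant $t$ simply rescales the inverse covariance, gives
\begin{equation*}
\mathrm{loo\text{-}CMI}(\mathcal{A}_{t\Sigma}, z^n) \;\leq\; \ln(n) - \frac{1}{n}\sum_{i=1}^{n}\ln\!\left(\sum_{j=1}^n e^{-\tfrac{1}{2t}(w_{-i}-w_{-j})^T\Sigma^{-1}(w_{-i}-w_{-j})}\right).
\end{equation*}
The datasets $z^{n-1}_{-i}$ and $z^{n-1}_{-j}$ differ by at most one sample, so $\epsilon$-relative weight stability yields $(w_{-i}-w_{-j})^T\Sigma^{-1}(w_{-i}-w_{-j})\leq \epsilon^2$ for every pair $(i,j)$. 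Each summand in $j$ is therefore at least $e^{-\epsilon^2/(2t)}$, so the inner sum is at least $n\,e^{-\epsilon^2/(2t)}$. Substituting gives the clean bound $\mathrm{loo\text{-}CMI}(\mathcal{A}_{t\Sigma}, z^n)\leq \epsilon^2/(2t)$, and hence by \Cref{thm: loocmi_bound},
\begin{equation*}
\mathrm{gen}(\mathcal{A}_{t\Sigma}) \;\leq\; \frac{c_n}{\sqrt{2}}\sqrt{\frac{\epsilon^2}{2t}} \;=\; \frac{c_n\,\epsilon}{2\sqrt{t}}.
\end{equation*}

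Second, I would quantify how much adding noise perturbs the generalization gap. By $L$-Lipschitzness of $\ell(\cdot,z)$, for every fixed $w$ and $z$, $|\mathbb{E}_N[\ell(w+N,z)]-\ell(w,z)|\leq L\,\mathbb{E}\|N\|\leq L\sqrt{\mathbb{E}\|N\|^2}=L\sqrt{t\,\mathrm{tr}(\Sigma)}$ by Jensen. Applying this separately to the empirical and population risks, and taking expectations over the training set, gives $|\mathrm{gen}(\mathcal{A})-\mathrm{gen}(\mathcal{A}_{t\Sigma})|\leq 2L\sqrt{t\,\mathrm{tr}(\Sigma)}$. Combining with the previous display,
\begin{equation*}
\mathrm{gen}(\mathcal{A}) \;\leq\; \frac{c_n\,\epsilon}{2\sqrt{t}} + 2L\sqrt{t\,\mathrm{tr}(\Sigma)}.
\end{equation*}

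Finally, I would optimize over $t>0$: for $f(t)=a/\sqrt{t}+b\sqrt{t}$ with $a=c_n\epsilon/2$ and $b=2L\sqrt{\mathrm{tr}(\Sigma)}$, the minimum is $2\sqrt{ab}=2\sqrt{c_n\epsilon L\sqrt{\mathrm{tr}(\Sigma)}}=\sqrt{4c_n\epsilon L\sqrt{\mathrm{tr}(\Sigma)}}$, attained at $t^\star=a/b$. The main obstacle is the handling of the stability hypothesis across the $(i,j)$ pairs: a naive application of the definition would only bound $w_{-i}-w_{-j}$ through a triangle inequality and introduce an extra factor of $2$ in $\epsilon$ (yielding $\sqrt{8\cdot\ldots}$ instead of $\sqrt{4\cdot\ldots}$). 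Getting the stated constant requires viewing $z^{n-1}_{-i}$ and $z^{n-1}_{-j}$ as differing by one \emph{swap} (equivalently, invoking permutation invariance of $\mathcal{A}$ so the two sets differ by exactly one element), which is consistent with the definition used in the text. The Lipschitz-to-generalization reduction and the final one-variable optimization are otherwise routine.
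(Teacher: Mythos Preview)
Your proposal is correct and follows essentially the same approach as the paper: bound $\mathrm{gen}(\mathcal{A}_{t\Sigma})$ via \Cref{thm: loocmi_bound} and \Cref{cor: approx} together with the stability assumption, bound $|\mathrm{gen}(\mathcal{A})-\mathrm{gen}(\mathcal{A}_{t\Sigma})|$ via Lipschitzness and $\mathbb{E}\|N\|\leq\sqrt{\mathrm{tr}(t\Sigma)}$, and optimize the noise scale. The only cosmetic difference is that the paper parameterizes the scale as $\alpha^2$ rather than $t$, and your discussion of why $z^{n-1}_{-i}$ and $z^{n-1}_{-j}$ differ by a single sample (so that stability applies directly without a triangle inequality) is a point the paper leaves implicit.
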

\begin{theorem}
\label{thm: func_stability}
Let $h:\mathcal{Z}^n \times \mathcal{X}\to \mathbb{R}^d$ be a deterministic prediction function. Assume that $h$ has $\beta$-train stability and $\beta_1$-test stability. Assume the loss function $\ell: \mathcal{R}\times \mathcal{Y}$ is $L$-Lipschitz continuity in the first coordinate, then $\mathrm{gen}(\mathcal{A}) \leq \sqrt{4c_nL \sqrt{nd (n\beta^2 + 2\beta_1^2)}}$.  

\end{theorem}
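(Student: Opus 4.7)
The plan is to mirror Theorem~\ref{thm: geom_stability}: inject Gaussian noise into the deterministic predictor $h$, bound the resulting \fooCMI via Corollary~\ref{cor: approx}, transfer back using Lipschitz continuity, and optimize the noise scale. Concretely, I would define $h_\sigma(z_{-U}^{n-1}, x) := h(z_{-U}^{n-1}, x) + M$ with $M \sim \mathcal{N}(\mathbf{0}, \sigma^2 I_d)$ drawn independently per query. Theorem~\ref{thm: foocmi_bound} gives $\mathrm{gen}(h_\sigma) \leq \tfrac{c_n}{\sqrt{2}}\mathbb{E}_{z^n}\sqrt{\mathrm{floo\text{-}CMI}(h_\sigma, z^n)}$, and Corollary~\ref{cor: approx} bounds \fooCMI by $\ln n - \tfrac{1}{n}\sum_i \ln\sum_j e^{-\|h_{-i}-h_{-j}\|^2/(2\sigma^2)}$. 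Applying Jensen's inequality $\ln(\tfrac{1}{n}\sum_j e^{-x_j}) \geq -\tfrac{1}{n}\sum_j x_j$ to the inner logarithm linearizes this into
$$\mathrm{floo\text{-}CMI}(h_\sigma, z^n) \leq \frac{1}{2n^2\sigma^2}\sum_{i,j}\|h_{-i}-h_{-j}\|^2.$$

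The main combinatorial step is bounding each $\|h_{-i}-h_{-j}\|^2 = \sum_{k=1}^n\|h(z_{-i}^{n-1}, x_k)-h(z_{-j}^{n-1}, x_k)\|^2$ via the stability hypotheses. Since $z_{-i}^{n-1}$ and $z_{-j}^{n-1}$ differ in exactly one position (swapping $z_i$ for $z_j$), for indices $k \notin \{i,j\}$ the sample $(x_k, y_k)$ belongs to both training sets, so $\beta$-train stability contributes $\beta^2$; for $k \in \{i, j\}$ the query $x_k$ is effectively a test point for one of the two datasets, so $\beta_1$-test stability contributes $\beta_1^2$. Thus $\|h_{-i}-h_{-j}\|^2 \leq (n-2)\beta^2 + 2\beta_1^2 \leq n\beta^2 + 2\beta_1^2$ for $i \neq j$, while diagonal terms vanish. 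Plugging back gives $\mathrm{floo\text{-}CMI}(h_\sigma, z^n) \leq (n\beta^2+2\beta_1^2)/(2\sigma^2)$.

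Next, I would transfer back to the deterministic $h$ using $L$-Lipschitz continuity of $\ell$. Each loss evaluation on a noisy prediction differs from the noise-free version by at most $L\|M\|$, and tracking the resulting shift across the full $nd$-dimensional prediction vector underlying $\widehat{\mathcal{L}}(h_\sigma)$ and $\mathcal{L}(h_\sigma)$ (with $\mathbb{E}\|M\| \leq \sigma\sqrt{nd}$) yields $|\mathrm{gen}(h) - \mathrm{gen}(h_\sigma)| \leq 2L\sigma\sqrt{nd}$. Combining,
$$\mathrm{gen}(h) \leq \frac{c_n\sqrt{n\beta^2+2\beta_1^2}}{2\sigma} + 2L\sigma\sqrt{nd}.$$
The elementary inequality $A/\sigma + B\sigma \geq 2\sqrt{AB}$, attained at $\sigma = \sqrt{A/B}$, with $A = \tfrac{c_n}{2}\sqrt{n\beta^2+2\beta_1^2}$ and $B = 2L\sqrt{nd}$, optimizes the noise scale to give $\mathrm{gen}(h) \leq 2\sqrt{AB} = \sqrt{4c_n L \sqrt{nd(n\beta^2+2\beta_1^2)}}$.

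The main obstacle I anticipate is the index bookkeeping that correctly partitions the positions $k\in[n]$ into those where $\beta$-train stability applies versus those where $\beta_1$-test stability applies; the remaining ingredients---Jensen's inequality, the Gaussian KL formula behind Corollary~\ref{cor: approx}, the Lipschitz noise estimate, and a one-variable optimization---are routine.
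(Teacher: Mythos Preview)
Your proposal is correct and follows essentially the same route as the paper: inject isotropic Gaussian noise into the predictions, bound $\mathrm{floo\text{-}CMI}(h_\sigma,z^n)$ via Corollary~\ref{cor: approx} together with the stability assumptions to get $\mathrm{gen}(h_\sigma)\le \tfrac{c_n}{2\sigma}\sqrt{(n-2)\beta^2+2\beta_1^2}$, invoke the Lipschitz transfer lemma $\mathrm{gen}(h)\le \mathrm{gen}(h_\sigma)+2L\sigma\sqrt{nd}$, and optimize over $\sigma$. The only cosmetic difference is that you linearize the log-sum-exp in Corollary~\ref{cor: approx} via Jensen's inequality, whereas the paper (by analogy with its proof of Theorem~\ref{thm: geom_stability}) simply lower bounds each exponential by $e^{-\max_j x_{ij}}$; once every $\|h_{-i}-h_{-j}\|^2$ is capped by the same constant $(n-2)\beta^2+2\beta_1^2$, the two simplifications coincide.
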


\subsection{Bounds For Stochastic Gradient Descent}
\label{sec:sgd}
As we have seen, the quality of our information bound depends on the stability of the optimization algorithm. Stochastic Gradient Descent is the most commonly used method in machine learning, and it is therefore interesting to study the stability of SGD and how it translates into generalization from an information theoretic point of view. Assuming the gradient updates are $\gamma$-bounded for some $\gamma >0$ (see appendix for definition), then one can bound $\left\lVert w_{-i} - w_{-j}\right\rVert$ with respect to the number of iterations of SGD \cite{recht2015}, and obtain a bound on the generalization through the right-hand-side of \eqref{eq:infro-optim-stability-bound}. In particular, we derive the following bound which show how training for a shorter time $T$ and having more bounded updates $\gamma$ both contribute to generalization.
\begin{lemma}
\label{lem: sgd}
Let the gradient update rule be $\gamma$-bounded. Suppose we run SGD for $T$ steps, and we use the same starting value for the weights, then $\mathrm{gen}(\mathcal{A}_{\sigma^2 I}) \leq \sqrt{\frac{c_n^2T^2\gamma}{\sigma^2}}$.
\end{lemma}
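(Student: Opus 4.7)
The plan is to chain three ingredients: the log-sum-exp form of the \looCMI bound from \Cref{cor: approx} specialized to isotropic noise, the classical on-average stability bound for SGD of Hardt--Recht--Singer (cited here as \cite{recht2015}), and \Cref{thm: loocmi_bound}. Concretely, with $\Sigma = \sigma^2 I$, \eqref{eq:infro-optim-stability-bound} gives
\begin{equation*}
\mathrm{loo\text{-}CMI}(\mathcal{A}_{\sigma^2 I}, z^n) \;\leq\; \ln n \;-\; \frac{1}{n}\sum_{i=1}^n \ln\sum_{j=1}^n \exp\!\left(-\frac{\lVert w_{-i}-w_{-j}\rVert^2}{2\sigma^2}\right),
\end{equation*}
so the task reduces to controlling the pairwise distances $\lVert w_{-i}-w_{-j}\rVert$ between the SGD iterates produced from the two leave-one-out datasets $z^{n-1}_{-i}$ and $z^{n-1}_{-j}$.

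Next I would invoke the SGD stability argument: since both runs share the same initialization, use the same random sample ordering, and each gradient step is $\gamma$-bounded (in the sense made precise in the appendix), the two trajectories can only drift apart when they process the single differing sample, and that drift is controlled step-by-step by the $\gamma$-bound. Unrolling the recursion over $T$ iterations in the standard way yields a bound of the form $\lVert w_{-i}-w_{-j}\rVert^2 \leq T^2\gamma$ uniformly in $i,j$. Plugging this uniform bound into each inner sum, the sum is at least $n\,\exp(-T^2\gamma/(2\sigma^2))$, so $\ln\sum_j e^{-\lVert w_{-i}-w_{-j}\rVert^2/(2\sigma^2)} \geq \ln n - T^2\gamma/(2\sigma^2)$, and the $\ln n$ terms cancel to give $\mathrm{loo\text{-}CMI}(\mathcal{A}_{\sigma^2 I}, z^n) \leq T^2\gamma/(2\sigma^2)$.

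Finally I would feed this into \Cref{thm: loocmi_bound}: since the upper bound on \looCMI is deterministic (independent of $z^n$), the outer expectation disappears and
\begin{equation*}
\mathrm{gen}(\mathcal{A}_{\sigma^2 I}) \;\leq\; \frac{c_n}{\sqrt{2}}\sqrt{\frac{T^2\gamma}{2\sigma^2}} \;=\; \sqrt{\frac{c_n^2 T^2 \gamma}{4\sigma^2}},
\end{equation*}
from which the stated form $\sqrt{c_n^2 T^2\gamma/\sigma^2}$ follows up to an absolute constant folded into the definition of $\gamma$-boundedness.

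The main obstacle is step two: the SGD stability results of \cite{recht2015} are typically phrased in expectation over the internal randomness of SGD (sample order), whereas the log-sum-exp reduction in step three wants a uniform, per-realization bound on $\lVert w_{-i}-w_{-j}\rVert^2$. Bridging this gap cleanly either requires reading the $\gamma$-boundedness hypothesis as a pathwise (worst-case) bound on the step size, so that the triangle-inequality unrolling gives a pathwise $O(T\sqrt{\gamma})$ distance, or else using Jensen on $x\mapsto e^{-x/(2\sigma^2)}$ to absorb the expectation into the log-sum-exp so that only the expected squared distance is needed. Either route yields the same final scaling and is where the precise constants in the stated lemma are settled.
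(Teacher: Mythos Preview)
Your approach matches the paper's: bound $\lVert w_{-i}-w_{-j}\rVert$ via the SGD stability recursion, feed it into the log-sum-exp bound \eqref{eq:infro-optim-stability-bound}, then apply \Cref{thm: loocmi_bound}. The obstacle you flag is not actually present: the paper uses Definition~2.4 of \cite{recht2015}, in which $\gamma$-boundedness is the deterministic per-step condition $\lVert G(w)-w\rVert\le\sqrt{\gamma}$, so Lemma~2.5 of \cite{recht2015} gives the pathwise recursion $\delta_{t+1}\le\delta_t+2\sqrt{\gamma}$ and hence $\lVert w_{-i}-w_{-j}\rVert\le 2T\sqrt{\gamma}$ for every realization and every pair $i,j$ (no expectation over the SGD randomness is needed); this $4T^2\gamma$, rather than your $T^2\gamma$, is precisely what yields the stated constant after the two square roots.
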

\subsection{Local interpretation of the bound}
\label{sec:local}
The bound in \Cref{cor: approx} depends on non-local quantities as it requires retraining the model a linear number of times on subsets of the data. Ideally, one wants to bound the generalization without having to retrain. We now provide a qualitative approximation of the bound using local quantities, in order to connect the bound with the geometry of the loss function. Assume that the training algorithm minimizes the loss $\mathcal{A} (z^n) = \arg\min_{w\in\mathcal{W}} \ell(z^n, w)$, and let $w^* = \mathcal{A}(z^n)$ be the minimum to which the algorithm converges, and similarly let $w^*_{-i} = \mathcal{A}(z_{-i}^{n-1})$.
Using influence functions \cite{koh2017}, we can approximate:
\begin{equation}
    w^*_{-i} - w^* \approx \frac{1}{n}H_{w^*}^{-1} \nabla_{w} \ell(z_i, w^*) = g_i,
\end{equation}
where $H_w$ is the hessian of the loss, $\nabla_{w} \ell(z_i, w^*)$ is the gradient of the loss on $z_i$ for $w^*$, and we have introduced the  per-sample gradient at convergence rescaled by the hessian $g_i = \frac{1}{n} H_{w^*}^{-1} \nabla_{w} \ell(z_i, w^*)$.

Using this notation we can approximate:
\begin{align}
\mathrm{loo\text{-}CMI}(\mathcal{A}_\Sigma,z)
    &\leq  \ln(n) - \frac{1}{n}\sum_{i=1}^{n} \ln \left(\sum_{j=1}^{n} e^{ - \frac{1}{2}(w_{-i} - w_{-j})^T \Sigma^{-1} (w_{-i} - w_{-j})}\right),\\
    &\approx \ln(n) - \frac{1}{n}\sum_{i=1}^{n} \ln \left(\sum_{j=1}^{n} e^{ - \frac{1}{2}(g_i - g_j)^T \Sigma^{-1}(g_i - g_j)}\right),\\
    &\stackrel{(*)}{=} \ln(n) - \frac{1}{n}\sum_{i=1}^{n} \ln \left(\sum_{j=1}^{n} e^{ -\frac{1}{2} (g_i - g_j)^T H_{w^*} (g_i - g_j)}\right),\label{eq:local-bound}
\end{align}
where in $(*)$ we have assumed $\Sigma^{-1} = H_{w^*}$, which is the optimal variance of the noise (\Cref{sec:synt-rand}).

From \eqref{eq:local-bound} we see that converging to a flat minimum, that is, having a small norm of the hessian $H_{w^*}$, is expected to correlate to better generalization. This is in accordance with several empirical results connecting convergence to flat minima to better generalization \cite{hochreiter1997flat,dziugaite2017computing,chaudhari2019entropy}. It should be noted that flatness by itself cannot explain generalization, since we can reparameterize the network to have identical predictions (and hence generalization) but arbitrarily large hessian \cite{dinh2017sharp}. Indeed, in \eqref{eq:local-bound} we see that the role of the hessian is mediated by the similarity of the (re-scaled) per-sample gradients at convergence, which can change under reparameterization. In particular, studying flatness by itself is not enough.

\section{Related Work}

Over the past several years, there has been a significant line of research in using information theory both to interpret the behavior of DNNs \cite{achille2018emergence,shwartz2017opening,achille2019information} and to derive generalization bounds \cite{russo16, xu2017, bu2019, negrea2019, steinke2020, haghifam2020, harutyunyan21}, with the earliest works on this line from \cite{russo16,xu2017}. Many of them develop information stability bounds which involve the mutual information between the output of the algorithm and the samples \cite{russo16, xu2017, bu2019, negrea2019} which could degenerate, \emph{e.g.,} if the data is continuous. This was observed in \cite{steinke2020}, which then proposed a new bound based on the conditional mutual information (see Theorem \ref{thm: steinke_gen}) which is non-degenerate. This work was further extended by \cite{haghifam2020, harutyunyan21}. In particular \cite{harutyunyan21} developed bounds based on prediction outputs rather than the algorithm outputs, and its combination with the conditional mutual information framework made it the state-of-the-art in terms of information-theoretic generalization bounds. The two issues identified in our work related to this is one of computability (as the \cite{harutyunyan21} bound might in principle require computation exponential in the size of the dataset) and interpretability; this is the main focus of our work. In particular, our loo-CMI framework can not only be computed more efficiently, but also connects to classical leave-one-out cross validation measures used extensively in practice (see Theorem \ref{thm: loocmi_bound}). Another aspect introduced in \cite{harutyunyan21,neu21} is the application of CMI bounds to deterministic algorithm outputs (as one can only run a finite number of runs of even a randomized algorithm on a dataset). We use this viewpoint in our work, but using a geometry-aware method (see Theorem \ref{thm: geom_stability}), which takes into account the loss landscape in the algorithmic output space. There has also been a line of work connecting SGD to generalization including earlier works in \cite{hardt2016train} and more recently its connection with information theoretic bounds \cite{pensia2018,neu21}. We apply these ideas to the framework of loo-CMI (see Lemma \ref{lem: sgd}). The connection to classical notions of algorithmic stability can also be made using the loo-CMI framework (see Theorem \ref{thm: func_stability}).

\section{Experiments}

\paragraph{Model and datasets.} We now study the behavior of our \looCMI and \fooCMI bounds on real-world image classification tasks. In particular, we fine-tune an off-the-shelf ResNet-18 model pretrained on ImageNet on a set of standard image-classification tasks (see also Table~\ref{tab:results}): MIT-67  \cite{quattoni2009recognizing}, Oxford Pets \cite{parkhi12a}. On both datasets we fine-tune for 10 epochs using stochastic gradient descent with learning rate $\eta=0.05$, momentum $m=0.99$, batch size $B=256$,  and weight decay $\lambda=0.0005$. We compute $w_{-i}/h_{-i}$ by removing sample $i$ from the training set and re-training from scratch. For all the experiments we remove 10 samples one at a time across 3 random seeds and use \cref{cor: approx} to compute the information bounds. We used 2 NVIDIA 1080Ti GPUS and the experiments take 1-2 days.

\paragraph{Non-vacuous bounds.} In \Cref{tab:results}, we compute the \looCMI and \fooCMI bounds using \eqref{eq: loocmi_bound} and \eqref{eq: foocmi_bound}. We show that while the former bound is vacuous, the \fooCMI bound provides non-vacuous generalization bounds on all datasets. This is significant, as obtaining non-vacuous bound for large-scale models remains a challenging problem. The failure of the bound based on \looCMI is expected here, since \looCMI looks directly at the information contained in the weights of the network without considering how this information is used. In large models with millions of parameters, most of the information in the weights do not significantly affect the predictions and should ideally be discarded. This is indeed done by the \fooCMI bound.

\paragraph{Effect of the dataset size.} To show how the quality of the bound changes for different sizes of the dataset, we subsample randomly and without replacement $n$ samples from each dataset. In \Cref{fig:dataset-size}, we plot the resulting generalization gap alongside our generalization bounds as the size of the subsample varies (in terms of fractions of the dataset). We observe that the generalization bound becomes tighter as the size of the training set grows. This is not surprising as we expect the model to become more stable for larger datasets.

\paragraph{Synthetic randomization.} In \Cref{sec:synt-rand}, we introduced artificial Gaussian noise $N(0, \Sigma)$ in order to obtain better bounds. Increasing the noise variance $\Sigma$ in the \looCMI bound, or $\sigma$ in \fooCMI, improves the bound on the CMI, but also increases the test error of the model. Hence, we need to select a value of $\Sigma$ and $\sigma$ that provides a good trade-off between the two. In \Cref{fig:sigma}, we show how changing the noise variance affects each term.

% \paragraph{Bounds for SGD.} Our \looCMI bound relates to the generalization stability bounds based on the stability of SGD, as discussed in \Cref{sec:sgd}. In particular, it suggests the \looCMI bound improves if we employ early stopping, that is, if we train for a reduced number of epochs $T$. We confirm this in \Cref{fig:epoch}. However, training for a reduced number of epochs lead to underfitting. 

\begin{table}[t!]
    \centering
    
    \begin{tabular}{rcccc}\toprule
    Dataset & Train Error & Test Error & 
    \looCMI & \fooCMI \\
    \midrule    MIT-67 \cite{quattoni2009recognizing} & 0 & 0.339 & 9.65 & 0.493 \\
    % Ox. Flowers \cite{Nilsback06} & 0 & 0.254 & 6.92 & 0.435\\
    Ox. Pets \cite{parkhi12a} & 0 & 0.134 & 8.51 & 0.514\\
    \bottomrule
    \end{tabular}
    \vspace{.5em}
    \caption{\textbf{Generalization bounds on image-classification tasks.} We report the train and test error (fraction of wrong predictions) and our generalization bounds on several image classification dataset ($\sigma=0.1$).
    }
    \label{tab:results}
\end{table}

\begin{figure}[t!]
    \centering
    \includegraphics[width=0.7\linewidth]{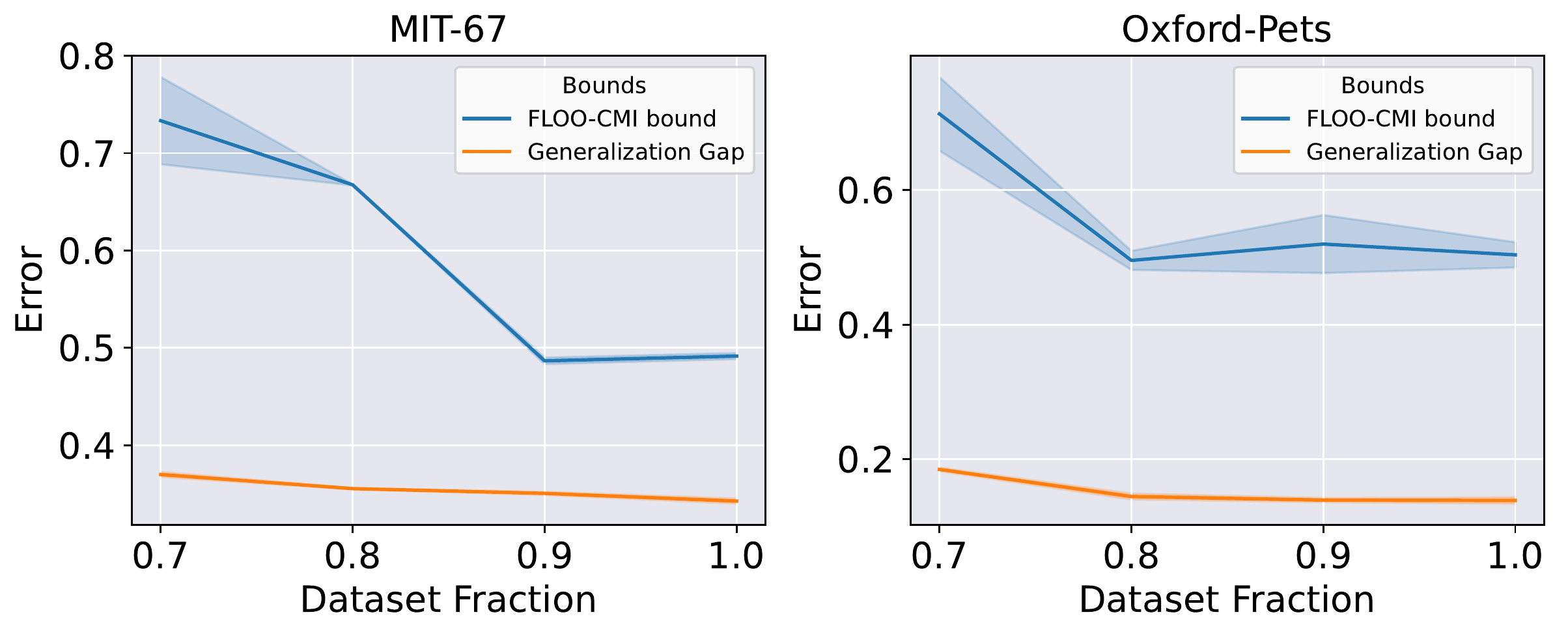}
    \caption{\textbf{\fooCMI bound and dataset size}. For different datasets, we plot of the \fooCMI bound as the fraction of samples used in training increases. We observe that larger datasets have better generalization bounds. This is expected as stability improves with larger datasets.}
    \label{fig:dataset-size}
\end{figure}

\begin{figure}[h!]
    \centering
    \includegraphics[width=0.7\linewidth]{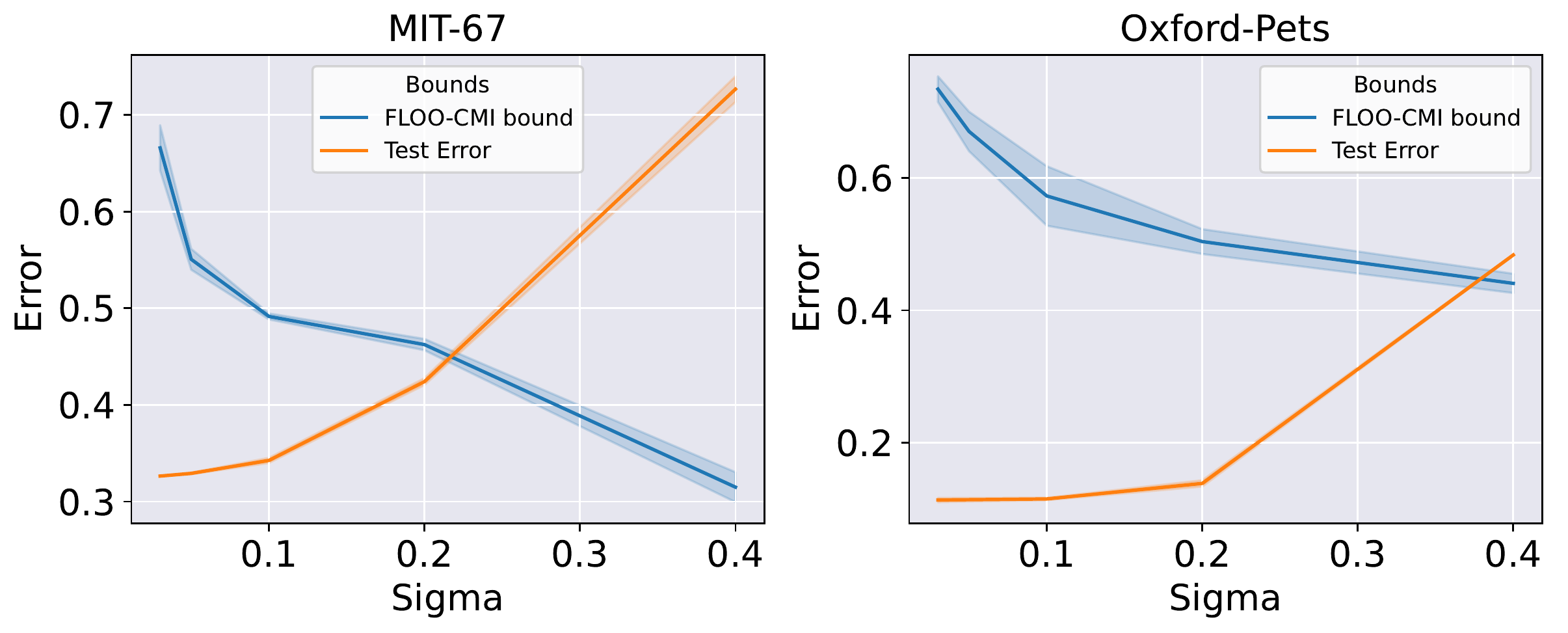}
    \caption{\textbf{Effect of varying $\sigma$.} Increasing the value of $\sigma$ in \eqref{eq: foocmi_bound} improves the \fooCMI bound (blue line) but also makes $h_\sigma$ increasingly different from $h$ (as shown by the increased test error, orange line), thus making bounding the generalization of the original algorithm $h$ given $h_\sigma$ more difficult.}
    \label{fig:sigma}
\end{figure}

% \begin{figure}[ht!]
%     \centering
%     % \includegraphics{}
%     \caption{Plot for different epochs}
%     \label{fig:epoch}
% \end{figure}
\section{Proofs}
\label{sec: proofs}
\subsection{Proof of Lemma 1}
We begin by restating the result of Lemma \ref{lem: loocv_bound}.
\begin{lemma*}
\label{lem: loocv_bound}
Let $\ell: \mathcal{W} \times \mathcal{Z} \to [0,1]$ be a bounded loss function, then for all $t > 0$, $w\in\mathcal{W}$, and $z^n\in\mathcal{Z}$,
\begin{equation}
\label{eq: loocv_bound}
    \mathbb{E}_{u\sim \mathcal{U}} \left[ \mathrm{exp}\left( t \cdot \left( \mathrm{loo\text{-}cv}(w,z^n, u)\right)\right) \right] \leq \mathrm{exp}\left(\frac{t^2c_n^2}{8}\right), \quad \quad   c_n = \frac{n}{n-1}.
\end{equation}
\end{lemma*}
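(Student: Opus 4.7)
My plan is to exploit the fact that, once $w$ and $z^n$ are fixed, $\mathrm{loo\text{-}cv}(w,z^n,U)$ is just a bounded function of the single uniform random variable $U$, so the whole statement reduces to Hoeffding's lemma for a bounded scalar random variable.

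First I would rewrite the $\mathrm{loo\text{-}cv}$ quantity in a more convenient form. Setting $a_i := \ell(w,z_i) \in [0,1]$ and $S := \sum_{i=1}^n a_i$, one has
\begin{equation*}
\mathrm{loo\text{-}cv}(w,z^n,u) \;=\; \frac{1}{n-1}\sum_{i\neq u} a_i \;-\; a_u \;=\; \frac{S - a_u}{n-1} - a_u \;=\; \frac{n}{n-1}\left(\bar a - a_u\right) \;=\; c_n\bigl(\bar a - a_u\bigr),
\end{equation*}
where $\bar a = \frac{1}{n}\sum_{i=1}^n a_i = \mathbb{E}_{u\sim\mathcal{U}}[a_u]$. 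In particular this shows (as a sanity check) that $\mathbb{E}_{u\sim\mathcal{U}}[\mathrm{loo\text{-}cv}(w,z^n,u)] = 0$.

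Next I would observe that the target inequality is then equivalent to showing
\begin{equation*}
\mathbb{E}_{u\sim\mathcal{U}}\!\left[\exp\bigl(tc_n\,(\bar a - a_u)\bigr)\right] \;\le\; \exp\!\left(\frac{(tc_n)^2}{8}\right),
\end{equation*}
i.e.\ that the centered random variable $\bar a - a_U$ is $(1/2)$-sub-Gaussian. Since $a_U$ is a random variable taking values in $[0,1]$ (hence of range at most $1$), this is exactly Hoeffding's lemma applied to the bounded variable $-a_U \in [-1,0]$ with parameter $s := tc_n$. Plugging in gives the desired bound with constant $(tc_n)^2/8$.

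There is essentially no obstacle beyond the two-line algebraic rewriting of $\mathrm{loo\text{-}cv}$; the work is entirely done by Hoeffding's lemma once one recognizes that averaging over $u$ (with $w,z^n$ fixed) turns $\mathrm{loo\text{-}cv}$ into an affine function of a single bounded random variable, rather than a sum of many. The factor $c_n = n/(n-1)$ is exactly the price for centering, since $\mathrm{loo\text{-}cv}$ scales the deviation $\bar a - a_u$ by $n/(n-1)$. No further concentration machinery (martingales, McDiarmid, etc.) is needed, because the randomness is one-dimensional.
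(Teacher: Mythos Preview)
Your proof is correct. The algebraic identity $\mathrm{loo\text{-}cv}(w,z^n,u)=c_n(\bar a-a_u)$ is exactly right, and once you have it, Hoeffding's lemma applied to the bounded variable $a_U\in[0,1]$ (with $s=tc_n$) gives the claimed inequality immediately.

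Your route is genuinely different from the paper's. The paper does not invoke Hoeffding's lemma; instead it writes
\[
\mathbb{E}_{u}\bigl[\exp(t\,f(\ell,u))\bigr]=\frac{1}{n}\exp\!\Bigl(\tfrac{t}{n-1}\sum_i \ell_i\Bigr)\sum_j e^{-c\ell_j},\qquad c=tc_n,
\]
and then maximizes this expression over $\ell\in[0,1]^n$ directly: it computes stationary points, reduces to an optimization in the counts $(m,k)$ of coordinates equal to $0$ or $1$, relaxes to real $(m,k)$, solves by Lagrange multipliers, and obtains the sharper bound
\[
\mathbb{E}_{u}\bigl[\exp(t\,f(\ell,u))\bigr]\;\le\;\exp\!\Bigl(-1+\tfrac{c}{1-e^{-c}}+\log\tfrac{1-e^{-c}}{c}\Bigr),
\]
which it then relaxes to $\exp(c^2/8)$ via a Taylor-expansion argument. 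So the paper effectively re-derives (a refinement of) Hoeffding's lemma from scratch; the payoff is the intermediate Bennett-type inequality above, which is strictly tighter than $c^2/8$ for every $c>0$. Your argument is much shorter and fully sufficient for the lemma as stated, but it does not produce that sharper intermediate bound.
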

\textbf{Proof}: To show this, let $\ell\in[0,1]^n$ and $t > 0$. For $u\in [n]$, we define: 
\begin{equation}
    f(\ell, u) = \frac{1}{n-1}\sum_{i\neq u} \ell_i - \ell_u.
\end{equation}
If we obtain a bound on $ \mathbb{E}_{u\sim \mathcal{U}} \left[\mathrm{exp}(t\cdot f(\ell, u))\right]$ for every $\ell\in[0,1]^n$, then we also obtain a bound on $\mathbb{E}_{u\sim \mathcal{U}} \left[ \mathrm{exp}\left( t \cdot \left( \mathrm{loo\text{-}cv}(w,z^n, u)\right)\right) \right]$ for all $z^n\in\mathcal{Z}^n, w\in\mathcal{W}$. Now, 
% \begin{lemma}
% Let $c := t(1+\frac{1}{n-1})$. Finding $\sup_{\ell\in[0,1]^n}\mathbb{E}_{u\sim \mathcal{U}} \left[\mathrm{exp}(t\cdot f(\ell, u))\right]$ is equivalent to finding the solution of: 
% \begin{align}
% \max_{m,k \geq 0}&\quad \frac {t}{n-1} k + \frac{m + k}{n} \ln(b)\\
% \text{s.t.} & \quad m+k \leq n, 
% \end{align}  
% where 
% \begin{equation}
%     b = \frac{1}{n} \sum_{i=1}^{n} e^{-c \ell_i}.
% \end{equation}
% \end{lemma}
\begin{align}
    \mathbb{E}_{u\sim \mathcal{U}} \left[\mathrm{exp}(t\cdot f(\ell, u))\right] 
    % &= \frac{1}{n}\sum_{j = 1}^{n} \exp\left(t \left(\frac{1}{n-1} \sum_{i \ne j} \ell_i - \ell_j \right) \right) \\
    &= \frac{1}{n} \exp\left( \frac{t}{n-1} \sum_{i=1}^{n} \ell_i \right) \sum_{j=1}^{n} \exp\left(-c\ell_j\right) =: \frac{1}{n} g(\ell).
\end{align}
% a_t(\ell) &= \frac{1}{n}\sum_{j = 1}^{n} \exp\left(t \left(\frac{1}{n-1} \sum_{i \ne j} \ell_i - \ell_j \right) \right) \\
% % &= \frac{1}{n}\sum_{j = 1}^{n+1} \exp\left(t \left(\frac 1n \sum_{i} \ell_i - \ell_j - \frac 1n \ell_j\right) \right) \\
% &= \frac{1}{n} \exp\left( \frac{t}{n-1} \sum_{i=1}^{n} \ell_i \right) \sum_{j=1}^{n} \exp\left(-s\ell_j\right) \\
% &=: \frac{1}{n} f(\ell), 
% \end{align}
where $c := tc_n = t(1+\frac{1}{n-1})$. Since the maximum of $g$ must be at a stationary point, then 
\begin{align}
\label{eq: partial_der_g}
\frac{\partial  g(\ell)}{\partial \ell_j} &= \frac{t}{n-1}\exp\left( \frac t{n-1} \sum_{i=1}^{n} \ell_i \right) \left( \sum_{i=1}^{n} e^{-c \ell_i} - n e^{-c \ell_j}\right), 
\end{align}
Setting \eqref{eq: partial_der_g} to $0$, then the right-hand side expression of \eqref{eq: partial_der_g} must be $0$.  This implies that $\ell_j$ is either at the extremities or we must have a value of $\ln (b)/c$, where 
% \begin{equation}
%     \sum_{i=1}^{n} e^{-c \ell_i} - n e^{-c \ell_j} = 0.
% \end{equation}
% Hence, for all $j$, we must have that $x_j \in \left\lbrace 0,1, \ln (b)/c\right\rbrace$, where 
\begin{equation}
\label{eq: def_b}
    b = \frac{1}{n} \sum_{i=1}^{n} e^{-c \ell_i}.
\end{equation}
Let $m$ be the number of $\ell_j$ that are $0$, and $k$ the number of $\ell_j$ that are $1$. Solving for $b$ using \eqref{eq: def_b}, we get that
\begin{align}
\label{eq: b_solv}
b = \frac{m+ke^{-c}}{m+k}.
\end{align}
We find that $b$ depends on $\ell$ only through the number of $0$s and $1$s in $\ell$. Writing $g(\ell)$ with respect to $m, k$ and $b$, we obtain
\begin{align}
g(\ell) 
% &= \exp\left( \frac tn \sum_{i=1}^{n+1} \ell_i \right) \sum_{j=1}^{n+1} e^{-s\ell_j} \\
% &= (n+1)\exp\left( \frac tn \left(k - \frac{1}{s} \ln(b)(n+1-(m+k))\right) \right)b \\
&= n \exp \left( \frac{t}{n-1} k + \frac{m + k}{n} \ln(b)\right),
\end{align}
Hence, finding the maximum of $g(\ell)$ is equivalent to finding the solution for the following optimization problem. 
\begin{align}
\label{eq: optim1}
\max_{m,k \geq 0}&\quad \frac {t}{n-1} k + \frac{m + k}{n} \ln(b)\\
\text{s.t.} & \quad m+k \leq n. \nonumber
\end{align}  
Since $m$ and $k$ are bounded non-negative integers, we can find the solution of the above maximization through a brute-force search. However, we can relax the constraints of \eqref{eq: optim1}, and allow $m,k$ to take non-integer values over $[0, n]$. Using the method of Lagrange multipliers, we have that: 
\begin{align}
\label{eq: lagrange}
\min_{\lambda \ge 0} \max_{m,k \ge 0} L(m,k,\lambda) &= \min_{\lambda \ge 0} \max_{m,k \ge 0}  \left(\frac{t}{n-1} k + \frac{m+k}{n} \ln(b) - \lambda (m+k-n)\right).
\end{align}

% We first compute
% \begin{align*}
% \frac{\partial}{\partial J} Y &= \frac{K(1-e^{-z})}{(J+K)^2} \\
% \frac{\partial}{\partial K} Y &= -\frac{J(1-e^{-z})}{(J+K)^2},
% \end{align*}
The partial derivative of \eqref{eq: lagrange} with respect to $m$ and $k$ are given by: 
\begin{align}
\label{eq: nabla_m}
\nabla_m L &= \frac{k(1-e^{-c})}{n(m+ke^{-c})} + \frac{\ln(b)}{n} - \lambda, \\
\label{eq: nabla_k}
\nabla_k L &= \frac{t}{n-1} - \frac{m(1-e^{-c})}{n(m+ke^{-c})} + \frac{\ln(b)}{n} - \lambda. 
\end{align}
Setting the partial derivatives of \eqref{eq: nabla_m} and \eqref{eq: nabla_k} to $0$ yields
% \begin{align*}
% \frac tn - \frac{J(1-e^{-z})}{(n+1)(J+Ke^{-z})} &= \frac{K(1-e^{-z})}{(n+1)(J+Ke^{-z})}.
% \end{align*}
% Manipulating this expression shows
\begin{align}
% Y = \frac{J+Ke^{-z}}{J+K} &= \frac{1-e^{-z}}{z} \\
m &= \frac{1-e^{-c}-ce^{-c}}{c-1+e^{-c}}k \\
m + k&= \left(\frac{c(1-e^{-c})}{c-1+e^{-c}} - 1\right)k.
\end{align}
Substituting these expressions in \eqref{eq: lagrange}, we see that we now need to find
\begin{align*}
\max_{k \ge 0} & \quad \frac {t}{n-1} k \left(1 + \frac{(1-e^{-c})}{c-1+e^{-c}}\ln \left(\frac{1-e^{-c}}{c}\right) \right) \\
\text{s.t.} & \quad k \le \frac{c-1+e^{-c}}{c(1-e^{-c})}n.
\end{align*}
Since $1 + \frac{(1-e^{-c})}{c-1+e^{-c}}\ln \left(\frac{1-e^{-c}}{c}\right) > 0$ for all $c > 0$, this expression is maximized by the largest value $k$ can take. Hence, the we obtain our maximum for
\begin{align*}
k^* &= \left(-1 + \frac{c}{c(1-e^{-c})}\right)n, \\
m^* &= n-k, \\
b^* &= \frac{1-e^{-c}}{c}.
\end{align*}
Plugging into our expression, we have
\begin{align}
g(\ell) &\leq n \exp \left( -1 + \frac{c}{1-e^{-c}} + \log\left(\frac{1-e^{-c}}{c}\right) \right),
\end{align}
Recalling that $\mathbb{E}_{u\sim \mathcal{U}} \left[\mathrm{exp}(t\cdot f(\ell, u))\right] = 1/n g(\ell)$, we obtain
\begin{equation}
\label{eq: almost}
    \mathbb{E}_{u\sim \mathcal{U}} \left[\mathrm{exp}(t\cdot f(\ell, u))\right] \leq \exp \left( -1 + \frac{c}{1-e^{-c}} + \log\left(\frac{1-e^{-c}}{c}\right)\right)
\end{equation}
Using the Taylor expansion of the exponent in the right-hand side of \eqref{eq: almost}, we have that: 
\begin{equation}
    -1 + \frac{c}{1-e^{-c}} + \log\left(\frac{1-e^{-c}}{c}\right) \leq \frac{c^2}{8}.
\end{equation}
Finally, we obtain 
\begin{align}
    \mathbb{E}_{u\sim \mathcal{U}} \left[ \mathrm{exp}\left( t \cdot \left( \mathrm{loo\text{-}cv}(w,z^n, u)\right)\right) \right] &\leq \mathbb{E}_{u\sim \mathcal{U}} \left[\mathrm{exp}(t\cdot f(\ell, u))\right] \\
    &\leq \mathrm{exp}(\frac{c^2}{8}).
\end{align}
Recalling that $c = tc_n$, we obtain the desired result. 
\subsection{Proof of Theorem \ref{thm: loocmi_bound}}
We begin restating the Theorem. 
\begin{theorem*}
Let $\mathcal{A}: \mathcal{Z}^{n-1} \to \mathcal{W}$ be a training algorithm. Let $\ell: \mathcal{W} \times \mathcal{Z} \to [0,1]$ be a bounded loss function, then 
\begin{equation}
    \label{eq: loocmi_bound}
    \mathrm{gen}(\mathcal{A})\leq\frac{c_n}{\sqrt{2}} \mathbb{E}_{z^n\sim \mathcal{P}^n} \sqrt{\mathrm{loo\text{-}CMI}(\mathcal{A},z^n)},\end{equation}
\end{theorem*}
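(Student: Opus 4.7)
The plan is to first rewrite the generalization gap as $\mathrm{gen}(\mathcal{A}) = |\mathbb{E}_{Z^n, U, W}[\mathrm{loo\text{-}cv}(W, Z^n, U)]|$, where $U \sim \mathcal{U}$ is independent of $Z^n \sim \mathcal{P}^n$ and $W = \mathcal{A}(Z_{-U}^{n-1})$. Because the samples are i.i.d.\ and $U \perp Z^n$, the held-out sample $Z_U$ is distributed as $\mathcal{P}$ and is independent of $Z_{-U}^{n-1}$, hence of $W$; this gives $\mathbb{E}[\ell(W, Z_U)] = \mathbb{E}[\mathcal{L}(W)]$, while the symmetry $Z_{-U}^{n-1} \stackrel{d}{=} Z^{n-1}$ identifies this expectation with the $\mathbb{E}[\mathcal{L}(\mathcal{A}(Z^{n-1}))]$ appearing in $\mathrm{gen}(\mathcal{A})$. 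The corresponding identity for the empirical loss is immediate from the definition of $\hat{\mathcal{L}}$.

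\textbf{Step 2: Donsker--Varadhan conditioned on $z^n$.} I would then condition on $Z^n = z^n$ and apply the Gibbs/Donsker--Varadhan variational inequality to the pair $(p_{W,U|z^n},\, p_{W|z^n}\,p_{U|z^n})$ with test function $f(w,u) = t\cdot \mathrm{loo\text{-}cv}(w, z^n, u)$. For any $t > 0$,
\begin{equation*}
t\cdot \mathbb{E}[\mathrm{loo\text{-}cv}(W, z^n, U)\mid z^n] \le \infdiv{p_{W,U|z^n}}{p_{W|z^n}\,p_{U|z^n}} + \log \mathbb{E}_{p_{W|z^n}\,p_{U|z^n}}\!\left[\exp\bigl(t\cdot \mathrm{loo\text{-}cv}(W, z^n, U)\bigr)\right].
\end{equation*}
The KL term is exactly $I(W; U \mid z^n) = \mathrm{loo\text{-}CMI}(\mathcal{A}, z^n)$. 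Under the product measure, $W \perp U$ and $U$ remains uniform on $[n]$, so \Cref{lem: loocv_bound} applied pointwise in $w$ and then averaged over $W$ bounds the log-MGF term by $t^2 c_n^2/8$.

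\textbf{Step 3: Optimize in $t$, symmetrize, and take the outer expectation.} Dividing by $t$ and minimizing $\tfrac{\mathrm{loo\text{-}CMI}(\mathcal{A},z^n)}{t} + \tfrac{t c_n^2}{8}$ over $t > 0$ (optimum at $t_\star = \sqrt{8\,\mathrm{loo\text{-}CMI}(\mathcal{A},z^n)}/c_n$) yields
\begin{equation*}
\mathbb{E}[\mathrm{loo\text{-}cv}(W, z^n, U)\mid z^n] \le \frac{c_n}{\sqrt{2}}\sqrt{\mathrm{loo\text{-}CMI}(\mathcal{A}, z^n)}.
\end{equation*}
To handle the absolute value in the definition of $\mathrm{gen}(\mathcal{A})$, I would rerun the argument with the loss $\tilde\ell = 1 - \ell \in [0,1]$: this negates $\mathrm{loo\text{-}cv}$ pointwise but leaves \Cref{lem: loocv_bound} (and therefore the log-MGF bound) applicable, giving the same inequality for $-\mathbb{E}[\mathrm{loo\text{-}cv}\mid z^n]$. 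Taking $\mathbb{E}_{Z^n \sim \mathcal{P}^n}$ of both sides and combining with Step 1 then produces \eqref{eq: loocmi_bound}.

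\textbf{Main obstacle.} The conceptually delicate step is the reduction in Step 1: the identity $\mathbb{E}[\ell(W, Z_U)] = \mathbb{E}[\mathcal{L}(W)]$ hinges on the i.i.d.\ structure of $Z^n$ together with $U \perp Z^n$, which makes $Z_U$ behave as a fresh $\mathcal{P}$-distributed test sample independent of $W$ even though it was drawn from the same pool. After that the remainder is the standard sub-Gaussian / information-theoretic tail-chaining pattern, the key technical input being the loo-cv concentration inequality of \Cref{lem: loocv_bound}, which replaces the Hoeffding-type sub-Gaussian estimate used in the classical $S$-based CMI derivations. Conditioning on $z^n$ (rather than only on $\tilde{z}^{2n}$) is what allows the expectation over $Z^n$ to remain outside the square root in the final bound.
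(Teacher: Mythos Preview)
Your proposal is correct and follows essentially the same route as the paper. The paper packages your Step~2--3 into a single invocation of the decoupling lemma of Xu and Raginsky (Lemma~1 in \cite{xu2017}): it first records that $\mathbb{E}_{u}[\mathrm{loo\text{-}cv}(w,z^n,u)]=0$ for every fixed $w$, then applies that lemma with $A=W$, $B=U$ and $\sigma^2=c_n^2/4$ (supplied by \Cref{lem: loocv_bound}) to obtain the two-sided bound $|\mathbb{E}[\mathrm{loo\text{-}cv}\mid z^n]|\le (c_n/\sqrt{2})\sqrt{I(W;U\mid z^n)}$ directly, whereas you unfold the same argument via Donsker--Varadhan and recover the missing sign by the $\tilde\ell=1-\ell$ trick; these are the same computation.
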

We state a series of lemmas and definitions that are essential for the proof. 
\begin{lemma}
\label{lem: zero_avg}
Let $z^n\in\mathcal{Z}^n$, and let $w\in\mathcal{W}$, then 
\begin{equation}
   \mathbb{E}_{u\sim\mathcal{U}} \left[\mathrm{loo\text{-}cv}(z^n,w, u)\right] =0.
\end{equation}
\begin{proof}
We have that: 
\begin{align}
    n\cdot \mathbb{E}_{u\sim\mathcal{U}} \left[\mathrm{loo\text{-}cv}(z^n,w, u)\right] &= \sum_{u=1}^{n}\left(\frac{1}{n-1}\sum_{i\neq u} \ell(w,z_i) - \ell(w,z_u) \right), \\
    &=  \frac{1}{n-1} \sum_{u=1}^{n} \sum_{i\neq u}\ell(w,z_i) - \sum_{u=1}^{n} \ell(w,z_u), \\
    & =\frac{1}{n-1}\sum_{i=1}^{n}(n-1)\ell(w,z_i) - \sum_{u=1}^{n} \ell(w,z_u), \\
    &=0.
\end{align}
\end{proof}
\end{lemma}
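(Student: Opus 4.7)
The plan is to verify the identity by a direct algebraic expansion: unfold the expectation over $u \sim \mathcal{U}$ using the definition of $\mathrm{loo\text{-}cv}$, swap the order of the resulting double summation, and then observe that each term $\ell(w, z_i)$ contributes with net coefficient zero. Nothing probabilistic or analytic is needed beyond the fact that $\mathcal{U}$ is uniform on $[n]$, so each index $u$ receives weight $1/n$.

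Concretely, I would first write
\begin{equation*}
n \cdot \mathbb{E}_{u\sim\mathcal{U}}\!\left[\mathrm{loo\text{-}cv}(w, z^n, u)\right] \;=\; \sum_{u=1}^{n}\!\left(\frac{1}{n-1}\sum_{i\neq u}\ell(w,z_i)\;-\;\ell(w,z_u)\right),
\end{equation*}
since multiplying by $n$ cancels the uniform normalization. Then I would swap summation in the positive term: for a fixed $i\in[n]$, the inner expression $\ell(w,z_i)$ appears for exactly the $n-1$ outer indices $u \neq i$, giving
\begin{equation*}
\sum_{u=1}^{n}\sum_{i\neq u}\ell(w,z_i) \;=\; \sum_{i=1}^{n}(n-1)\,\ell(w,z_i),
\end{equation*}
and the factor $n-1$ cancels the denominator. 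The negative part $\sum_{u=1}^n \ell(w, z_u)$ is clearly equal to $\sum_{i=1}^n \ell(w, z_i)$ by renaming the dummy index, so the two sums coincide and their difference is $0$. Dividing back by $n$ yields the claimed expectation.

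There is essentially no obstacle here: the statement is a combinatorial identity about the leave-one-out construction, and the proof is a two-line exchange-of-summation argument. The only thing to be careful about is making sure the bookkeeping of how many times each $\ell(w, z_i)$ is counted matches the factor $1/(n-1)$ in the definition of $\mathrm{loo\text{-}cv}$; this is precisely what makes the positive and negative contributions balance exactly rather than approximately. The lemma then serves as a zero-mean centering fact that feeds into the Donsker--Varadhan style manipulation used in the proof of Theorem~\ref{thm: loocmi_bound}, where pairing it with the sub-Gaussian bound of Lemma~\ref{lem: loocv_bound} is what produces a bound on the generalization gap rather than on some shifted quantity.
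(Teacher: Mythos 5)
Your proof is correct and follows exactly the same route as the paper's: multiply out the uniform expectation, exchange the order of summation so each $\ell(w,z_i)$ is counted $n-1$ times, and cancel against the $1/(n-1)$ normalization. No differences worth noting.
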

\begin{lemma}
\label{lem: gen_is_loo}
Let $\mathcal{A}:\mathcal{Z}^{n-1}\to\mathcal{W}$ be a training algorithm, then 
\begin{equation}
    \mathrm{gen}(\mathcal{A}) = \left\lvert \mathbb{E}_{\mathcal{A}, z^n\sim\mathcal{P}^n, u\sim\mathcal{U}}\left[\mathrm{loo\text{-}cv}(z^n, \mathcal{A}(z_{-u}^{n-1}), u)\right]\right\rvert 
\end{equation}
\end{lemma}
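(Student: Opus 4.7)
\textbf{Plan for the proof of Lemma \ref{lem: gen_is_loo}.} The strategy is a direct unpacking of the definitions: I will show that, in expectation, the $\mathrm{loo\text{-}cv}$ estimate equals (up to sign) the generalization gap by splitting it into two terms and recognising each one separately. The sign-flip between the two sides is absorbed by the absolute value defining $\mathrm{gen}(\mathcal{A})$, so no care is needed on that front.

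First, I would fix $u \in [n]$ and expand the inner expression. By definition,
\begin{equation*}
\mathrm{loo\text{-}cv}(z^n, \mathcal{A}(z_{-u}^{n-1}), u) \;=\; \frac{1}{n-1}\sum_{i\neq u} \ell\bigl(\mathcal{A}(z_{-u}^{n-1}), z_i\bigr) \;-\; \ell\bigl(\mathcal{A}(z_{-u}^{n-1}), z_u\bigr).
\end{equation*}
The first term is exactly $\widehat{\mathcal{L}}\bigl(\mathcal{A}(z_{-u}^{n-1}), z_{-u}^{n-1}\bigr)$ by the definition of empirical risk, since $z_{-u}^{n-1}$ consists of the $n-1$ samples $\{z_i : i \neq u\}$.

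Next, I would take the expectation over $z^n \sim \mathcal{P}^n$ and the internal randomness of $\mathcal{A}$. Because the $z_i$'s are i.i.d., once we condition on $u$ the held-out sample $z_u$ is independent of both $z_{-u}^{n-1}$ and $\mathcal{A}(z_{-u}^{n-1})$, and marginally $z_u \sim \mathcal{P}$. Therefore
\begin{equation*}
\mathbb{E}\bigl[\ell(\mathcal{A}(z_{-u}^{n-1}), z_u) \,\bigl|\, z_{-u}^{n-1}, \mathcal{A}\bigr] \;=\; \mathcal{L}\bigl(\mathcal{A}(z_{-u}^{n-1})\bigr).
\end{equation*}
Moreover, $z_{-u}^{n-1}$ is a collection of $n-1$ i.i.d.\ samples from $\mathcal{P}$, hence distributionally identical to the generic training set $z^{n-1} \sim \mathcal{P}^{n-1}$ appearing in the definition of $\mathrm{gen}(\mathcal{A})$. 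This distributional identity is the crux, and it is what lets us drop the dependence on $u$.

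Combining these observations, the expectation of the first term becomes $\mathbb{E}_{\mathcal{A}, z^{n-1}}[\widehat{\mathcal{L}}(\mathcal{A}(z^{n-1}), z^{n-1})]$ and that of the second term becomes $\mathbb{E}_{\mathcal{A}, z^{n-1}}[\mathcal{L}(\mathcal{A}(z^{n-1}))]$, both of which are independent of $u$, so the outer expectation over $u \sim \mathcal{U}$ is trivial. The result is
\begin{equation*}
\mathbb{E}_{\mathcal{A}, z^n, u}\bigl[\mathrm{loo\text{-}cv}(z^n, \mathcal{A}(z_{-u}^{n-1}), u)\bigr] \;=\; \mathbb{E}\bigl[\widehat{\mathcal{L}}(\mathcal{A}(z^{n-1}), z^{n-1}) - \mathcal{L}(\mathcal{A}(z^{n-1}))\bigr],
\end{equation*}
which is the negative of the expression inside the absolute value of $\mathrm{gen}(\mathcal{A})$. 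Taking absolute values yields the lemma. There is no real obstacle here; the proof is essentially a bookkeeping argument, and the only thing to be careful about is invoking the i.i.d.\ assumption correctly so that (i) $z_u$ acts like a fresh test sample for the algorithm trained on $z_{-u}^{n-1}$, and (ii) $z_{-u}^{n-1}$ has the same law as a generic $z^{n-1}$.
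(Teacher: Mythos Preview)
Your proposal is correct and is essentially the same bookkeeping argument as the paper's proof: identify the first term of $\mathrm{loo\text{-}cv}$ with the empirical risk on $z_{-u}^{n-1}$, use the i.i.d.\ assumption so that $z_u$ plays the role of a fresh test point (giving $\mathcal{L}$ in expectation) and $z_{-u}^{n-1}$ has the law of a generic $z^{n-1}\sim\mathcal{P}^{n-1}$, then absorb the sign flip in the absolute value. The only cosmetic difference is that the paper runs the chain of equalities starting from $\mathrm{gen}(\mathcal{A})$ and ending at the $\mathrm{loo\text{-}cv}$ expression, whereas you go the other way.
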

\begin{proof}
\begin{align}
    \mathrm{gen}(\mathcal{A}) &= \left\lvert \mathbb{E}_{\mathcal{A}, z^{n-1}\sim \mathcal{P}^{n-1}} \left[ \mathcal{L}(\mathcal{A}(z^{n-1})) - \widehat{\mathcal{L}}(\mathcal{A}(z^{n-1}), z^{n-1})\right]\right\rvert, \\
    &= \left\lvert \mathbb{E}_{\mathcal{A}, z^{n-1}\sim \mathcal{P}^{n-1}} \left[ \mathbb{E}_{z'\sim\mathcal{P}}\left[\mathcal{L}(\mathcal{A}(z^{n-1}), z')\right] - \widehat{\mathcal{L}}(\mathcal{A}(z^{n-1}), z^{n-1})\right]\right\rvert, \\
    &= \left\lvert \mathbb{E}_{\mathcal{A}, z^{n}\sim \mathcal{P}^{n}, u\sim\mathcal{U}} \left[ \mathcal{L}(\mathcal{A}(z^{n-1}_{-u}), z_u) - \widehat{\mathcal{L}}(\mathcal{A}(z^{n-1}_{-u}), z^{n-1}_{-u})\right]\right\rvert, \\
    &= \left\lvert \mathbb{E}_{\mathcal{A}, z^{n}\sim \mathcal{P}^{n}, u\sim\mathcal{U}} \left[ \mathrm{loo\text{-}cv}(z^n, u, \mathcal{A}(z^{n-1}_{-u})))\right]\right\rvert.
\end{align}
\end{proof}
\begin{definition}
We say that a random variable $X\sim P_X$ is $\sigma^2-$subgaussian if:
\begin{equation}
    \mathbb{E}_{x\sim P_X}\left[t\mathrm{exp}(x-\mathbb{E}_{x\sim P_x}[x])\right] \leq \mathrm{exp}\left(\frac{t^2\sigma^2}{2}\right).
\end{equation}
\end{definition}
\begin{lemma}[Lemma 1, \cite{xu2017}]
\label{lem: xu}
Let $A,B$ be arbitrary random variables. Let $A',B'$ be independent copies of $A,B$ such that $P_{A',B'} = P_{A'}P_{B'}$. Suppose $f(A',B')$ is $\sigma^2$-subgaussian, then
\begin{equation}
 \left\lvert \mathbb{E}_{A,B}\left[g(A,B) - \mathbb{E}_{A',B'}\left[g(A',B')\right]\right] \right\rvert \leq \sqrt{2\sigma^2 I(A;B)}. \\
\end{equation}
\end{lemma}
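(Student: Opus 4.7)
The plan is to establish this decoupling inequality via the Donsker--Varadhan variational representation of the KL divergence, with the subgaussian hypothesis used to control the relevant log-moment generating function. I will read the statement as referring throughout to a single function (the notational slip between $f$ and $g$ in the hypothesis versus the conclusion appears to be a typo; otherwise no meaningful conclusion could link a property of $f$ to an expectation of $g$). Write $f$ for this common function.

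The key tool is the Donsker--Varadhan inequality: for any two probability measures $P \ll Q$ on a common space and any measurable $F$ with $\mathbb{E}_Q[e^F] < \infty$,
\[
\mathbb{E}_P[F] \leq \infdiv{P}{Q} + \log \mathbb{E}_Q\bigl[e^{F}\bigr].
\]
First, I would apply this with $P = P_{A,B}$, $Q = P_A P_B = P_{A',B'}$, and the test function $F = t\bigl(f - \mathbb{E}_{A',B'}[f(A',B')]\bigr)$ for an arbitrary parameter $t > 0$. The left-hand side becomes $t\bigl(\mathbb{E}_{A,B}[f(A,B)] - \mathbb{E}_{A',B'}[f(A',B')]\bigr)$; the KL term is exactly $I(A;B)$ by definition; and the subgaussian hypothesis on $f(A',B')$ under the product measure gives $\log \mathbb{E}_{A',B'}\bigl[e^{t(f - \mathbb{E}[f])}\bigr] \leq t^2 \sigma^2/2$. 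Dividing through by $t$ yields the affine bound
\[
\mathbb{E}_{A,B}[f(A,B)] - \mathbb{E}_{A',B'}[f(A',B')] \leq \frac{I(A;B)}{t} + \frac{t \sigma^2}{2},\qquad \forall\, t > 0.
\]

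Next, I would optimize in $t$: the stationary point $t^\star = \sqrt{2 I(A;B)/\sigma^2}$ turns the right-hand side into $\sqrt{2\sigma^2 I(A;B)}$, yielding the desired one-sided bound. To recover the absolute value, I would rerun the same argument with $-f$ in place of $f$; since the subgaussian property depends only on a variance proxy it is invariant under negation, so the identical bound controls the reverse direction and the two combine into the stated inequality. The only obstacle is technical rather than conceptual: the Donsker--Varadhan step requires $P_{A,B} \ll P_A P_B$, but if this fails then $I(A;B) = +\infty$ and the inequality is vacuous, so absolute continuity may be assumed without loss of generality. The substantive content of the proof is just the interplay between the exponential tilt implicit in the variational representation of KL and the subgaussian control on the corresponding log-MGF.
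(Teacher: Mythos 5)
Your proof is correct and is essentially the same argument as the one in the cited source (Xu and Raginsky's Lemma 1, which this paper imports without reproving): Donsker--Varadhan applied to $P_{A,B}$ versus $P_AP_B$ with test function $t(f-\mathbb{E}f)$, the subgaussian bound on the log-MGF, optimization over $t$, and a sign flip for the absolute value. You also correctly diagnose the $f$/$g$ notation in the statement as a typo and handle the $I(A;B)=\infty$ degenerate case.
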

\vspace{0.5cm}
Now, fix $Z^n = z^n$, and let $A = \mathcal{A}(z^{n-1}_{-U})$ and $B = U$, and let 
\begin{equation}
f(A, B) = \mathrm{loo\text{-}cv}(z^n,A, B).
\end{equation}
Using Lemma \ref{lem: zero_avg}, $E_{A',B'} f(A',B') = 0$. Moreover, we use $f(A',B')$ is $\sigma^2$-subgaussian with $\sigma^2 = c_n^2/4$ using Lemma \ref{lem: loocv_bound}. This gives us that:
\begin{equation}
    \left\lvert \mathbb{E}_{\mathcal{A}, u\sim\mathcal{U}}\left[\mathrm{loo\text{-}cv}(z^n,\mathcal{A}(z^{n-1}_{-u}), u)\right] \right\rvert \leq \sqrt{\frac{c_n^2}{2}I(W;U\given z^n)}, 
\end{equation}
where $W = \mathcal{A}(z^{n-1}_{-U})$. Taking the expectation over $z^n$ on both sides, we get that: 
\begin{equation}
    \mathbb{E}_{z^n\sim\mathcal{P}^n}\left\lvert \mathbb{E}_{\mathcal{A}, u\sim\mathcal{U}}\left[\mathrm{loo\text{-}cv}(\mathcal{A}(z^{n-1}_{-u}), z^n, u)\right] \right\rvert \leq \mathbb{E}_{z^n\sim\mathcal{P}^n}\sqrt{\frac{c_n^2}{2}I(W;U\given z^n)}.
\end{equation}
Since $g(\cdot) = \left\lvert \cdot \right\rvert$ is a convex function, then 
\begin{align}
    \mathrm{gen}(\mathcal{A}) &= \left\lvert \mathbb{E}_{\mathcal{A},z^{n}\sim\mathcal{P}^n u\sim\mathcal{U}}\left[\mathrm{loo\text{-}cv}(\mathcal{A}(z^{n-1}_{-u}), z^n, u)\right]\right\rvert \\
    &\leq \mathbb{E}_{z^n\sim\mathcal{P}^n}\left\lvert \mathbb{E}_{\mathcal{A}, u\sim\mathcal{U}}\left[\mathrm{loo\text{-}cv}(\mathcal{A}(z^{n-1}_{-u}), z^n, u)\right] \right\rvert \\
    & \leq \frac{c_n}{\sqrt{2}}\mathbb{E}_{z^n\sim\mathcal{P}^n}\sqrt{I(W;U\given z^n)}.
\end{align}
\subsection{Proof of Theorem \ref{thm: foocmi_bound}}
We begin by restating the Theorem. 
\begin{theorem*}
Let $\ell: \mathcal{R} \times \mathcal{Y} \to [0,1]$ be a bounded loss function. Let $U$ be a uniform random variable over $[n]$, then 
\begin{equation}
\label{eq: foocmi_bound}
\mathrm{gen}(h) \leq \frac{c_n}{\sqrt{2}}\mathbb{E}_{z^n\sim \mathcal{P}^n}\sqrt{\mathrm{floo\text{-}CMI}(h, z^n)},
\end{equation}
\end{theorem*}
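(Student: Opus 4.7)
\textbf{Proof Proposal for Theorem \ref{thm: foocmi_bound}.} The plan is to mirror the proof of Theorem \ref{thm: loocmi_bound} step by step, replacing the weight-based loss $\ell(w,z)$ by the prediction-based loss $\ell(h(z^{n-1}_{-u}, x), y)$. The crucial observation that makes this substitution work is that $\mathrm{floo\text{-}CMI}$ conditions on the predictions $h(z^{n-1}_{-U}, x^n)$ \emph{on the full dataset} (including $x_U$), so the empirical and held-out losses can both be read off from this single random object.

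First I would define the functional analogue of leave-one-out cross-validation,
\[
 \mathrm{loo\text{-}cv}_h(z^n, h_{-u}, u) \;=\; \frac{1}{n-1}\sum_{i\neq u} \ell(h_{-u}(x_i), y_i) \;-\; \ell(h_{-u}(x_u), y_u),
\]
where $h_{-u} = h(z^{n-1}_{-u}, \cdot)$ and the predictions are evaluated at $x^n = (x_1, \ldots, x_n)$. The same algebraic identity used in Lemma \ref{lem: zero_avg} shows $\mathbb{E}_{u\sim\mathcal{U}}[\mathrm{loo\text{-}cv}_h(z^n, h_{-u}, u)] = 0$ for every fixed $z^n$ and every fixed collection of prediction outputs. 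Similarly, by conditioning on which index is excluded and using the i.i.d.\ structure of $z^n$, an analogue of Lemma \ref{lem: gen_is_loo} gives
\[
 \mathrm{gen}(h) \;=\; \Bigl|\, \mathbb{E}_{h,\, z^n\sim\mathcal{P}^n,\, u\sim\mathcal{U}}\bigl[\mathrm{loo\text{-}cv}_h(z^n, h_{-u}, u)\bigr]\,\Bigr|.
\]

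Next I would fix $z^n$ and apply Lemma \ref{lem: loocv_bound} to the bounded scalar sequence $\ell_i = \ell(h_{-u}(x_i), y_i) \in [0,1]$: since Lemma \ref{lem: loocv_bound} was proved purely as a statement about vectors in $[0,1]^n$, it directly yields that, conditional on $z^n$ and the predictions $h(z^{n-1}_{-u}, x^n)$, the random variable $\mathrm{loo\text{-}cv}_h(z^n, h_{-U}, U)$ (with $U$ drawn independently of the predictions) is $\frac{c_n^2}{4}$-subgaussian with zero mean. I would then invoke Lemma \ref{lem: xu} (Xu--Raginsky) with $A = h(z^{n-1}_{-U}, x^n)$ and $B = U$, evaluated under the conditional law given $z^n$, to obtain
\[
 \Bigl|\,\mathbb{E}_{h,u\sim\mathcal{U}}\bigl[\mathrm{loo\text{-}cv}_h(z^n, h_{-u}, u)\bigr]\,\Bigr| \;\le\; \sqrt{\tfrac{c_n^2}{2}\, I\!\bigl(h(z^{n-1}_{-U}, x^n);\, U \,\big|\, z^n\bigr)} \;=\; \sqrt{\tfrac{c_n^2}{2}\,\mathrm{floo\text{-}CMI}(h, z^n)}.
\]

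Finally I would take $\mathbb{E}_{z^n\sim\mathcal{P}^n}$ of both sides and apply Jensen's inequality to the convex function $|\cdot|$ (pulling the outer absolute value inside the expectation over $z^n$, as in the last display of the Theorem \ref{thm: loocmi_bound} proof) to conclude $\mathrm{gen}(h)\le \frac{c_n}{\sqrt{2}}\,\mathbb{E}_{z^n}\sqrt{\mathrm{floo\text{-}CMI}(h,z^n)}$. The only non-mechanical step is the choice of $A$: one must use the predictions on \emph{all} of $x^n$, not just on the removed point, so that both terms of $\mathrm{loo\text{-}cv}_h$ are measurable with respect to $A$ and Lemma \ref{lem: xu} applies; this is precisely the definition of $\mathrm{floo\text{-}CMI}$, so no extra work is needed beyond lining up the objects correctly.
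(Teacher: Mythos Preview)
Your proposal is correct and follows essentially the same route as the paper's own proof: redefine leave-one-out cross-validation in terms of the prediction vector $g=h(z^{n-1}_{-u},x^n)\in\mathcal{R}^n$, re-state the zero-mean and $\mathrm{gen}=\mathbb{E}[\mathrm{loo\text{-}cv}]$ lemmas verbatim with $g$ in place of $w$, apply Lemma~\ref{lem: loocv_bound} to get the $c_n^2/4$-subgaussianity, and then invoke Lemma~\ref{lem: xu} with $A=h(z^{n-1}_{-U},x^n)$, $B=U$ conditional on $z^n$. Your explicit remark that $A$ must be the predictions on \emph{all} of $x^n$ (so that both terms of $\mathrm{loo\text{-}cv}_h$ are measurable with respect to $A$) is exactly the point the paper encodes in its choice of $A$ and in the definition of $\mathrm{floo\text{-}CMI}$.
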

This proof is nearly identical to the proof of Theorem \ref{thm: loocmi_bound}. Here, we use the prediction function instead of the weights. We use the alternate definition of the loss $\ell: \mathcal{R}\times \mathcal{Y} \to [0,1]$. Let $g\in \mathcal{R}^n$ be a set of prediction, then we also use an alternate definition of the leave-one-out cross validation where
\begin{equation}
    \mathrm{loo\text{-}cv}(z^n, g, u) = \frac{1}{n-1} \sum_{i\neq u}\ell(g_i, y_i) - \ell(g_u, y_u).
\end{equation}
\begin{lemma}
\label{lem: func_zero_avg}
Let $z^n\in\mathcal{Z}^n$, and let $g\in\mathcal{R}^n$, then 
\begin{equation}
   \mathbb{E}_{u\sim\mathcal{U}} \left[\mathrm{loo\text{-}cv}(z^n, g, u)\right] =0.
\end{equation}
\end{lemma}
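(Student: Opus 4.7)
The plan is to observe that this lemma is the functional analogue of Lemma \ref{lem: zero_avg}, and that the argument carries over essentially verbatim because the algebraic structure of the leave-one-out cross-validation functional is the same: a normalized average of per-sample losses minus the held-out loss. The only difference is cosmetic: the losses are now indexed by predictions $g_i \in \mathcal{R}$ and labels $y_i \in \mathcal{Y}$ rather than weights $w$ and data points $z_i$.

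Concretely, I would expand the expectation over the uniform distribution on $[n]$ directly:
\begin{equation}
n \cdot \mathbb{E}_{u \sim \mathcal{U}}\!\left[\mathrm{loo\text{-}cv}(z^n, g, u)\right] = \sum_{u=1}^{n} \left( \frac{1}{n-1} \sum_{i \neq u} \ell(g_i, y_i) - \ell(g_u, y_u) \right).
\end{equation}
Swapping the order of summation in the double sum, each term $\ell(g_i, y_i)$ is counted once for every $u \neq i$, i.e.\ exactly $n-1$ times, so the double sum collapses to $(n-1) \sum_{i=1}^n \ell(g_i, y_i)$. Dividing by $n-1$ cancels this factor, leaving $\sum_{i=1}^n \ell(g_i, y_i) - \sum_{u=1}^n \ell(g_u, y_u) = 0$. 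Dividing both sides by $n$ gives the claim.

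The proof is a bookkeeping calculation with no real obstacle; the only thing to be careful about is the reindexing in the Fubini step, where the multiplicity $n-1$ needs to match the denominator. Since $\ell$ is arbitrary and the identity is purely combinatorial in the per-sample losses $\ell(g_i, y_i)$, no assumption beyond $g \in \mathcal{R}^n$ and $z^n \in \mathcal{Z}^n$ is needed, which is why the statement holds in the same generality as Lemma \ref{lem: zero_avg}. This lemma will then feed into the functional version of the proof of Theorem \ref{thm: foocmi_bound} exactly as Lemma \ref{lem: zero_avg} fed into the proof of Theorem \ref{thm: loocmi_bound}, namely by ensuring that the quantity playing the role of $f(A', B')$ in Lemma \ref{lem: xu} has zero mean under the product distribution, so that the subgaussian decoupling bound applies.
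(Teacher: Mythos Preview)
Your proposal is correct and mirrors exactly the proof of Lemma~\ref{lem: zero_avg} in the paper, which is precisely the intended argument: the paper states Lemma~\ref{lem: func_zero_avg} without proof because it is the same combinatorial identity with $\ell(g_i,y_i)$ in place of $\ell(w,z_i)$. Your exposition of the swap-of-summation step and the multiplicity-$(n-1)$ cancellation is accurate, and your remark about how the lemma feeds into Lemma~\ref{lem: xu} for Theorem~\ref{thm: foocmi_bound} is also correct.
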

\begin{lemma}
\label{lem: gen_is_loo}
Let $h:\mathcal{Z}^{n-1}\times \mathcal{X}\to\mathcal{R}$ be a training algorithm, then 
\begin{equation}
    \mathrm{gen}(\mathcal{A}) = \left\lvert \mathbb{E}_{h, z^n\sim\mathcal{P}^n, u\sim\mathcal{U}}\left[\mathrm{loo\text{-}cv}(z^n, h(z^{n-1}_{-u}, x^n),u)\right]\right\rvert.
\end{equation}
\end{lemma}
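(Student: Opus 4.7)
The plan is to unfold the definition of $\mathrm{gen}(h)$ from \eqref{eq: func_gap} and then apply exchangeability of i.i.d.\ samples to introduce the random held-out index $U$ in the form that appears in the functional loo-cv expression. First I would start from
\[
\mathrm{gen}(h) = \left\lvert \mathbb{E}_{h, z^{n-1}\sim\mathcal{P}^{n-1}}\left[\widehat{\mathcal{L}}(h, z^{n-1}) - \mathcal{L}(h, z^{n-1})\right]\right\rvert
\]
and expand $\mathcal{L}(h, z^{n-1}) = \mathbb{E}_{(x',y')\sim\mathcal{P}}[\ell(h(z^{n-1}, x'), y')]$ using the definition of the population loss. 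At this point the outer expectation is over the independent tuple $(z^{n-1}, z')$, which is exactly an $n$-tuple of i.i.d.\ draws from $\mathcal{P}$ with one sample singled out as the ``test'' point.

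The key step is exchangeability: if $z^n\sim\mathcal{P}^n$ and $u\sim\mathcal{U}=\mathrm{Uniform}([n])$ is drawn independently of $z^n$, then the pair $(z^n_{-u}, z_u)$ has the same joint distribution as $(z^{n-1}, z')$ with $z^{n-1}\sim\mathcal{P}^{n-1}$ and $z'\sim\mathcal{P}$ independent. Applying this reparametrisation to the population term yields
\[
\mathbb{E}_{z^{n-1},\, z'}[\ell(h(z^{n-1}, x'), y')] = \mathbb{E}_{z^n,\, u}[\ell(h(z^n_{-u}, x_u), y_u)],
\]
and the same substitution applied to the empirical loss (together with the fact that the $n-1$ summands in $\widehat{\mathcal{L}}(h, z^n_{-u})$ are precisely indexed by $\{i\in[n]:i\neq u\}$) gives
\[
\mathbb{E}_{z^{n-1}}\bigl[\widehat{\mathcal{L}}(h, z^{n-1})\bigr] = \mathbb{E}_{z^n,\, u}\!\left[\frac{1}{n-1}\sum_{i\neq u} \ell(h(z^n_{-u}, x_i), y_i)\right].
\]
Subtracting these two identities, the bracketed integrand is exactly $\mathrm{loo\text{-}cv}(z^n,\, h(z^{n-1}_{-u}, x^n),\, u)$ in the functional-space definition stated just before the lemma (with the prediction vector $g\in\mathcal{R}^n$ given by $g_i = h(z^{n-1}_{-u}, x_i)$). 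Pulling the absolute value back to the outside of the combined expectation then yields the claimed identity.

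The main obstacle is purely notational bookkeeping: one has to be careful that $z^{n-1}$ in the definition of $\mathrm{gen}(h)$ and $z^{n-1}_{-u}$ in the loo-cv formulation refer to the \emph{same} training set, and that the ``new'' sample $z'$ used to evaluate $\mathcal{L}$ is now playing the role of the held-out $z_u$. Once this identification is in place and exchangeability is invoked, the functional argument is a direct transcript of the weight-space proof of Lemma \ref{lem: gen_is_loo} given above, with $h(z^{n-1}_{-u}, \cdot)$ replacing $\mathcal{A}(z^{n-1}_{-u})$ and with $\ell(g_i, y_i)$ replacing $\ell(w, z_i)$; no new probabilistic ingredient is required.
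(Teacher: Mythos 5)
Your proposal is correct and follows essentially the same route as the paper: the paper gives no separate proof for this functional version, deferring to the weight-space Lemma~\ref{lem: gen_is_loo}, whose proof is exactly your argument---expand $\mathcal{L}$ as an expectation over a fresh sample $z'$, identify the joint law of $(z^{n-1},z')$ with that of $(z^n_{-u},z_u)$ for $z^n\sim\mathcal{P}^n$ and $u\sim\mathcal{U}$, and recognize the resulting integrand as $\mathrm{loo\text{-}cv}$. Your transcription to the prediction-function setting, with $g_i=h(z^{n-1}_{-u},x_i)$, is precisely the adaptation the paper intends.
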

Now, we use Lemma \ref{lem: xu} with $A = h(z^{n-1}_{-u}, x^n)$, $B = U$, and 
\begin{equation}
    f(A,B) = \mathrm{loo\text{-}cv}(z^n,A,B).
\end{equation}
The proof then follows in the exact same way as it did for Theorem \ref{thm: loocmi_bound}, and we get the desired result: 
\begin{equation}
    \mathrm{gen}(h) \leq  \frac{c_n}{\sqrt{2}}\mathbb{E}_{z^n\sim\mathcal{P}^n}\sqrt{I(h(z^{n-1}_{-U}, x^n);U\given z^n)}.
\end{equation}
\subsection{Proof of Theorem \ref{thm: approx}}
We begin by restating the Theorem. 
\begin{theorem*}
Let $\mathcal{A}$ be a parametric algorithm with prediction function $h$. Let $U, Z^n, W$ be defined as before. Let $U'$ be an identical independent copy of $U$, then 
\begin{align}
\label{eq: weight_kl_bound}
    &\mathrm{loo\text{-}CMI}(\mathcal{A}, z^n) \leq -\mathbb{E}_{u\sim \mathcal{U}} \left[ \ln \mathbb{E}_{u'\sim \mathcal{U}} \left[\mathrm{exp}\left( - \infdiv{p_{W\given z^n, u}}{p_{W\given z^n, u'}}\right)\right]\right], \\
\label{eq: func_kl_bound}
    &\mathrm{floo\text{-}CMI}(h, z^n) \leq -\mathbb{E}_{u\sim \mathcal{U}} \left[ \ln \mathbb{E}_{u'\sim \mathcal{U}} \left[\mathrm{exp}\left( - \infdiv{p_{h(z_{-u}^{n-1}, x)}}{p_{h(z^{n-1}_{-u'}, x)}}\right)\right]\right].
\end{align}
\end{theorem*}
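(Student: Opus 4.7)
The plan is to prove the bound of~\eqref{eq: weight_kl_bound} pointwise in $u$, then take expectation over $u$; the functional bound~\eqref{eq: func_kl_bound} will follow by the identical argument with $W$ replaced by the prediction $h(z^{n-1}_{-U}, x^n)$. Start by expanding
\[
\mathrm{loo\text{-}CMI}(\mathcal{A},z^n) = I(W;U\given z^n) = \mathbb{E}_{u\sim\mathcal{U}}\bigl[\infdiv{p_{W\given z^n,u}}{p_{W\given z^n}}\bigr],
\]
where $p_{W\given z^n}(w) = \mathbb{E}_{u'\sim\mathcal{U}}[p_{W\given z^n,u'}(w)]$. Abbreviate $p_u := p_{W\given z^n,u}$ and $\rho_{u'}(w) := p_{u'}(w)/p_u(w)$. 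Factoring $p_u(w)$ out of the mixture gives $p_{W\given z^n}(w) = p_u(w)\,\mathbb{E}_{u'}[\rho_{u'}(w)]$, so
\[
\infdiv{p_u}{p_{W\given z^n}} = -\mathbb{E}_{W\sim p_u}\bigl[\ln \mathbb{E}_{u'}[\rho_{u'}(W)]\bigr].
\]

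The crux is then the right application of Jensen's inequality. The naive choice---concavity of $\ln$---swaps $\mathbb{E}_W$ past $\ln$ and recovers only the classical convexity-of-KL bound $\mathbb{E}_{u'}[\infdiv{p_u}{p_{u'}}]$, which the authors explicitly want to strengthen. The correct move is to recognize $\ln \mathbb{E}_{u'}[\rho_{u'}(w)]$ as the value of the log-moment generating functional $\Phi(f) := \ln \mathbb{E}_{u'}[e^{f(u')}]$ evaluated at the random function $f_W(u') := \ln \rho_{u'}(W)$. The functional $\Phi$ is \emph{convex} in $f$ by H\"older's inequality, so Jensen in function space gives $\Phi(\mathbb{E}_W[f_W]) \leq \mathbb{E}_W[\Phi(f_W)]$, which reads
\[
\ln \mathbb{E}_{u'}\bigl[\exp(\mathbb{E}_W[\ln \rho_{u'}(W)])\bigr] \leq \mathbb{E}_W\bigl[\ln \mathbb{E}_{u'}[\rho_{u'}(W)]\bigr].
\]

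Since $\mathbb{E}_{W\sim p_u}[\ln \rho_{u'}(W)] = -\infdiv{p_u}{p_{u'}}$, negating and combining with the identity from the first step yields the pointwise inequality
\[
\infdiv{p_u}{p_{W\given z^n}} \leq -\ln \mathbb{E}_{u'}\bigl[\exp(-\infdiv{p_u}{p_{u'}})\bigr],
\]
and averaging over $u$ proves~\eqref{eq: weight_kl_bound}. The functional bound~\eqref{eq: func_kl_bound} is obtained by rerunning the same three-line argument with $p_u$ redefined as the law of $h(z^{n-1}_{-u}, x^n)$, since nothing in the derivation depended on the ambient sample space. To support the remark following the theorem, one more application of Jensen to the convex function $-\ln$ gives $-\ln \mathbb{E}_{u'}[e^{-\infdiv{p_u}{p_{u'}}}] \leq \mathbb{E}_{u'}[\infdiv{p_u}{p_{u'}}]$, which is strict since $-\ln$ is strictly convex, confirming that our bound is tighter than the convexity-of-KL bound.

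I expect the main obstacle to be conceptual rather than technical: both Jensen steps are natural, but the correct \emph{direction} is counterintuitive. A beginner's instinct is to pull $\mathbb{E}_W$ past $\ln$ using concavity of $\ln$, which lands on the weaker $\mathbb{E}_{u'}[\mathrm{KL}]$ bound. The right move---writing $\rho_{u'}$ as an exponential and exploiting the convexity of the log-MGF functional in function space---is precisely the step that distinguishes this bound from the baseline convexity-of-KL bound. Once this reframing is made, the remaining bookkeeping is routine.
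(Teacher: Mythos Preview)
Your argument is correct and lands on the same pointwise inequality as the paper, but by a genuinely different route. The paper's proof is variational: it introduces an auxiliary distribution $q(u')$ on $\mathcal{U}$, observes that
\[
\Phi(q,u) \;=\; \mathbb{E}_{u'\sim q}\bigl[\infdiv{p_u}{p_{u'}}\bigr] \;+\; \infdiv{q}{p_U}
\]
upper-bounds $\infdiv{p_u}{p_{W\mid z^n}}$ for \emph{every} $q$ (one application of Jensen to the concave $\ln$), and then minimizes $\Phi$ over $q$ with Lagrange multipliers; the optimizer is the Gibbs distribution $q^*(u')\propto p(u')\exp\bigl(-\infdiv{p_u}{p_{u'}}\bigr)$, and its value is exactly $-\ln \mathbb{E}_{u'}\bigl[e^{-\infdiv{p_u}{p_{u'}}}\bigr]$. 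You bypass the variational layer entirely: one identity plus Jensen applied to the convex log-moment-generating functional $f\mapsto \ln\mathbb{E}_{u'}[e^{f(u')}]$ delivers the bound in a single step (and since $U$ is supported on $[n]$, this is just ordinary Jensen for the log-sum-exp function on $\mathbb{R}^n$, so there is no function-space subtlety to worry about). The trade-off is that your proof is shorter and makes the role of log-sum-exp convexity explicit, while the paper's framing reveals that the stated bound is the \emph{tightest} member of the family $\{\Phi(q,u)\}_q$ and recovers the weaker convexity-of-KL bound as the special case $q=p_U$.
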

We begin by proving a more general result. Let $W\sim P_W, U\sim P_{U}$ be arbitrary random variables. Suppose $w\in\mathcal{W}$ and $u\in \mathcal{U}$. Let $q(u')$ be any probability distribution over $\mathcal{U}$, then we define 
\begin{equation}
    \Phi(q(u'), u) = \int_{\mathcal{W}}p(w|u) \int_{\mathcal{U}} q(u')\ln \left(\frac{p(w|u)q(u')}{p(w|u')p(u')}\right) \mathrm{d}u'\mathrm{d}w.
\end{equation}
We begin with the following lemma. 
\begin{lemma}
\label{lem: var_upper_bound}
\begin{equation}
I(W;U) \leq \mathbb{E}_{u\sim P_U} \left[ \Phi(q, u) \right].
\end{equation}
\end{lemma}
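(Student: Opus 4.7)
The plan is to prove the inequality pointwise in $u$ (before the outer expectation) by applying Jensen's inequality to $\log p_W(w)$ after a change of measure from $p_U$ to $q$. Concretely, start from the identity
\begin{equation}
I(W;U) \;=\; \int p(u)\,\mathrm{KL}\bigl(p(w\mid u)\,\|\,p(w)\bigr)\,\mathrm{d}u \;=\; \int p(u)\int p(w\mid u)\,\ln\frac{p(w\mid u)}{p(w)}\,\mathrm{d}w\,\mathrm{d}u,
\end{equation}
so the task reduces to showing $\mathrm{KL}\bigl(p(w\mid u)\,\|\,p(w)\bigr) \le \Phi(q,u)$ for every fixed $u$ and every probability density $q$ on $\mathcal{U}$.

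Next I would rewrite the marginal $p(w) = \int p(w\mid u')\,p(u')\,\mathrm{d}u'$ as an expectation under $q$ by importance reweighting, $p(w) = \int q(u')\cdot\frac{p(w\mid u')\,p(u')}{q(u')}\,\mathrm{d}u'$, and then apply Jensen's inequality to the concave function $\ln$:
\begin{equation}
\ln p(w) \;\ge\; \int q(u')\,\ln\!\left(\frac{p(w\mid u')\,p(u')}{q(u')}\right)\mathrm{d}u'.
\end{equation}
Multiplying by $-1$ and adding $\ln p(w\mid u) = \int q(u')\ln p(w\mid u)\,\mathrm{d}u'$ (using that $q$ integrates to one) yields the pointwise bound
\begin{equation}
\ln\frac{p(w\mid u)}{p(w)} \;\le\; \int q(u')\,\ln\!\left(\frac{p(w\mid u)\,q(u')}{p(w\mid u')\,p(u')}\right)\mathrm{d}u'.
\end{equation}
Integrating both sides against $p(w\mid u)\,\mathrm{d}w$ gives exactly $\mathrm{KL}(p(w\mid u)\,\|\,p(w)) \le \Phi(q,u)$, and taking expectation over $u\sim p_U$ completes the proof.

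There is no real obstacle here; the only things to verify are that Fubini's theorem applies so the inner integrals can be swapped freely (which holds under the usual absolute continuity assumption $p(w\mid u)\ll p(w\mid u')$ implicit in the finiteness of the KL terms in $\Phi$) and that Jensen's inequality can be applied under $q$ (which holds because $\ln$ is concave and $q$ is a probability density). Once the lemma is established, Theorem~\ref{thm: approx} follows by substituting the minimizer $q^{*}(u') \propto p(u')\exp(-\mathrm{KL}(p_{W\mid u}\,\|\,p_{W\mid u'}))$ into $\Phi(q,u)$, which after simplification produces the log-sum-exp form in \eqref{eq: weight_kl_bound}; the functional version \eqref{eq: func_kl_bound} is identical with $h(z^{n-1}_{-u},x)$ in place of $W$.
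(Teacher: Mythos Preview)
Your proof is correct and matches the paper's argument essentially line for line: both apply Jensen's inequality to the concave $\ln$ after writing $p(w)=\int q(u')\,\tfrac{p(w\mid u')p(u')}{q(u')}\,\mathrm{d}u'$, obtaining $\mathrm{KL}(p(w\mid u)\,\|\,p(w))\le \Phi(q,u)$ pointwise in $u$, then take the expectation over $u$. The only cosmetic difference is that the paper starts from $\Phi(q,u)$ and bounds it below, whereas you start from the KL and bound it above; your subsequent identification of the optimal $q^*$ also agrees with the paper.
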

\begin{proof}
We note that 
\begin{align}
    \Phi(q, u)  &=  \int_{\mathcal{W}}p(w|u) \int_{\mathcal{U}} q(u')\ln \left(\frac{p(w|u)q(u')}{p(w|u')p(u')}\right) \mathrm{d}u'\mathrm{d}w \\
    &= -  \int_{\mathcal{W}}p(w|u) \int_{\mathcal{U}} q(u')\ln \left(\frac{p(w|u')p(u')}{p(w|u)q(u')}\right) \mathrm{d}u'\mathrm{d}w. \\
    &= - \int_{\mathcal{W}}p(w|u) \mathbb{E}_{u'\sim q_U} \left[\ln \left(\frac{p(w|u')p(u')}{p(w|u)q(u')}\right)\right] \mathrm{d}w. \\
    \label{eq: aj}
    &\geq  - \int_{\mathcal{W}}p(w|u) \ln \left(\mathbb{E}_{u'\sim q_U} \left[\frac{p(w|u')p(u')}{p(w|u)q(u')}\right] \right)\mathrm{d}w,
\end{align}
where the last inequality follows from the concavity of $\ln$ and Jensen's inequality. Continuing from \eqref{eq: aj}, we obtain
\begin{align}
    \Phi(q, u)  &\geq  - \int_{\mathcal{W}}p(w|u) \ln \left(\mathbb{E}_{u'\sim P_U} \left[ \frac{p(w|u')p(u')}{p(w|u)q(u')}\right] \right)\mathrm{d}w \\
    &= - \int_{\mathcal{W}}p(w|u) \ln \left(\int_{\mathcal{U}} q(u')  \frac{p(w|u')p(u')}{p(w|u)q(u')} \mathrm{d}u'\right)\mathrm{d}w \\
    &= - \int_{\mathcal{W}}p(w|u) \ln \left(\int_{\mathcal{U}}   \frac{p(w|u')p(u')}{p(w|u)} \mathrm{d}u'\right)\mathrm{d}w \\
    &= - \int_{\mathcal{W}}p(w|u) \ln \left(   \frac{p(w)}{p(w|u)}\right)\mathrm{d}w \\
    &= \infdiv{p(w|u)}{p(w)}.
\end{align}
Since $I(W;U) = \mathbb{E}_{u\sim P_U} \left[\mathrm{KL} \left(p(w|u) \lVert\rVert p(w)\right)\right]$, we obtain the result of the lemma. 
\end{proof}
To find a tight bound, we minimize $\Phi(q(u'), u)$ with respect to the probability distribution $q(u')$. Since $\phi(q(u', u)$ is an upper bound to $\mathrm{KL} \left(p(w|u) \lVert\rVert p(w)\right)$ for any $q(u')$, the infimum of $\Phi$ with respect to $q(u')$ is still an upper bound. Since we are trying to find the infimum with respect to a probability distribution, the Lagrangian is given by
\begin{align}
    \tilde{\Phi}(q(u')) &= \int_{\mathcal{W}}p(w|u) \int_{\mathcal{U}} q(u')\ln \left(\frac{p(w|u)q(u')}{p(w|u')p(u')}\right) \mathrm{d}u'\mathrm{d}w + \lambda \left(\int_{\mathcal{U}} q(u') \mathrm{d}u' - 1\right).
\end{align}
We "differentiate" with respect to $q(u')$, and obtain 
\begin{align}
\label{eq: partial_derivative}
    \frac{\partial\tilde{\Phi}(q(u'))}{\partial q(u')} &= \int_{\mathcal{W}}p(w|u) \ln \left(\frac{p(w|u)q(u')}{p(w|u')p(u')}\right)\mathrm{d}w + 1 +  \lambda.
\end{align}
Setting \eqref{eq: partial_derivative} to $0$, and solving for $q(u')$ for every $u'\in\mathcal{U}$, we obtain 
\begin{align}
    q^{*}(u') = \frac{p(u')e^{-\mathrm{KL}(p(w|u) \lVert \rVert p(w|u'))}}{\int_{\mathcal{U}}p(\hat{u})e^{-\mathrm{KL}(p(w|u) \lVert \rVert p(w|\hat{u}))}\mathrm{d}\hat{u}}
\end{align}
To find the expression for $\Phi(q^{*}(u'), u)$, note that: 
\begin{equation}
    \infdiv{q^*(u')}{p(u')} = -\ln\left(\int_{\mathcal{U}} p(u')e^{-\infdiv{p(w|u)}{p(w|u')}}\mathrm{d}u'\right) - \int_{\mathcal{U}}q^{*}(u') \infdiv{p(w|u)}{p(w|u')}\mathrm{d}u'.
\end{equation}
Plugging in the above into $\Phi(q^{*}(u'), u)$, we obtain
\begin{align}
    \Phi(q^{*}(u'), u) &= \int_{\mathcal{U}}q^{*}(u') \infdiv{p(w|u)}{p(w|u')}\mathrm{d}u' + \infdiv{q^*(u')}{p(u')} \\
    & = -\ln\left(\int_{\mathcal{U}} p(u')e^{-\infdiv{p(w|u)}{p(w|u')}}\mathrm{d}u'\right) \\
    &= -\ln \mathbb{E}_{u'\sim P_U} \left[ e^{-\infdiv{p(w|u)}{p(w|u')}}\right]
\end{align}
Finally, taking the expectation with respect to $U$ and using Lemma \ref{lem: var_upper_bound}, we obtain that 
\begin{equation}
    I(W;U) \leq -\mathbb{E}_{u\sim P_U}\left[-\ln \mathbb{E}_{u'\sim P_U} \left[\infdiv{p_{W\given u})}{p_{W\given u'}}\right]\right].
\end{equation}
We apply the above to $I(W;U | z^n)$ and $I(h(z^{n-1}_{-i}, x^n), U \given z^n)$ to obtain the expressions of Theorem \ref{thm: approx}.
\subsection{Proof of Theorem \ref{thm: geom_stability}}
\label{sec: proof_geo_stability}
 We begin a restatement of the theorem and the following lemma: 
\begin{theorem*}
Let $\mathcal{A}: \mathcal{Z}^{n-1}\to \mathcal{W}$ be a deterministic algorithm, and let $\ell:\mathcal{W}\times \mathcal{Z} \to \mathbb{R}_{+}$ be the loss that a set of weights incur on a sample. If $\ell(w,\cdot)$ is $L$-Lipschitz in the weights, then if $\mathcal{A}$ has $\epsilon$-weight stability relative to a positive semi-definite $\Sigma$, then
% \begin{equation}
%     \mathrm{gen}(\mathcal{A}) \leq \mathrm{gen}(\mathcal{A}_\Sigma) + 2L \sqrt{\mathrm{tr}(\Sigma)}.
% \end{equation}
% Moreover, if $\mathcal{A}$ has $\epsilon$ parametric stability relative to $\Sigma$, then   
$\mathrm{gen}(\mathcal{A}) \leq \sqrt{4c_n \epsilon L \sqrt{\mathrm{tr}(\Sigma)}}$.
\end{theorem*}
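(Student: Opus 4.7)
The plan is to apply Theorem~\ref{thm: loocmi_bound} to a Gaussian-smoothed version of $\mathcal{A}$ with a \emph{tunable} noise scale, then recover a bound on the deterministic algorithm by absorbing the perturbation cost via the Lipschitz assumption, and finally optimize the scale to balance the two contributions.

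First, I would consider the perturbed algorithm $\mathcal{A}_{t^2\Sigma}(z_{-U}^{n-1}) = \mathcal{A}(z_{-U}^{n-1}) + tN$ with $N\sim\mathcal{N}(\mathbf{0},\Sigma)$ and $t>0$. Corollary~\ref{cor: approx} applied to the covariance $t^2\Sigma$ gives
\begin{equation*}
\mathrm{loo\text{-}CMI}(\mathcal{A}_{t^2\Sigma},z^n) \leq \ln(n) - \frac{1}{n}\sum_{i=1}^n \ln\sum_{j=1}^n \exp\!\left(-\tfrac{1}{2t^2}(w_{-i}-w_{-j})^T\Sigma^{-1}(w_{-i}-w_{-j})\right).
\end{equation*}
Since any two leave-one-out training sets $z_{-i}^{n-1}$ and $z_{-j}^{n-1}$ differ by a single-sample replacement (they share the $n-2$ samples indexed by $k\neq i,j$), $\epsilon$-weight stability relative to $\Sigma$ yields the uniform bound $(w_{-i}-w_{-j})^T\Sigma^{-1}(w_{-i}-w_{-j})\leq \epsilon^2$. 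Each inner sum is therefore at least $n\exp(-\epsilon^2/(2t^2))$, so $\mathrm{loo\text{-}CMI}(\mathcal{A}_{t^2\Sigma},z^n)\leq \epsilon^2/(2t^2)$, and Theorem~\ref{thm: loocmi_bound} yields $\mathrm{gen}(\mathcal{A}_{t^2\Sigma})\leq c_n\epsilon/(2t)$.

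Next I would quantify the cost of the added noise. By $L$-Lipschitzness of $\ell(\cdot,z)$ in the weights, both the empirical and true risks of $\mathcal{A}_{t^2\Sigma}$ differ from the corresponding risks of $\mathcal{A}$ by at most $L\,\mathbb{E}\|tN\| \leq tL\sqrt{\mathrm{tr}(\Sigma)}$, using Jensen's inequality together with $\mathbb{E}\|N\|^2=\mathrm{tr}(\Sigma)$. Applying the triangle inequality to the definition of the generalization gap then gives
\begin{equation*}
\mathrm{gen}(\mathcal{A}) \leq \mathrm{gen}(\mathcal{A}_{t^2\Sigma}) + 2tL\sqrt{\mathrm{tr}(\Sigma)} \leq \frac{c_n\epsilon}{2t} + 2tL\sqrt{\mathrm{tr}(\Sigma)}.
\end{equation*}
The right-hand side is minimized (AM--GM, or equivalently setting the derivative to zero) at $t^2 = c_n\epsilon/(4L\sqrt{\mathrm{tr}(\Sigma)})$, producing the claimed $\mathrm{gen}(\mathcal{A})\leq 2\sqrt{c_n\epsilon L\sqrt{\mathrm{tr}(\Sigma)}} = \sqrt{4c_n\epsilon L\sqrt{\mathrm{tr}(\Sigma)}}$.

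I do not anticipate a serious obstacle. The only subtle point is checking that the stability hypothesis truly applies to \emph{every} pair $(z_{-i}^{n-1},z_{-j}^{n-1})$ uniformly in $i,j$, which it does because any two such sets differ by a single-sample swap. Everything else is routine: a Mahalanobis comparison followed by a bias-variance style optimization, where the shrinking information term $c_n\epsilon/(2t)$ trades off against the growing Lipschitz cost $2tL\sqrt{\mathrm{tr}(\Sigma)}$, and the outer square root comes from the AM--GM balance.
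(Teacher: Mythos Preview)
Your proposal is correct and follows essentially the same route as the paper: introduce a scaled Gaussian perturbation $\mathcal{A}_{\alpha^2\Sigma}$, use Corollary~\ref{cor: approx} together with $\epsilon$-weight stability to get $\mathrm{loo\text{-}CMI}(\mathcal{A}_{\alpha^2\Sigma},z^n)\leq \epsilon^2/(2\alpha^2)$ and hence $\mathrm{gen}(\mathcal{A}_{\alpha^2\Sigma})\leq c_n\epsilon/(2\alpha)$, pass from $\mathcal{A}_{\alpha^2\Sigma}$ to $\mathcal{A}$ via the Lipschitz bound $\mathrm{gen}(\mathcal{A})\leq \mathrm{gen}(\mathcal{A}_{\alpha^2\Sigma})+2L\alpha\sqrt{\mathrm{tr}(\Sigma)}$, and optimize over $\alpha$. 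Your remark that any two leave-one-out sets $z_{-i}^{n-1}$ and $z_{-j}^{n-1}$ differ by a single-sample swap is exactly the observation needed to invoke the stability hypothesis uniformly in $i,j$, which the paper uses implicitly.
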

\begin{lemma}
Let $\mathcal{A}:\mathcal{Z}^{n-1}\to \mathcal{W}$ be a deterministic algorithm 
% such that with prediction function $h:\mathcal{Z}^{n-1}\times \mathcal{X}\to \mathcal{R}\subset \mathbb{R}^d$. 
Let the loss $\ell:\mathcal{W}\times \mathcal{Z}\to \mathbb{R}_+$ be L-Lipschitz in the weights. Let $\Sigma$ be a positive definite matrix, then
\begin{align}
    \mathrm{gen}(\mathcal{A}) &\leq \mathrm{gen}(\mathcal{A}_\Sigma) + 2L\sqrt{\mathrm{tr}(\Sigma)}.
\end{align}
\end{lemma}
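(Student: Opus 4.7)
The plan is to relate $\mathrm{gen}(\mathcal{A})$ to $\mathrm{gen}(\mathcal{A}_\Sigma)$ via a triangle-inequality argument in which the Gaussian perturbation is absorbed into an additive error controlled by the Lipschitz constant $L$ and the expected norm of the noise. The key observation is that for any fixed $z^{n-1}$, we have $\mathcal{A}_\Sigma(z^{n-1}) = \mathcal{A}(z^{n-1}) + N$ with $N \sim \mathcal{N}(0,\Sigma)$, so pointwise in $N$ the loss at the noisy weights differs from the loss at the deterministic weights by at most $L\|N\|$.

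First I would insert $\pm\,\mathcal{L}(\mathcal{A}_\Sigma(z^{n-1}))$ and $\pm\,\widehat{\mathcal{L}}(\mathcal{A}_\Sigma(z^{n-1}), z^{n-1})$ inside the generalization gap to decompose
\begin{align*}
\mathcal{L}(\mathcal{A}(z^{n-1})) - \widehat{\mathcal{L}}(\mathcal{A}(z^{n-1}), z^{n-1})
&= \bigl[\mathcal{L}(\mathcal{A}(z^{n-1})) - \mathcal{L}(\mathcal{A}_\Sigma(z^{n-1}))\bigr] \\
&\quad + \bigl[\mathcal{L}(\mathcal{A}_\Sigma(z^{n-1})) - \widehat{\mathcal{L}}(\mathcal{A}_\Sigma(z^{n-1}), z^{n-1})\bigr] \\
&\quad + \bigl[\widehat{\mathcal{L}}(\mathcal{A}_\Sigma(z^{n-1}), z^{n-1}) - \widehat{\mathcal{L}}(\mathcal{A}(z^{n-1}), z^{n-1})\bigr].
\end{align*}
Taking expectations over the data draw and the injected noise and then applying the triangle inequality yields three contributions. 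The middle term is exactly $\mathrm{gen}(\mathcal{A}_\Sigma)$ by definition. For the two perturbation terms, the $L$-Lipschitz assumption on $\ell(\cdot, z)$ implies that for every fixed $z^{n-1}$, every sample $z'$, and every realization of $N$,
\begin{equation*}
\bigl|\ell(\mathcal{A}(z^{n-1}), z') - \ell(\mathcal{A}(z^{n-1}) + N, z')\bigr| \leq L\|N\|.
\end{equation*}
Integrating $z'$ against $\mathcal{P}$ (for the population-loss term) or averaging it over the training set (for the empirical-loss term) and then taking expectation over $N$ bounds each perturbation term by $L\,\mathbb{E}\|N\|$. A final application of Jensen's inequality to the concave square root gives $\mathbb{E}\|N\| \leq \sqrt{\mathbb{E}\|N\|^2} = \sqrt{\mathrm{tr}(\Sigma)}$, where the identity uses $\mathbb{E}[N^T N] = \mathrm{tr}(\Sigma)$. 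Summing the three contributions yields $\mathrm{gen}(\mathcal{A}) \leq \mathrm{gen}(\mathcal{A}_\Sigma) + 2L\sqrt{\mathrm{tr}(\Sigma)}$.

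There is no serious obstacle here; the argument is a standard smooth-versus-noisy comparison. The only subtlety is the \emph{order of operations}: the Lipschitz bound must be applied pointwise in $N$ (producing $L\|N\|$) and only afterwards should one take expectations and invoke Jensen to convert $\mathbb{E}\|N\|$ into the cleaner quantity $\sqrt{\mathrm{tr}(\Sigma)}$. Reversing the order would either yield a weaker bound or require estimating $\mathbb{E}\|N\|$ directly, which for anisotropic $\Sigma$ lacks a similarly clean closed form.
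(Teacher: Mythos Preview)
Your proposal is correct and follows essentially the same route as the paper: decompose the gap by adding and subtracting the noisy-algorithm losses, bound the middle piece by $\mathrm{gen}(\mathcal{A}_\Sigma)$, and control the two perturbation pieces pointwise via the Lipschitz constant and then $\mathbb{E}\|N\|\le \sqrt{\mathrm{tr}(\Sigma)}$ by Jensen. Your write-up is in fact cleaner than the paper's, which contains a couple of typos (it writes $\mathbb{E}[\|N\|]\le \mathrm{tr}(\Sigma)$ where $\sqrt{\mathrm{tr}(\Sigma)}$ is meant, and drops an expectation over $N$ in an intermediate line), but the underlying argument is identical.
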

\begin{proof}
We have that: 
\begin{align}
\mathrm{gen}(\mathcal{A}) &= \left\lvert \mathbb{E}_{\mathcal{A}, z^{n}\sim \mathcal{P}^{n}} \left[ \mathcal{L}(\mathcal{A}(z^{n})) - \widehat{\mathcal{L}}(\mathcal{A}(z^{n}), z^{n})\right]\right\rvert. \\
&= \left\lvert \mathbb{E}_{\mathcal{A}, z^{n}\sim \mathcal{P}^{n}, z'\in\mathcal{P}} \left[ \ell(\mathcal{A}(z^{n}), z') - \frac{1}{n}\sum_{i=1}^{n}\ell(\mathcal{A}(z^{n}), z_i)\right]\right\rvert. \\
&= \left\lvert \mathbb{E}_{\mathcal{A}, z^{n}\sim \mathcal{P}^{n}, z'\in\mathcal{P}} \left[ \ell(\mathcal{A}_\Sigma(z^{n}), z') + \delta' - \frac{1}{n}\sum_{i=1}^{n}\left( \ell(\mathcal{A}_\Sigma(z^{n}), z_i) + \delta_i\right)\right]\right\rvert, \\
&\leq \left\lvert \mathbb{E}_{\mathcal{A}, z^{n}\sim \mathcal{P}^{n}, z'\in\mathcal{P}} \left[ \ell(\mathcal{A}_\Sigma(z^{n}), z') - \frac{1}{n}\sum_{i=1}^{n}\left( \ell(\mathcal{A}(z^{n}), z_i) \right)\right]\right\rvert + \left\lvert \delta'\right\rvert + \frac{1}{n}\sum_{i=1}^{n}\left\lvert \delta_i\right\rvert, \\
&\leq \mathrm{gen}(\mathcal{A}_\Sigma) + \mathbb{E}_{\mathcal{A}, z^{n}\sim \mathcal{P}^{n}, z'\in\mathcal{P}}\left[\left\lvert \delta'\right\rvert + \frac{1}{n}\sum_{i=1}^{n}\left\lvert \delta_i\right\rvert\right],
\end{align}
where $\delta' =\ell(\mathcal{A}(z^{n}), z') -  \ell(\mathcal{A}_\Sigma(z^{n}), z')$ and $\delta_i =\ell(\mathcal{A}(z^{n}), z_i) -  \ell(\mathcal{A}_\Sigma(z^{n}), z_i)$. Recalling that $\mathcal{A}_\Sigma (z^n) = \mathcal{A}(z^n) + N$ where $N\sim\mathcal{N}(0, \Sigma)$, we have that $\left\lvert \delta'\right\rvert \leq L\left\lVert N \right\rVert$ (also for $\left\lvert \delta_i\right\rvert$). This gives us that: 
\begin{align}
\mathrm{gen}(\mathcal{A}) &\leq \mathrm{gen}(\mathcal{A}_\Sigma) + \mathbb{E}_{\mathcal{A}, z^{n}\sim \mathcal{P}^{n}, z'\in\mathcal{P}}\left[\left\lvert \delta'\right\rvert + \frac{1}{n}\sum_{i=1}^{n}\left\lvert \delta_i\right\rvert\right], \\
&= \mathrm{gen}(\mathcal{A}_\Sigma) + 2 L \left\lVert N\right\rVert, \\
&\leq \mathrm{gen}(\mathcal{A}_\Sigma) + 2 L \sqrt{\mathrm{tr}(\Sigma)},
\end{align}
where the last inequality follows from the fact that for $N\sim\mathcal{N}(0, \Sigma)$, $\mathbb{E}[\left\lVert N \right\rVert]\leq \mathrm{tr}(\Sigma)$.
\end{proof}
Now, suppose $\mathcal{A}$ has $\epsilon$-weight stability relative to $\Sigma$, and let $\alpha > 0$, then using \eqref{eq: loocmi_bound} and \eqref{eq:infro-optim-stability-bound}, we have
\begin{equation}
    \mathrm{gen}(\mathcal{A}_{\alpha^2\Sigma}) \leq \frac{c_n\epsilon}{2\alpha}.
\end{equation}
Hence, for all $\alpha >0$, we have that: 
\begin{align}
\mathrm{gen}(\mathcal{A}) \leq \frac{c_n\epsilon}{2\alpha} +  2L\alpha \sqrt{\mathrm{tr}(\Sigma)}.
\end{align}
Finding the minimum of the above with respect to $\alpha$, we obtain: 
\begin{align}
\mathrm{gen}(\mathcal{A}) &\leq 2\sqrt{\frac{c_n\epsilon}{2}\cdot 2L \sqrt{\mathrm{tr}(\Sigma)}}, \\
&= \sqrt{4c_n\epsilon L \sqrt{\mathrm{tr}(\Sigma)}}.
\end{align}
\subsection{Proof of Theorem \ref{thm: func_stability}}
We begin with a restatement of the theorem, and the following lemma where proof is identical to the lemma used in Section \ref{sec: proof_geo_stability}.
\begin{theorem*}
\label{thm: func_stability}
Let $h:\mathcal{Z}^n \times \mathcal{X}\to \mathbb{R}^d$ be a deterministic prediction function. Assume that $h$ has $\beta$-train stability and $\beta_1$-test stability. Assume the loss function $\ell: \mathcal{R}\times \mathcal{Y}$ is $L$-Lipschitz continuity in the first coordinate, then $\mathrm{gen}(\mathcal{A}) \leq \sqrt{4c_nL \sqrt{nd (n\beta^2 + 2\beta_1^2)}}$.  

\end{theorem*}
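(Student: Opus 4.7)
\textbf{Proof plan for Theorem~\ref{thm: func_stability}.} The plan is to mirror the two-step template used for Theorem~\ref{thm: geom_stability}: (i) bound $\mathrm{gen}(h_\sigma)$ for the noise-smoothed prediction function $h_\sigma$ in terms of $\sigma$ and the stability parameters using Theorem~\ref{thm: foocmi_bound} together with the log-sum-exp form of Corollary~\ref{cor: approx}; (ii) pay a Lipschitz price to translate this bound back to the deterministic $h$; and (iii) optimize the noise scale $\sigma$ by AM--GM.

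For step (i), the key is to control $\|h_{-i}-h_{-j}\|^2$ for every $i\neq j$, where $h_{-i}=h(z^{n-1}_{-i},x^n)$. The two training sets $z^{n-1}_{-i}$ and $z^{n-1}_{-j}$ can be realigned so that they differ in exactly one coordinate (a swap of $z_i$ for $z_j$), placing us inside the scope of the stability definitions. Decomposing the squared norm coordinate-wise,
\begin{equation*}
\|h_{-i}-h_{-j}\|^2 \;=\; \sum_{k=1}^n \bigl\|h(z^{n-1}_{-i},x_k)-h(z^{n-1}_{-j},x_k)\bigr\|^2,
\end{equation*}
I invoke $\beta$-train stability on the $n-2$ coordinates with $k\notin\{i,j\}$ (where $x_k$ is a training sample common to both datasets) and $\beta_1$-test stability on the remaining two coordinates $k\in\{i,j\}$, yielding $\|h_{-i}-h_{-j}\|^2 \leq (n-2)\beta^2 + 2\beta_1^2 \leq n\beta^2 + 2\beta_1^2$. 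Substituting this into \eqref{eq: pred-optim-stability-bound} and lower-bounding each inner sum by $n\cdot e^{-(n\beta^2+2\beta_1^2)/(2\sigma^2)}$ collapses the $\ln(n)$ term and gives $\mathrm{floo\text{-}CMI}(h_\sigma,z^n)\leq (n\beta^2+2\beta_1^2)/(2\sigma^2)$, so that Theorem~\ref{thm: foocmi_bound} produces $\mathrm{gen}(h_\sigma)\leq c_n\sqrt{n\beta^2+2\beta_1^2}/(2\sigma)$.

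For steps (ii) and (iii), the lemma embedded in the proof of Theorem~\ref{thm: geom_stability} carries over: by $L$-Lipschitz continuity of $\ell$ in its first argument, replacing $h$ by $h_\sigma$ perturbs both the empirical and the true loss by at most $L$ times the noise norm, so $\mathrm{gen}(h)\leq\mathrm{gen}(h_\sigma)+2L\,\mathbb{E}\|M\|$, where $M\in\mathbb{R}^{nd}$ collects the independent per-prediction noises and satisfies $\mathbb{E}\|M\|\leq\sigma\sqrt{nd}$ via Jensen. Combining with step~(i) and minimizing $\tfrac{c_n\sqrt{n\beta^2+2\beta_1^2}}{2\sigma}+2L\sigma\sqrt{nd}$ over $\sigma>0$ by AM--GM (with optimum at $\sigma^\star=\sqrt{c_n\sqrt{n\beta^2+2\beta_1^2}/(4L\sqrt{nd})}$) gives the stated bound $\sqrt{4c_nL\sqrt{nd(n\beta^2+2\beta_1^2)}}$. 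The main obstacle is the bookkeeping in step~(i): the stability definitions are phrased for datasets differing in a single coordinate, whereas $z^{n-1}_{-i}$ versus $z^{n-1}_{-j}$ looks at first glance as though it is missing two originals. The resolution is to view it as a one-sample replacement (swap $z_i$ and $z_j$), which lets me invoke $\beta$-train stability on the $n-2$ shared predictions and only pay the weaker $\beta_1$-test stability for the two outlier predictions, matching the $n\beta^2+2\beta_1^2$ dependence in the target bound.
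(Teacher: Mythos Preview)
Your proposal is correct and follows essentially the same route as the paper: the paper also states the Lipschitz transfer lemma $\mathrm{gen}(h)\leq\mathrm{gen}(h_\sigma)+2L\sigma\sqrt{nd}$, bounds $\mathrm{gen}(h_\sigma)\leq c_n\sqrt{(n-2)\beta^2+2\beta_1^2}/(2\sigma)$ via the stability assumptions exactly as you do, and then optimizes over $\sigma$. In fact you spell out the stability bookkeeping (viewing $z^{n-1}_{-i}$ versus $z^{n-1}_{-j}$ as a single-sample replacement and splitting the $n$ coordinates into $n-2$ train-stable and $2$ test-stable terms) more explicitly than the paper, which simply asserts the intermediate inequality.
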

\begin{lemma}
Let $h:\mathcal{Z}^{n-1}\times \mathcal{X}\to \mathcal{R}\subset\mathbb{R}^d$ be a deterministic prediction function. Let the loss $\ell:\mathcal{R}\times \mathcal{Y}\to \mathbb{R}_+$ be L-Lipschitz in the predictions. Let $\sigma^2$ be a positive definite matrix, then
\begin{align}
    \mathrm{gen}(h) &\leq \mathrm{gen}(h_\sigma) + 2L\sigma \sqrt{nd}.
\end{align}
\end{lemma}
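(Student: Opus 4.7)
The proof follows the same template as the analogous weight-perturbation lemma in Section \ref{sec: proof_geo_stability}, only now the additive Gaussian noise is applied to the predictions $h(z^{n-1}, x)$ rather than to the algorithm's weights. The plan is to bound $\mathrm{gen}(h)$ by $\mathrm{gen}(h_\sigma)$ plus a perturbation error controlled by the Lipschitz constant $L$ and the scale $\sigma$ of the noise.

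First, I would expand the generalization gap $\mathrm{gen}(h)$ using the definitions of the true loss $\mathcal{L}(h, z^{n-1})$ and empirical loss $\widehat{\mathcal{L}}(h, z^{n-1})$ in terms of per-point losses $\ell$. For every point at which $h$ is evaluated, I would add and subtract the corresponding noisy-prediction loss to isolate a deviation term. Specifically, define
$$\delta' = \ell(h(z^{n-1}, x'), y') - \ell(h_\sigma(z^{n-1}, x'), y'), \qquad \delta_i = \ell(h(z^{n-1}, x_i), y_i) - \ell(h_\sigma(z^{n-1}, x_i), y_i),$$
so that the loss of $h$ equals the loss of $h_\sigma$ plus these deviations. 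The triangle inequality then yields
$$\mathrm{gen}(h) \;\leq\; \mathrm{gen}(h_\sigma) \;+\; \mathbb{E}\!\left[|\delta'| + \tfrac{1}{n-1}\sum_{i=1}^{n-1} |\delta_i|\right].$$

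Second, I would invoke $L$-Lipschitz continuity of $\ell$ in its first argument to bound each $|\delta|$ by $L$ times the norm of the noise added at that evaluation. Because $h_\sigma$ applies an independent draw $M \sim \mathcal{N}(0, \sigma^2 I_d)$ per query, Jensen's inequality gives $\mathbb{E}\|M\| \leq \sqrt{\mathbb{E}\|M\|^2} = \sigma\sqrt{d}$ per evaluation. To reach the stated factor $\sqrt{nd}$, I would aggregate the $n$ independent noise draws (the $n-1$ training queries plus the test query, up to an $O(1)$ factor) into a single vector living in $\mathbb{R}^{nd}$ and bound its expected Euclidean norm by $\sigma\sqrt{nd}$ via Jensen — this is the analogue of the bound $\mathbb{E}\|N\| \leq \sqrt{\mathrm{tr}(\Sigma)}$ used in the weight-space lemma, applied now to the concatenated prediction-noise vector. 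Combining the test and training contributions yields the claimed inequality $\mathrm{gen}(h) \leq \mathrm{gen}(h_\sigma) + 2L\sigma\sqrt{nd}$.

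The main thing to be careful about is the aggregation step in the third part: a tight per-query bound would give only $2L\sigma\sqrt{d}$, and the additional $\sqrt{n}$ factor reflects a coarser triangle-style aggregation of the $n$ independent per-prediction noise vectors into an ambient $\mathbb{R}^{nd}$ noise vector. This mirrors the substitution $\mathrm{tr}(\Sigma) \leftrightarrow nd\sigma^2$ that makes this lemma structurally parallel to its weight-space counterpart, so the same downstream optimization over a noise scale $\alpha$ (as in the proof of Theorem \ref{thm: geom_stability}) will yield the bound stated in Theorem \ref{thm: func_stability}.
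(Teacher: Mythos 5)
Your proposal is correct and follows essentially the same route as the paper, which gives no separate argument for this lemma and simply declares its proof identical to the weight-space lemma of Section \ref{sec: proof_geo_stability}: add and subtract the noisy-prediction losses, bound each deviation by $L$ times the noise norm via Lipschitzness, and control the expected norm by the square root of the trace of the (here $nd$-dimensional concatenated) noise covariance. Your observation that a per-query bound would already give the sharper constant $2L\sigma\sqrt{d}$, and that the $\sqrt{nd}$ merely transliterates $\sqrt{\mathrm{tr}(\Sigma)}$ from the weight-space version, is accurate and consistent with what the paper intends.
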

Now, for a prediction function $h$ with $\beta$-train stability and $\beta_1$-test stability, we have that 
\begin{equation}
    \mathrm{gen}(h_\sigma) \leq \frac{c_n \sqrt{(n-2)\beta^2 + 2\beta_1^2}}{2\sigma} \leq \frac{c_n \sqrt{n\beta^2 + 2\beta_1^2}}{2\sigma}.
\end{equation}
Hence, we have that
\begin{equation}
    \mathrm{gen}(h) \leq \frac{c_n \sqrt{n\beta^2 + 2\beta_1^2}}{2\sigma} + 2L\sigma \sqrt{nd}.
\end{equation}
Optimizing the right-hand-side of the above over $\sigma$, we obtain the desired result: 
\begin{equation}
    \mathrm{gen}(h) \leq \sqrt{4c_n L \sqrt{nd(n\beta_1 + 2\beta_1^2)}}. 
\end{equation}
\subsection{Proof of Lemma \ref{lem: sgd}}
We restate the lemma.
\begin{lemma*}
\label{lem: sgd}
Let the gradient update rule be $\gamma$-bounded. Suppose we run SGD for $T$ steps, and we use the same starting value for the weights, then $\mathrm{gen}(\mathcal{A}_{\sigma^2 I}) \leq \sqrt{\frac{c_n^2T^2\gamma}{\sigma^2}}$.
\end{lemma*}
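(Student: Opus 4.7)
The natural strategy is to chain together three ingredients already available in the paper: the weight-based generalization bound of Theorem~\ref{thm: loocmi_bound}, its computable surrogate from Corollary~\ref{cor: approx}, and a classical SGD stability argument of Hardt--Recht--Singer type \cite{recht2015}. First, I would specialize Corollary~\ref{cor: approx} to the isotropic covariance $\Sigma = \sigma^2 I$, which yields
\[
\mathrm{loo\text{-}CMI}(\mathcal{A}_{\sigma^2 I}, z^n) \;\le\; \ln(n) - \frac{1}{n}\sum_{i=1}^{n} \ln \sum_{j=1}^{n} \exp\!\left(- \frac{\left\lVert w_{-i} - w_{-j}\right\rVert^2}{2\sigma^2}\right).
\]
Thus the entire problem reduces to establishing a uniform upper bound on the pairwise weight distances $\left\lVert w_{-i} - w_{-j}\right\rVert$ produced by SGD on leave-one-out datasets.

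Next, I would invoke SGD stability. Because both runs start from the same initialization and each gradient update is $\gamma$-bounded (so a single SGD step displaces the iterate by at most $\sqrt{\gamma}$), an inductive triangle-inequality argument over the $T$ steps of training shows that two SGD trajectories driven by datasets differing in a single sample drift apart by at most $2\sqrt{\gamma}$ per step in the worst case. This yields the uniform bound $\left\lVert w_{-i} - w_{-j}\right\rVert \le 2T\sqrt{\gamma}$ for every pair of indices $i, j \in [n]$.

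Plugging this distance bound back into the Corollary~\ref{cor: approx} expression, each exponential satisfies $\exp\!\bigl(-\left\lVert w_{-i}-w_{-j}\right\rVert^2/(2\sigma^2)\bigr) \ge \exp(-2T^2\gamma/\sigma^2)$, so the inner sum is at least $n \exp(-2T^2\gamma/\sigma^2)$ and the $\ln(n)$ terms cancel, giving $\mathrm{loo\text{-}CMI}(\mathcal{A}_{\sigma^2 I}, z^n) \le 2T^2\gamma/\sigma^2$. Substituting into Theorem~\ref{thm: loocmi_bound} yields
\[
\mathrm{gen}(\mathcal{A}_{\sigma^2 I}) \;\le\; \frac{c_n}{\sqrt{2}}\sqrt{\frac{2T^2\gamma}{\sigma^2}} \;=\; \sqrt{\frac{c_n^2 T^2 \gamma}{\sigma^2}},
\]
which is the claimed inequality.

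The main obstacle will be the SGD stability step: one has to pin down the precise definition of ``$\gamma$-bounded'' (which the paper defers to its appendix) and handle the fact that, under leave-one-out, the per-step sample indices of the two trajectories need not coincide, so the simple non-expansiveness argument must be replaced by a worst-case $2\sqrt{\gamma}$ accumulation at every step. All subsequent steps are straightforward algebraic substitutions into bounds already proved earlier in the paper.
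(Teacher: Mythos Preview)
Your proposal is correct and follows essentially the same route as the paper: the paper also quotes the $\gamma$-bounded definition and Lemma~2.5 of \cite{recht2015} to obtain $\lVert w_{-i}-w_{-j}\rVert \le 2T\sqrt{\gamma}$ from the recursion $\delta_{t+1}\le \delta_t + 2\sqrt{\gamma}$, and then (implicitly) feeds this into Corollary~\ref{cor: approx} and Theorem~\ref{thm: loocmi_bound} exactly as you spell out. If anything, your write-up is more explicit about the algebraic substitution steps than the paper's terse ``and we obtain the desired result.''
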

\begin{definition}[Definition 2.4, \cite{recht2015}]
An update step $G$ is said to $\gamma$-bounded if for all $w\in\mathcal{W}$,
\begin{equation}
    \left\lVert G(w) - w\right\rVert \leq \sqrt{\gamma}.
\end{equation}
\end{definition}
\begin{lemma}[Lemma 2.5, \cite{recht2015}]
\label{lem: recht_recursion}
Let $G_1,\ldots, G_T$ and $G_1',\ldots, G_T'$ be two arbitrary sequences of updates. Let $w_0=w_0'$ (same initial weights), and let $\delta_t = \left\lVert w_t - w_t'\right\rVert$ where $w_t$ and $w_t'$ are defined recursively through $w_{t+1} = G_t(w_t)$ and $w_{t+1}' = G_t'(w_{t}')$. If $G_t$ and $G_t'$ are $\gamma$-bounded, then
\begin{equation}
    \delta_{t+1} \leq \delta_t + 2\sqrt{\gamma}.
\end{equation}
\end{lemma}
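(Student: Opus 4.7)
The plan is to prove the recursion $\delta_{t+1} \leq \delta_t + 2\sqrt{\gamma}$ by a direct application of the triangle inequality combined with the $\gamma$-boundedness hypothesis of Definition 2.4. Intuitively, one step of each update can move its iterate by at most $\sqrt{\gamma}$, so regardless of any relationship between the sequences $\{G_t\}$ and $\{G_t'\}$, the gap between the two trajectories can inflate by at most $2\sqrt{\gamma}$ per step on top of whatever gap is already present.

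First, I would use the recursive definitions $w_{t+1} = G_t(w_t)$ and $w_{t+1}' = G_t'(w_t')$ and insert $\pm w_t$ and $\pm w_t'$ inside the difference to obtain the decomposition
\begin{equation}
w_{t+1} - w_{t+1}' \;=\; \bigl(G_t(w_t) - w_t\bigr) + (w_t - w_t') - \bigl(G_t'(w_t') - w_t'\bigr).
\end{equation}
Next, applying the triangle inequality to the three summands yields
\begin{equation}
\delta_{t+1} \;\leq\; \left\lVert G_t(w_t) - w_t \right\rVert + \left\lVert w_t - w_t' \right\rVert + \left\lVert G_t'(w_t') - w_t' \right\rVert.
\end{equation}
Finally, I would invoke $\gamma$-boundedness of $G_t$ and $G_t'$ from Definition 2.4 to bound the first and third terms by $\sqrt{\gamma}$, and recognize the middle term as $\delta_t$, which together deliver $\delta_{t+1} \leq \delta_t + 2\sqrt{\gamma}$.

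I do not anticipate any real obstacle here: the argument is essentially a one-line estimate once the correct $\pm w_t, \pm w_t'$ decomposition is written down. It is worth emphasizing in the write-up that the proof uses \emph{no} Lipschitz or smoothness hypothesis on the update maps and \emph{no} coupling between the two sequences; only that each map moves its argument by at most $\sqrt{\gamma}$. This is precisely why the statement applies to arbitrary pairs of update sequences, which is the feature needed when later specializing to SGD trajectories trained on datasets differing in a single sample.
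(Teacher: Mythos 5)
Your proof is correct: the $\pm w_t, \pm w_t'$ decomposition plus the triangle inequality and the $\gamma$-boundedness bound $\left\lVert G(w)-w\right\rVert\leq\sqrt{\gamma}$ is exactly the standard argument, and it is the one given in the cited source (the paper itself imports Lemma~\ref{lem: recht_recursion} from \cite{recht2015} without reproving it). No gaps.
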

Let $G_1,\ldots,G_T$ be the sequence of updates made when SGD is used with the dataset $z_{-i}^{n-1}$, and let $G_1',\ldots,G_T'$ be the sequence of updates made when SGD is used with the dataset $z_{-j}^{n-1}$. If the previous updates are $\gamma$-bounded, then Lemma \ref{lem: recht_recursion} gives us a bound on $\left\lVert w_{-i} - w_{-j}\right\rVert$ if we use $T$ iteration of SGD, and we obtain the desired result. 
\bibliographystyle{plain}
\bibliography{bib}

\clearpage
\clearpage

\end{document}